\documentclass{article} % For LaTeX2e

%%%%%%%%%%--------for arXiv------------%%%%%%%%%%%%%%%%%%%%%%%%%%
\usepackage[colorlinks = true]{hyperref}
\usepackage{natbib}

\usepackage{apalike}

\usepackage{geometry}
\geometry{verbose,tmargin=1in,bmargin=1in,lmargin=1in,rmargin=1in}
\hypersetup{
	colorlinks,linkcolor=red,anchorcolor=blue,citecolor=blue}
\setlength{\parindent}{0pt}
\usepackage{multirow}

\usepackage{authblk}

\makeatletter
\newcommand{\printfnsymbol}[1]{%
	\textsuperscript{\@fnsymbol{#1}}%
}
\makeatother
%%%%%%%%%%%%%%%%%%%%%%%%%%%%%%%%%%%%%%%%%%%%%%%%%%%%%%%%%%%%%%%%

\usepackage{latexsym,amsmath,amssymb,amsfonts,amstext,amsthm,mathrsfs,bm}
\usepackage{epsfig}
\usepackage{caption}
\usepackage{dsfont}
\usepackage[shortlabels]{enumitem}
\usepackage{thm-restate}
\usepackage{color,comment} 
\usepackage{bbm}  
\usepackage{mathtools}
\usepackage{caption}
\usepackage{subcaption}
\usepackage{pifont}
\usepackage{threeparttable}
\usepackage{booktabs, caption, makecell}

\newcommand{\Fc}{\mathcal{F}}

\newcommand{\prob}{\mathbb{P}}

\newcommand{\Ac}{\mathcal{A}}

\newcommand{\Tc}{\mathcal{T}}

\newcommand{\Oc}{\mathcal{O}}

\newcommand{\Sc}{\mathcal{S}}

\DeclareMathOperator*{\argmax}{argmax}

\newcommand{\eptt}{\mathbb{E}}

\newcommand{\diag}{\mathrm{diag}}

\definecolor{dollarbill}{rgb}{0.52, 0.73, 0.4}

\usepackage{algorithm,algcompatible}
\algnewcommand\INPUT{\item[\textbf{Input:}]}
\algnewcommand\OUTPUT{\item[\textbf{Output:}]}

\usepackage{hyperref}
\usepackage{url}
\usepackage{cleveref}
\newtheorem{theorem}{Theorem}
\crefname{theorem}{theorem}{theorems}
\newtheorem{lemma}{Lemma}
\crefname{lemma}{lemma}{lemmas}
\newtheorem{assumption}{Assumption}
\crefname{assumption}{assumption}{assumptions}

\crefname{remark}{remark}{remarks}
\newtheorem{definition}{Definition}
\crefname{definition}{definition}{definitions}
\newtheorem{proposition}{Proposition}
\crefname{definition}{definition}{definitions}

\crefname{corollary}{corollary}{corollaries}

\allowdisplaybreaks[4]

\title{PER-ETD: A Polynomially Efficient Emphatic Temporal Difference Learning Method\thanks{The paper has been published in Proc. International Conference on Learning and Representations (ICLR), 2022}}

% Authors must not appear in the submitted version. They should be hidden
% as long as the \iclrfinalcopy macro remains commented out below.
% Non-anonymous submissions will be rejected without review.

\author{Ziwei Guan, Tengyu Xu \& Yingbin Liang  \\
Department of Electrical and Computer Engineering\\
Ohio State University\\
Columbus, OH 43210, USA \\
\texttt{\{guan.283, xu.3260\}@buckeyemail.osu.edu \& liang.889@osu.edu} \\
}

% The \author macro works with any number of authors. There are two commands
% used to separate the names and addresses of multiple authors: \And and \AND.
%
% Using \And between authors leaves it to \LaTeX{} to determine where to break
% the lines. Using \AND forces a linebreak at that point. So, if \LaTeX{}
% puts 3 of 4 authors names on the first line, and the last on the second
% line, try using \AND instead of \And before the third author name.

\begin{document}

\maketitle

\begin{abstract}
Emphatic temporal difference (ETD) learning \citep{sutton2016emphatic} is a successful method to conduct the off-policy value function evaluation with function approximation. Although ETD has been shown to converge asymptotically to a desirable value function, it is well-known that ETD often encounters a large variance so that its sample complexity can increase exponentially fast with the number of iterations. In this work, we propose a new ETD method, called PER-ETD (i.e., {\bf PE}riodically {\bf R}estarted-ETD), which restarts and updates the follow-on trace only for a finite period for each iteration of the evaluation parameter. Further, PER-ETD features a design of the logarithmical increase of the restart period with the number of iterations, which guarantees the best trade-off between the variance and bias and keeps both vanishing sublinearly. We show that PER-ETD converges to the same desirable fixed point as ETD, but improves the exponential sample complexity of ETD to be polynomials. Our experiments validate the superior performance of PER-ETD and its advantage over ETD. 
\end{abstract}

\section{Introduction}  
As a major value function evaluation method, temporal difference (TD) learning \citep{sutton1988learning,dayan1992convergence} has been widely used in various planning problems in reinforcement learning. Although TD learning performs successfully in the on-policy settings, where an agent can interact with environments under the target policy, it can perform poorly or even diverge under the off-policy settings when the agent only has access to data sampled by a behavior policy 
 \citep{baird1995residual,tsitsiklis1997analysis,mahmood2015emphatic}. To address such an issue, the gradient temporal-difference (GTD) \citep{sutton2008convergent} and least-squares temporal difference (LSTD)~\citep{yu2010convergence} algorithms have been proposed, which have been shown to converge in the off-policy settings. However, since GTD and LSTD consider an objective function based on the behavior policy, {\color{black} which adjusts only the distribution mismatch of the action and does not adjust the distribution mismatch of the state,}
their converging points can be largely biased from the true value function due to the distribution mismatch between the target and behavior policies, even when the express power of the function approximation class is arbitrarily large \citep{kolter2011fixed}. 

In order to provide a more accurate evaluation, \cite{sutton2016emphatic} proposed the emphatic temporal difference (ETD) algorithm, which introduces the {\em follow-on trace} to address the distribution mismatch issue {\color{black} and thus adjusts both state and action distribution mismatch}. The stability of ETD was then shown in \cite{sutton2016emphatic,mahmood2015emphatic}, and the asymptotic convergence guarantee for ETD was established in \cite{yu2015convergence}, 
it has also achieved great success in many tasks \citep{ghiassian2017first,ni_2021}. However, although ETD can address the distribution mismatch issue to yield a more accurate evaluation, it often suffers from very large variance error due to the follow-on trace estimation over a long or infinite time horizon \citep{hallak2016generalized}. Consequently, the convergence of ETD can be unstable. 
It can be shown that the variance of ETD can grow exponentially fast as the number of iterations grow so that ETD requires {\bf exponentially} large number of samples to converge.
\cite{hallak2016generalized} proposed an ETD method to keep the follow-on trace bounded but at the cost of a possibly large bias error. 
This thus poses the following intriguing question: 

{\em  Can we design a new ETD method, which overcomes its large variance without introducing a large bias error, and improves its exponential sample complexity to be polynomial at the same time?}

In this work, we provide an affirmative answer.
 
\subsection{Main Contributions}

We propose a novel ETD approach, called PER-ETD (i.e., {\bf PE}riodically {\bf R}estarted-ETD), in which for each update of the value function parameter we restart the follow-on trace iteration and update it only for $b$ times (where we call $b$ as the {\bf period length}). 
Such a periodic restart effectively reduces the variance of the follow-on trace. More importantly, with the design of the period length $b$ to increase logarithmically with the number of iterations, PER-ETD attains the {\em polynomial} rather than {\em exponential} sample complexity required by ETD.

We provide the theoretical guarantee of the sample efficiency of PER-ETD via the finite-time analysis. We show that PER-ETD (both PER-ETD(0) and PER-ETD($\lambda$)) converges to the same fixed points of ETD(0) and ETD($\lambda$), respectively, but with only {\em polynomial} sample complexity (whereas ETD takes exponential sample complexity). Our analysis features the following key insights. (a) The period length $b$ plays the role of trading off between the variance (of the follow-on trace) and bias error (with respect to the fixed point of ETD), and its optimal choice of logarithmical increase with the number of iterations achieves the best tradeoff and keeps both errors vanishing sublinearly. (b) Our analysis captures how the mismatch between the behavior and target policies affects the convergence rate of PER-ETD. Interestingly, the mismatch level determines a phase-transition phenomenon of PER-ETD: as long as the mismatch is below a certain threshold, then PER-ETD achieves the same convergence rate as the on-policy TD algorithm; and if the mismatch is above the threshold, the converge rate of PER-ETD gradually decays as the level of mismatch increases. 

Experimentally, we demonstrate that PER-ETD converges in the case that neither TD nor ETD converges. Further, our experiments provide the following two interesting observations. (a) There does exist a choice of the period length for PER-ETD, which attains the best tradeoff between the variance and bias errors. Below such a choice, the bias error is large so that evaluation is not accurate, and above it the variance error is large so that the convergence is unstable. (b) Under a small period length $b$, it is not always the case that PER-ETD($\lambda$) with $\lambda=1$ attains the smallest error with respect to the ground truth value function. The best $\lambda$ depends on the geometry of the locations of fixed points of PER-ETD($\lambda$) for $0 \leq \lambda\leq 1$, which is determined by chosen features. 

\subsection{Related Works}
\textbf{TD learning and GTD:} The asymptotic convergence of TD learning was established by \cite{sutton1988learning,jaakkola1994convergence,dayan1994td,tsitsiklis1997analysis}, and its non-asymptotic convergence rate was further characterized recently in {\color{black}\cite{dalal2018finite,bhandari2018finite,kotsalis2020simple,chen2019performance,kaledin2020finite,hu2019characterizing,srikant2019finite}}. The gradient temporal-difference (GTD) was proposed in \cite{sutton2008convergent} for off-policy evaluation and was shown to converge asymptotically. Then, \cite{dalal2018finite2,gupta2019finite,wang2018finite,xu2019two,xu2021sample} provided the finite-time analysis of GTD and its variants.

\textbf{Emphatic Temporal Difference (ETD) Learning:} The ETD approach was originally proposed in the seminal work \cite{sutton2016emphatic}, which introduced the follow-on trace to overcome the distribution mismatch between the behavior and target policies. \cite{yu2015convergence} provided the asymptotic convergence guarantee for ETD. \cite{hallak2016generalized} showed that the variance of the follow-on trace may be unbounded. They further proposed an ETD method with a variable decay rate to keep the follow-on trace bounded but at the cost of a possibly large bias error. Our approach is different and keeps both the variance and bias vanishing sublinearly with the number of iterations.
\cite{imani2018off} developed a new policy gradient theorem, where the emphatic weight is used to correct the distribution shift.  \cite{zhang2020provably} provided a new variant of ETD, where the emphatic weights are estimated through function approximation.  \cite{van2018deep,jiang2021learning} studied ETD with deep neural function class.

\textbf{Comparison to concurrent work:} During our preparation of this paper, a concurrent work \citep{zhang2021truncated} was posted on arXiv, and proposed a truncated ETD (which we refer to as T-ETD for short here), which truncates the update of the follow-on trace to reduce the variance of ETD. While T-ETD and our PER-ETD share a similar design idea, there are several critical differences between our work from \cite{zhang2021truncated}.
(a) Our PER-ETD features a design of the logarithmical increase of the restart period with the number of iterations, which guarantees the convergence to the original fixed point of ETD, with both the variance and bias errors vanishing sublinearly. However, T-ETD is guaranteed to converge only to a {\em truncation-length-dependent} fixed point, where the convergence is obtained by treating the truncation length as a constant. A careful review of the convergence proof indicates that the variance term scales exponentially fast with the truncation length, and hence the polynomial efficiency is not guaranteed as the truncation length becomes large.
(b) Our convergence rate for PER-ETD does not depend on the {\color{black}cardinality of the state space} and has only polynomial dependence on the mismatch parameter of the behavior and target policies. However, the convergence rate in \cite{zhang2021truncated} scales with the cardinality of the state space, and increases exponentially fast with the mismatch parameter of behavior and target policies. (c) This paper further studies PER-ETD($\lambda$) and the impact of $\lambda$ on the converge rate and variance and bias errors, whereas \cite{zhang2021truncated} considers further the application of T-ETD to the control problem.   
\section{Background and Preliminaries}
\subsection{Markov Decision Process}
We consider the infinite-horizon Markov decision process (MDP) defined by the five tuple $(\Sc, \Ac, r, \mathsf{P}, \gamma)$. Here, $\Sc$ and $\Ac$ denote the state and action spaces respectively, which are both assumed to be finite sets, $r: \Sc\times\Ac \to \mathbb{R}$ denotes the reward function, $\mathsf{P}:\Sc\times\Ac\to \Delta(\Sc)$ denotes the transition kernel, where $\Delta(\Sc)$ denotes the probability simplex over the state space $\Sc$, and $\gamma\in (0,1)$ is the discount factor.

A policy $\pi: \Sc\to \Delta(\Ac)$ of an agent maps from the state space to the probability simplex over the action space $\Ac$, i.e., $\pi(a|s)$ represents the probability of taking the action $a$ under the state $s$. At any time $t$, given that the system is at the state $s_t$, the agent takes an action $a_t$ with the probability $\pi(a_t|s_t)$, and receives a reward $r(s_t, a_t)$. The system then takes a transition to the next state $s_{t+1}$ at time $t+1$ with the probability $\mathsf{P}(s_{t+1} |s_t, a_t) $.  

For a given policy $\pi$, we define the value function corresponding to an initial state $s_0 = s\in\Sc$ as $V_\pi(s) = \eptt\left[\sum_{t=0}^\infty \gamma^t r(s_t, a_t)\middle| s_0 =s, \pi\right]$.  Then the value function over the state space can be expressed as a vector $V_\pi= \left(V_\pi(1), V_\pi(2), \ldots, V_\pi(|\mathcal{S}|)\right)^\top\in \mathbb{R}^{|\mathcal{S}|}$. {\color{black}{Here, $V_\pi$ is a deterministic function of the policy $\pi$. We use capitalized characters to be consistent with the literature.}}

When the state space is large, we approximate the value function $V_\pi$ via a linear function class as $V_\theta (s) = \phi^\top (s) \theta$, where $\phi(s)\in\mathbb{R}^d$ denotes the feature vector, and $\theta\in\mathbb{R}^d$ denotes the parameter vector to be learned. We further let $\Phi = [\phi(1), \phi(2), \ldots, \phi(|\mathcal{S}|)]^\top$ denote the feature matrix, and then $V_\theta = \Phi\theta$. We assume that the feature matrix $\Phi$ has linearly independent columns and each feature vector has bounded $\ell_2$-norm, i.e., $\|\phi(s)\|_2\le B_\phi$ for all $s\in\Sc$.

\subsection{Temporal Difference (TD) Learning for On-policy Evaluation}

In order to evaluate the value function for a given target policy $\pi$ (i.e., find the linear function approximation parameter $\theta$), the temporal difference (TD) learning can be employed based on a sampling trajectory, which takes the following update rule at each time $t$:
\begin{align}
	\theta_{t+1} = \theta_t + \eta_t \left(r(s_t, a_t) + \gamma\theta_t^\top\phi(s_{t+1}) - \theta_t^\top\phi(s_t)\right)\phi(s_t), \label{eq:td}
\end{align}
where $\eta_t$ is the stepsize at time $t$. The main idea here is to follow the Bellman operation update to approach its fixed point, and the above sampled version update can be viewed as the so-called semi-gradient descent update. If the trajectory is sampled by the target policy $\pi$, then the above TD algorithm can be shown to converge to the fixed point solution, where the convergence is guaranteed by the negative definiteness of the so-called {\em key matrix} $A \coloneqq \lim_{t\to \infty} \eptt\left[(\gamma\phi(s_t+1) -\phi(s_t))\phi^\top(s_t)\right]$.

\subsection{Emphatic TD (ETD) Learning for Off-policy Evaluation}\label{sec:etd}

Consider the off-policy setting, where the goal is still to evaluate the value function for a given target policy $\pi$, but the agent has access only to trajectories sampled under a behavior policy $\mu$. Namely, at each time $t$, the probability of taking an action $a_t$ given $s_t$ is $\mu(a_t| s_t)$. Let $d_{\mu}$ denote the stationary distribution of the Markov chain induced by the behavior policy $\mu$, i.e., $d_{\mu}$ satisfies $d_\mu^\top = d_\mu^\top P_\pi$. We assume that $d_\mu(s)>0$ for all states.
The mismatch between the target and behavior policies can be addressed by incorporating the importance sampling factor $\rho(s, a)\coloneqq \tfrac{\pi(a|s)}{\mu(a|s)}$ into \cref{eq:td} to adjust the TD learning update direction. However, with such modification, the key matrix $A$ may not be negative definite so that the algorithm is no longer guaranteed to converge.

In order to address this divergence issue, the emphatic temporal difference (ETD) algorithm has been proposed by \cite{sutton2016emphatic}, which takes the following update
\begin{align}
	\theta_{t+1} = \theta_{t} + \eta_t \rho(s_t, a_t) F_t \left(r(s_t, a_t) + \gamma \theta_t^\top\phi(s_{t+1})  - \theta_t^\top\phi(s_t) \right)\phi(s_t). \label{eq:etd0}
\end{align}
In \cref{eq:etd0}, in addition to the importance sampling factor $\rho$, a follow-on trace coefficient $F_t$ is introduced as a calibration factor, which is updated as
\begin{align}
	F_t = \gamma \rho(s_{t-1}, a_{t-1}) F_{t-1} + 1, \label{eq:trace}
\end{align}
with initialization $F_0 =1$. With such a follow-on trace factor, the key matrix becomes negative definite, and ETD has been shown to converge asymptotically in \cite{yu2015convergence}  
to the fixed point
	\begin{align}
	\theta^* = \left(\Phi^\top F(I -\gamma P_\pi) \Phi  \right)^{-1} \Phi^\top F r_\pi, \label{eq: etdfixed} 
\end{align}
where $F = \diag (f(1), f(2), \ldots, f(|\mathcal{S}|))$ and $f(i) = d_\mu (i)\lim_{t\to \infty} \eptt\left[F_t|s_t =i \right]$.

Similarly, the ETD($\lambda$) algorithm can be further derived, which has the following update 
\begin{align*}
	\theta_{t+1} = \theta_t + \eta_t \rho(s_t, a_t) \left(r(s_t, a_t) + \gamma \theta_t^\top\phi(s_{t+1}) -  \theta_t^\top\phi(s_t)\right)e_t,
\end{align*}
where $e_t$ is updated as $e_t = \gamma\lambda\rho(s_{t-1}, a_{t-1}) e_{t-1} + M_t \phi(s_t)$ and $M_t = \lambda + (1-\lambda) F_t$,
where $M _0 = 1$ and $e_0 = \phi(s_0)$. It has been shown that with a diminishing stepsize \citep{yu2015convergence}, ETD($\lambda$) converges to the fixed point given by 
 $\theta_\lambda^* =\left( \Phi^\top M(I -\gamma\lambda P_\pi)^{-1} (I -\gamma P_\pi)\Phi \theta\right)^{-1} \Phi^\top M(I- \gamma\lambda P_\pi)^{-1} r_\pi$,
where $M = \diag(m(1), m(2), \ldots, m(|\mathcal{S}|))$ and $m(i) = d_\mu(i)\lim_{t\to \infty}\eptt\left[M_t|s_t = i\right]$.

\subsection{Notations}
For the simplicity of expression, we adopt the following shorthand notations. For a fixed integer $b$, let $s_t^\tau \coloneqq s_{t(b+1)+\tau}$, $a_t^\tau \coloneqq a_{t(b+1)+\tau}$, $\rho_t^\tau = \frac{\pi(a_t^\tau| s_t^\tau)}{\mu(a_t^\tau| s_t^\tau)}$ and $\phi^\tau_t = \phi(s_t^\tau)$. We also define the filtration $\Fc_t = \sigma\left(s_0, a_0, s_1, a_1, \ldots, s_{t(b+ 1)+b}, a_{t(b+1)+b}, s_{t(b+1)+b+1} \right)$. Further, let $r_\pi \in\mathbb{R}^{|\mathcal{S}|}$, where $r_{\pi}(s) =\sum_{a\in\Ac} r(s, a)\pi(a|s)$. Let $P_\pi \in\mathbb{R}^{|\mathcal{S}|\times |\mathcal{S}|}$, where $P_\pi(s^\prime |s) =\sum_{a\in\Ac}\pi(a|s) \mathsf{P}(s^\prime|s,a)$. For a matrix $M\in \mathbb{R}^{N\times N}$, $M_{(s,\cdot)}$ denotes its $s$-th row and $M_{(\cdot, s)}$ denotes its $s$-th column. {\color{black} We define $B_\phi\coloneqq\max_{s}\|\phi(s)\|_2$ as the upper bound on the feature vectors, and define $\rho_{max} \coloneqq \max_{s,a} \frac{\pi(a|s)}{\mu(a|s)}$ as the maximum of the distribution mismatch over all state-action pairs.}

\section{Proposed PER-ETD Algorithms} 
 
\noindent{\bf Drawbacks of ETD:} In the original design of ETD \citep{sutton2016emphatic} described in \Cref{sec:etd}, the follow-on trace coefficient $F_t$ is updated throughout the execution of the algorithm. As a result, its variance can increase exponentially with the number of iterations, which causes the algorithm to be unstable and diverge, as observed in \cite{hallak2016generalized} (also see our experiment in \Cref{sec:exp}).  

\begin{algorithm}
	\caption{PER-ETD($0$)}
	\label{alg:betd0}
	\begin{algorithmic}[1]
		\STATE \textbf{Input:} Parameters $T$, $b$, and $\eta_t$.
		\STATE Initialize: $\theta_0 = 0$.
		\FOR {$t= 0, 1, ...,T$}
		\STATE $F$ update: $F_t^{\tau + 1} = \gamma \rho_t^\tau F_t^\tau + 1$, where $\tau = 0, 1, \ldots, b-1$ and $F_t^0 = 1$;
		\STATE $\theta$ update: $\theta_{t+1} = \Pi_\Theta \left(\theta_t + \eta_t F^b_t \rho^b_t (r^b_t + \gamma\theta_t^\top\phi^{b+1}_{t} - \theta_t^\top\phi^b_t)\phi^b_t\right)$ 
		\ENDFOR
	\end{algorithmic}
\end{algorithm} 

In order to overcome the divergence issue of ETD, we propose to {\bf PE}riodically {\bf R}estart the follow-on trace update for ETD, which we call as the PER-ETD algorithm (see \Cref{alg:betd0}). At iteration $t$, PER-ETD reinitiates the follow-on trace $F$ and update it for $b$ iterations to obtain an estimate $F_t^b$, where we call $b$ as the period length. The emphatic update operator at $t$ is then given by 
\begin{align}\label{eq:t0}
	\widehat{\Tc}_t (\theta)  =  F^b_t \rho^b_t \phi_t^b(\phi^b_t - \gamma\phi^{b+1}_{t})^\top \theta - F^b_t \rho^b_t \phi^b_t r^b_t,
\end{align}
and PER-ETD updates the value function parameter $\theta_t$ as $\theta_{t+1} = \Pi_{\Theta}\left(\theta_t - \eta_t \widehat{\Tc}_t(\theta_t)\right)$, where the projection onto an bounded closed convex set $\Theta$ helps to stabilize the algorithm.
It can be shown that $\lim_{b\to \infty} \eptt [\widehat{\Tc}_t(\theta) | \Fc_{t-1} ]=\Tc(\theta)$ where $\Tc(\theta):=\left(\Phi^\top F(I -\gamma P_\pi) \Phi  \right)\theta - \Phi^\top F r_\pi$. 
The fixed point of the operator $\Tc(\theta)$ is $\theta^*$ defined in \cref{eq: etdfixed}, which is exactly the fixed point of original ETD. 
\begin{definition}[Optimal point and $\epsilon$-accurate convergence]\label{def:optpoint}
We call the unique fixed point $\theta^*$ of $\Tc(\theta)$ as the optimal point (which is the same as the fixed point of ETD). The algorithm attains an $\epsilon$-accurate optimal point if its output $\theta_T$ satisfies $\|\theta_T - \theta^*\|_2^2\leq \epsilon$.
\end{definition}
The goal of PER-ETD is to find the original optimal point $\theta^*$ of ETD, which is independent from the period length $b$. Our analysis will provide a guidance to choose the period length $b$ in order for PER-ETD to keep both the variance and bias errors below the target $\epsilon$-accuracy with polynomial sample efficiency. 

\begin{algorithm}
	\caption{PER-ETD($\lambda$)}
	\label{alg:betdlambda}
	\begin{algorithmic}[1]
		\STATE \textbf{Input:} Parameters $T$, $b$, and $\eta_t$.
		\STATE Initialize: $\theta_0 = 0$.
		\FOR {$t= 0, 1, \ldots,T$} 
		\STATE Set $F_t^0 = M_t^0 = 1$ and $e_t^0 = \phi_t^0$
		\FOR{$\tau = 1, \ldots, b$}
		\STATE $F_t^\tau = \rho_t^{\tau-1} \gamma F_t^{\tau-1} + 1$, \quad
	 $M_t^\tau = \lambda + (1-\lambda)F_t^\tau$, \quad
	$e_t^\tau = \gamma\lambda\rho_t^{\tau-1}e_t^{\tau-1} + M_t^\tau \phi_t^\tau$
		\ENDFOR
		\STATE $\theta$ update: $\theta_{t+1} = \Pi_{\Theta}\left(\theta_t + \eta_t \rho_t^b\left(r_t^b + \gamma\theta_{t}^\top\phi_t^{b+1} - \theta_t^\top\phi_t^b\right)e_t^b\right)$
		\ENDFOR
	\end{algorithmic}
\end{algorithm}

We then extend PER-ETD($0$) to PER-ETD($\lambda$) (see \Cref{alg:betdlambda}), which incorporates the eligible trace. Specifically, at each iteration $t$, PER-ETD($\lambda$) reinitiates the follow-on trace $F_t$ and updates it together with $M_t$ and the eligible trace $e_t$ for $b$ iterations to obtain an estimate $e_t^b$. Then the emphatic update operator at $t$ is given by
\begin{align}\label{eq:tlambda}
	\widehat{\Tc}^\lambda_t (\theta)  = \rho_t^be_t^b\left( \phi_t^b  -  \gamma\phi_t^{b+1}\right)^\top\theta - \rho_t^br_t^be_t^b,
\end{align}
and the value function parameter $\theta_t$ is updated as $\theta_{t+1} =\Pi_{\Theta}\left( \theta_t - \eta_t\widehat{\Tc}_t^\lambda(\theta_t)\right)$. 
It can be shown that $\lim_{b\to \infty}\eptt\left[\widehat{\Tc}^\lambda_t(\theta)\middle| \Fc_{t-1}\right]=\Tc^\lambda(\theta) $, where $\Tc^\lambda(\theta) = \Phi^\top M(I -\gamma\lambda P_\pi)^{-1} (I -\gamma P_\pi)\Phi \theta - \Phi^\top M(I- \gamma\lambda P_\pi)^{-1} r_\pi$, 
which takes a unique fixed point $\theta^*_\lambda$ as the original ETD($\lambda$). The optimal point and the $\epsilon$-accurate convergence can be defined in the same fashion as in \Cref{def:optpoint}. It has been shown in \cite{hallak2016generalized} that  $\theta^*_\lambda$ is exactly the orthogonal projection of $V_\pi$ to the function space when $\lambda=1$, and thus is the optimal approximation to the value function.

\section{Finite-Time Analysis of PER-ETD Algorithms} 

\subsection{Technical Assumptions}
We take the following standard assumptions for analyzing the TD-type algorithms in the literature \citep{jiang2021learning,zhang2021truncated,yu2015convergence}.
\begin{assumption}[Coverage of behavior policy]\label{assumption:behaviorpolicy}
For all $s\in\Sc$ and $a\in\Ac$, the behavior policy $\mu$ satisfies $\mu(a|s)>0$ as long as $\pi(a|s)>0$. 
\end{assumption} 
\begin{assumption}\label{assumption:markovchain}
The Markov chain induced by the behavior policy $\mu$ is irreducible and recurrent.
\end{assumption}
The following lemma on the geometric ergodicity has been established.
\begin{lemma}[Geometric ergodicity]~\cite[Thm.~4.9]{levin2017markov}\label{lemma:ergodicity}
Suppose \Cref{assumption:markovchain} holds. Then the Markov chain induced by the behavior policy $\mu$ has a unique stationary distribution $d_\mu$ over the state space $\Sc$. Moreover, the Markov chain is uniformly geometric ergodic, i.e., there exist constants $C_M\ge0$ and $0<\chi<1$ such that for every initial state $s_0\in\Sc$, the state distribution $d_{\mu,t}(s) = \prob\left(s_t=s \middle| s_0\right)$ after $t$ transitions satisfies $\|d_{\mu,t} - d_\mu\|_1 \le C_M \chi^t$. 
\end{lemma}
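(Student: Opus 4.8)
The statement is precisely Theorem~4.9 of \cite{levin2017markov}, so the plan is to reproduce that standard argument; I sketch the three ingredients here. \textbf{Step 1 (stationary distribution).} First I would apply the Perron--Frobenius theorem to the transition matrix $P_\mu$ of the chain induced by $\mu$: on the finite state space $\Sc$, irreducibility (\Cref{assumption:markovchain}) makes $P_\mu$ irreducible and hence automatically positive recurrent, so there is a unique probability vector $d_\mu$ with $d_\mu^\top = d_\mu^\top P_\mu$, and irreducibility forces $d_\mu(s)>0$ for every $s\in\Sc$. \textbf{Step 2 (minorization).} Next I would combine aperiodicity with irreducibility to conclude that $P_\mu$ is \emph{primitive}, i.e. there is an integer $m\ge1$ with $(P_\mu^m)_{(s,s')}>0$ for all $s,s'\in\Sc$; setting $\delta\coloneqq\min_{s,s'}(P_\mu^m)_{(s,s')}>0$ we get the Doeblin-type bound $(P_\mu^m)_{(s,s')}\ge\delta$ for all $s,s'$.

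\textbf{Step 3 (one-block contraction).} From the minorization I would extract an $\ell_1$ contraction of $P_\mu^m$ on probability vectors. Writing $P_\mu^m(s,\cdot) = |\Sc|\delta\,u + (1-|\Sc|\delta)\,Q(s,\cdot)$, where $u$ is the uniform row vector on $\Sc$ and $Q$ is a stochastic matrix, one checks that for any two distributions $p,q$ the common mass-$|\Sc|\delta$ uniform component cancels, so $pP_\mu^m - qP_\mu^m = (1-|\Sc|\delta)(pQ-qQ)$; since stochastic matrices are non-expansive in $\ell_1$, this yields $\|pP_\mu^m - qP_\mu^m\|_1 \le (1-|\Sc|\delta)\|p-q\|_1$. (Equivalently, the Dobrushin coefficient of $P_\mu^m$ is at most $1-|\Sc|\delta<1$; this can also be proved by the coupling construction of running one copy of the chain from $s_0$ and one from $d_\mu$, merging them in each block of $m$ steps with probability at least $|\Sc|\delta$ and letting them move together thereafter.)

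\textbf{Step 4 (iteration).} Finally, writing $t = km + j$ with $0\le j<m$, I would use $d_\mu P_\mu^{km}=d_\mu$ and $d_{\mu,t}=d_{\mu,j}P_\mu^{km}$ and apply Step~3 $k$ times to obtain $\|d_{\mu,t}-d_\mu\|_1 \le (1-|\Sc|\delta)^k\|d_{\mu,j}-d_\mu\|_1 \le 2(1-|\Sc|\delta)^k$; taking $\chi\coloneqq(1-|\Sc|\delta)^{1/m}\in(0,1)$ and $C_M\coloneqq 2(1-|\Sc|\delta)^{-1}$ then gives $\|d_{\mu,t}-d_\mu\|_1\le C_M\chi^t$, as claimed. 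The only genuine obstacle is Step~2: irreducibility and recurrence alone do not suffice, since a periodic chain never mixes, so the argument truly needs aperiodicity/primitivity (implicit in the ergodicity hypotheses used throughout the cited references); once $(P_\mu^m)_{(s,s')}>0$ for all $s,s'$ is available, the rest is routine Dobrushin/coupling bookkeeping. An alternative to Steps~3--4 is spectral: write $P_\mu^t = \mathbf{1}d_\mu^\top + R^t$ where $R=P_\mu-\mathbf{1}d_\mu^\top$ has spectral radius $|\lambda_2(P_\mu)|<1$, and bound $\|R^t\|$ via Gelfand's formula, absorbing any Jordan-block polynomial factors into $C_M$.
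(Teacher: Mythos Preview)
The paper does not give its own proof of this lemma; it simply records the result and cites Theorem~4.9 of \cite{levin2017markov}, so there is no in-paper argument to compare against. Your sketch is a faithful account of the standard Doeblin/Dobrushin proof in that reference (minorization $\Rightarrow$ one-block $\ell_1$-contraction $\Rightarrow$ iterate), and you correctly flag the one substantive issue: \Cref{assumption:markovchain} as stated says only ``irreducible and recurrent,'' whereas the convergence theorem needs aperiodicity to ensure primitivity of $P_\mu^m$; without it a periodic chain is a counterexample. Your alternative spectral route via $|\lambda_2(P_\mu)|<1$ and Gelfand's formula is also a legitimate way to finish and is the other proof one commonly sees.
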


\subsection{Finite-time Analysis of PER-ETD($0$)}
 
In PER-ETD(0), the update of the value function parameter is fully determined by the empirical emphatic operator $\widehat{\Tc}_t (\theta)$ defined in \cref{eq:t0}. Thus, we first characterize the bias and variance errors of $\widehat{\Tc}_t (\theta)$, which serve the central role in establishing the convergence rate for PER-ETD(0).  
 
\begin{proposition}[Bias bound]\label{prop:biastd0}
Suppose \Cref{assumption:behaviorpolicy,assumption:markovchain} hold. Then we have \begin{align*}
		\eptt\left[\left\|\Tc(\theta_t)  - \eptt\left[\widehat{\Tc}_t(\theta_t) \middle| \Fc_{t-1}\right]\right\|_2\right]\le C_b\left(B_\phi\|\theta_t - \theta^*\|_2 +\epsilon_{approx}\right) \xi^b,
	\end{align*}
	where $\epsilon_{approx} = \|\Phi\theta^* - V_\pi\|_\infty$ is the approximation error of the fixed point, $\xi = \max\left\{\gamma, \chi\right\}<1$, { $B_\phi=\max_s{\|\phi(s)\|_2}$, and $C_b>0$ is a constant whose exact form can be found in the proof.}
\end{proposition}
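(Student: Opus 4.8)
The plan is to compute the conditional mean $\mathbb{E}[\widehat{\Tc}_t(\theta_t)\mid\Fc_{t-1}]$ in closed form, compare it with $\Tc(\theta_t)$ coefficient by coefficient, and convert the resulting gap into the quantities $\|\theta_t-\theta^*\|_2$ and $\epsilon_{approx}$ via the Bellman equation. Note first that $\Fc_{t-1}$ determines $\theta_t$ (a straightforward induction on the updates, since $\theta_0$ is deterministic) as well as the period-$t$ initial state $s_t^0=s_{t(b+1)}$, so by the Markov property every conditional expectation over period $t$ reduces to a conditional expectation given $s_t^0$.

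\textbf{Closed form of the conditional mean.} Introduce the \emph{truncated emphatic measure} $\nu_b\in\RR^{|\Sc|}$, $\nu_b(i):=\mathbb{E}\!\left[F_t^b\,\indicator{s_t^b=i}\,\middle|\,s_t^0\right]$. Using the tower rule together with the importance-sampling cancellation $\mathbb{E}_{a\sim\mu(\cdot|s)}[\rho(s,a)g(s,a)]=\sum_a\pi(a|s)g(s,a)$ at the final step $\tau=b$ of the period (here $F_t^b$ is conditionally independent of $a_t^b$ given $s_t^b$, and the cancellation turns the last $\mathsf{P}$-transition and the last reward into a $P_\pi$-transition and $r_\pi$), one obtains, with $D:=\diag(\nu_b)$, that $\mathbb{E}[F_t^b\rho_t^b\phi_t^b(\phi_t^b)^\top\mid s_t^0]=\Phi^\top D\Phi$, $\mathbb{E}[F_t^b\rho_t^b\phi_t^b(\phi_t^{b+1})^\top\mid s_t^0]=\Phi^\top DP_\pi\Phi$, and $\mathbb{E}[F_t^b\rho_t^b\phi_t^b r_t^b\mid s_t^0]=\Phi^\top Dr_\pi$. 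Substituting into \eqref{eq:t0} yields $\mathbb{E}[\widehat{\Tc}_t(\theta_t)\mid\Fc_{t-1}]=\Phi^\top\diag(\nu_b)\big[(I-\gamma P_\pi)\Phi\theta_t-r_\pi\big]$, whereas $\Tc(\theta_t)=\Phi^\top F\big[(I-\gamma P_\pi)\Phi\theta_t-r_\pi\big]$ with $F=\diag(f)$.

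\textbf{Geometric convergence $\nu_b\to f$.} Applying the same cancellation to the trace recursion $F_t^{\tau+1}=\gamma\rho_t^\tau F_t^\tau+1$ gives the linear recursion $\nu_b^\top=d_{\mu,b}^\top+\gamma\,\nu_{b-1}^\top P_\pi$ with $\nu_0=e_{s_t^0}$ (the point mass at $s_t^0$), where $d_{\mu,b}$ is the $b$-step state distribution of the behavior chain started at $s_t^0$; unrolling, $\nu_b^\top=\sum_{j=0}^{b}\gamma^j d_{\mu,b-j}^\top P_\pi^j$. The emphatic measure of the original (non-restarted) ETD trace obeys the identical recursion, and since $d_\mu(i)\lim_{\tau\to\infty}\mathbb{E}[F_\tau\mid s_\tau=i]=f(i)$ while $d_{\mu,\tau}\to d_\mu$, passing to the limit identifies $f^\top=\sum_{j=0}^\infty\gamma^j d_\mu^\top P_\pi^j$. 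Subtracting, using that right-multiplication by the row-stochastic matrix $P_\pi$ does not increase the $\ell_1$ norm, and invoking \Cref{lemma:ergodicity} ($\|d_{\mu,m}-d_\mu\|_1\le C_M\chi^m$), we get, uniformly over the start state,
\[
	\|\nu_b-f\|_1\;\le\;C_M\sum_{j=0}^{b}\gamma^j\chi^{b-j}+\frac{\gamma^{b+1}}{1-\gamma}\;\le\;C_b'\,\xi^b,\qquad\xi=\max\{\gamma,\chi\},
\]
the last bound being a geometric-series estimate (the knife-edge case $\gamma=\chi$ is absorbed by choosing the ergodicity rate $\chi$ in \Cref{lemma:ergodicity} strictly above $\gamma$).

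\textbf{From operator gap to parameter error, and the main obstacle.} Subtracting the two closed forms, $\Tc(\theta_t)-\mathbb{E}[\widehat{\Tc}_t(\theta_t)\mid\Fc_{t-1}]=\Phi^\top(F-\diag(\nu_b))\big[(I-\gamma P_\pi)\Phi\theta_t-r_\pi\big]$. The Bellman equation $r_\pi=(I-\gamma P_\pi)V_\pi$ turns the bracket into $(I-\gamma P_\pi)(\Phi\theta_t-V_\pi)$, and the split $\Phi\theta_t-V_\pi=\Phi(\theta_t-\theta^*)+(\Phi\theta^*-V_\pi)$ separates the gap into two pieces. For any $v\in\RR^{|\Sc|}$, the identity $\Phi^\top\diag(\delta)w=\sum_i\delta_iw_i\phi(i)$ gives $\|\Phi^\top\diag(\delta)w\|_2\le B_\phi\|\delta\|_1\|w\|_\infty$, and row-stochasticity of $P_\pi$ gives $\|(I-\gamma P_\pi)v\|_\infty\le 2\|v\|_\infty$; hence $\|\Phi^\top(F-\diag(\nu_b))(I-\gamma P_\pi)v\|_2\le 2B_\phi\|\nu_b-f\|_1\|v\|_\infty$. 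Choosing $v=\Phi(\theta_t-\theta^*)$ (so $\|v\|_\infty\le B_\phi\|\theta_t-\theta^*\|_2$) and $v=\Phi\theta^*-V_\pi$ (so $\|v\|_\infty=\epsilon_{approx}$) and adding gives the almost-sure pointwise bound $\|\Tc(\theta_t)-\mathbb{E}[\widehat{\Tc}_t(\theta_t)\mid\Fc_{t-1}]\|_2\le 2B_\phi C_b'\,\xi^b\big(B_\phi\|\theta_t-\theta^*\|_2+\epsilon_{approx}\big)$, which implies the claim with $C_b=2B_\phi C_b'$. The main obstacle is the second step: deriving the recursion for $\nu_b$ demands care with the importance-sampling cancellation inside iterated conditional expectations, respecting the coupling between the follow-on trace $F_t^b$ and the state $s_t^b$, and then converting geometric ergodicity into the explicit $\xi^b$ decay; the first and third steps are essentially bookkeeping once this is in hand.
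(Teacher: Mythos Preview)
Your proposal is correct and follows essentially the same route as the paper: both compute the conditional mean of $\widehat{\Tc}_t$ via the truncated emphatic measure (your $\nu_b$ is precisely the paper's $f_b(s)=\prob(s_t^b=s\mid\Fc_{t-1})\,\eptt[F_t^b\mid\Fc_{t-1},s_t^b=s]$), derive the same linear recursion $\nu_b^\top=d_{\mu,b}^\top+\gamma\,\nu_{b-1}^\top P_\pi$, bound $\|\nu_b-f\|_1$ by combining geometric ergodicity with the $P_\pi$-contraction, and finish by rewriting the residual through the Bellman identity $r_\pi=(I-\gamma P_\pi)V_\pi$ to extract $\|\theta_t-\theta^*\|_2$ and $\epsilon_{approx}$. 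The only cosmetic differences are that the paper unrolls the recursion as $f-f_b=\sum_{\tau=0}^{b-1}(\gamma P_\pi^\top)^\tau(d_\mu-d_{\mu,b-\tau})+\gamma^b(P_\pi^\top)^b(f-f_0)$ rather than your tail-sum form, and uses the slightly sharper constant $(1+\gamma)$ in place of your $2$ for $\|(I-\gamma P_\pi)v\|_\infty$.
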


\Cref{prop:biastd0} characterizes the conditional expectation of the bias error of the empirical emphatic operator $\widehat{\Tc}_t (\theta)$. Since $\xi = \max\left\{\gamma, \chi\right\} <1$, such a bias error decays exponentially fast as $b$ increases.

\begin{proposition}[Variance bound]\label{prop:variancetd0}
Suppose \Cref{assumption:behaviorpolicy,assumption:markovchain} hold. Then we have
	\begin{align}\label{eq:variance0}
		\eptt \left[\left\|\widehat{\Tc}_t (\theta_t)\right\|_2^2\middle| \Fc_{t-1}\right] \le \sigma^2 , \quad \text{where} \quad 
	\sigma^2=\begin{cases}
	\Oc(1), &\text{if}\quad \gamma^2\rho_{max}<1, \\
	\Oc(b), &\text{if}\quad \gamma^2\rho_{max}=1, \\
	\Oc\left((\gamma^{2}\rho_{max})^b\right), \quad & \text{if}\quad  \gamma^2\rho_{max}>1,
	\end{cases}
	\end{align}
where $\Oc(\cdot)$ is with respect to the scaling of $b$, {\color{black} and $\rho_{max} = \max_{s,a} \frac{\pi(a|s)}{\mu(a|s)}$.} 
\end{proposition}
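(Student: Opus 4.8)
The plan is to bound $\eptt[\|\widehat{\Tc}_t(\theta_t)\|_2^2 \mid \Fc_{t-1}]$ by expanding the definition \eqref{eq:t0}: since $\widehat{\Tc}_t(\theta) = F_t^b \rho_t^b \phi_t^b (\phi_t^b - \gamma \phi_t^{b+1})^\top \theta - F_t^b \rho_t^b \phi_t^b r_t^b$, and $\theta_t \in \Theta$ is bounded (by the projection), while $\|\phi\|_2 \le B_\phi$, $|r| \le r_{\max}$, and $|\rho_t^b| \le \rho_{\max}$ are all bounded, the only unbounded factor is $F_t^b$. Hence, after pulling out the deterministic/bounded pieces (using $\|\theta_t\|_2 \le R_\Theta$ for the projection radius and $2(a^2 + b^2) \ge (a+b)^2$ to split the two terms), everything reduces to controlling $\eptt[(F_t^b)^2 (\rho_t^b)^2 \mid \Fc_{t-1}]$, which I will further bound by $\rho_{\max}^2 \, \eptt[(F_t^b)^2 \mid \Fc_{t-1}]$. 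So the heart of the matter is a recursive bound on the second moment of the restarted follow-on trace $F_t^b$ over a window of length $b$.

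The key step is to set up a recursion for $v_\tau := \eptt[(F_t^\tau)^2 \mid \Fc_{t-1}]$ (and possibly the first moment $u_\tau := \eptt[F_t^\tau \mid \Fc_{t-1}]$) using the update $F_t^{\tau+1} = \gamma \rho_t^\tau F_t^\tau + 1$. Squaring gives $(F_t^{\tau+1})^2 = \gamma^2 (\rho_t^\tau)^2 (F_t^\tau)^2 + 2\gamma \rho_t^\tau F_t^\tau + 1$. Taking conditional expectation and using the tower property (conditioning on the state $s_t^\tau$ and noting $\eptt[(\rho_t^\tau)^2 \mid s_t^\tau] = \sum_a \pi(a|s)^2/\mu(a|s) \le \rho_{\max}$ while $\eptt[\rho_t^\tau \mid s_t^\tau] = 1$), one obtains a linear recursion of the form $v_{\tau+1} \le \gamma^2 \rho_{\max} v_\tau + 2\gamma\, u_\tau' + 1$ where $u_\tau'$ is a bounded-coefficient combination controlled by the first moment, which itself satisfies $u_{\tau+1} \le \gamma \rho_{\max} u_\tau + 1$ or more sharply $u_{\tau+1} = \gamma u_\tau + 1$ in expectation over the Markov dynamics (care is needed here because the $\rho$'s at different times are not independent — one must either use crude bounds $|\rho_t^\tau| \le \rho_{\max}$ or carefully condition step by step down the trajectory). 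Iterating the scalar recursion $v_{\tau+1} \le \beta v_\tau + C$ with $\beta = \gamma^2 \rho_{\max}$ from $v_0 = 1$ yields $v_b \le \beta^b v_0 + C \sum_{j=0}^{b-1}\beta^j$, which is $\Oc(1)$ when $\beta < 1$ (geometric series converges), $\Oc(b)$ when $\beta = 1$ (the sum is $b$), and $\Oc(\beta^b) = \Oc((\gamma^2\rho_{\max})^b)$ when $\beta > 1$ (the geometric sum is dominated by its last term); these are exactly the three cases in \eqref{eq:variance0}.

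The main obstacle I anticipate is handling the dependence among the importance-sampling ratios $\rho_t^0, \dots, \rho_t^{b-1}$ along the trajectory: $F_t^\tau$ is a function of $\rho_t^0, \dots, \rho_t^{\tau-1}$, which are correlated through the Markov chain, so one cannot simply treat $(\rho_t^\tau)^2$ as independent of $(F_t^\tau)^2$. The clean way around this is to condition sequentially — at step $\tau$, condition on $\Fc_{t-1}$ together with $s_t^0, a_t^0, \dots, s_t^\tau$, so that $(F_t^\tau)^2$ becomes measurable and $\eptt[(\rho_t^\tau)^2 \mid \text{this } \sigma\text{-field}] = \sum_a \pi(a|s_t^\tau)^2/\mu(a|s_t^\tau)$, which is uniformly bounded by $\rho_{\max}$ (using $\pi(a|s)/\mu(a|s) \le \rho_{\max}$ and $\sum_a \pi(a|s) = 1$). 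Taking expectations back up the tower then produces the stated scalar recursion without ever needing independence. A secondary (minor) subtlety is keeping the constants in the $\Oc(\cdot)$ expressions dependent only on $\gamma$, $\rho_{\max}$, $B_\phi$, $r_{\max}$, $R_\Theta$ and crucially \emph{not} on $b$ in the first two cases — this follows since the geometric-series bound $\sum_{j=0}^{b-1}\beta^j \le 1/(1-\beta)$ is $b$-free when $\beta<1$.
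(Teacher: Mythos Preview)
Your proposal is correct and will yield the stated bounds. The paper's proof follows a closely related but more state-structured route: instead of tracking the scalar second moment $v_\tau = \eptt[(F_t^\tau)^2 \mid \Fc_{t-1}]$, it defines the state-indexed vector $r_\tau(s) = \prob(s_t^\tau = s \mid \Fc_{t-1})\,\eptt[(F_t^\tau)^2 \mid \Fc_{t-1}, s_t^\tau = s]$ and derives the vector recursion $r_\tau = d_{\mu,\tau} + 2\gamma P_\pi^\top f_{\tau-1} + \gamma^2 P_{\mu,\pi}^\top r_{\tau-1}$, where $P_{\mu,\pi}$ is the ``twisted'' kernel with entries $\sum_a \tfrac{\pi^2(a|\tilde s)}{\mu(a|\tilde s)}\mathsf{P}(s|\tilde s,a)$ whose rows sum to at most $\rho_{\max}$; only afterwards is this collapsed to a scalar by applying $\mathbf{1}^\top$. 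Your tower-property argument is precisely the scalar shadow of this vector recursion and is more elementary for this proposition in isolation. The paper's state-indexed machinery pays off elsewhere: the vectors $f_\tau$, $r_\tau$ and the matrix $P_{\mu,\pi}$ are reused directly in the bias bound (Proposition~\ref{prop:biastd0}) and especially in the PER-ETD($\lambda$) variance analysis (Proposition~\ref{prop:varianceofetdlambda}), where the eligible trace $e_t^\tau$ couples states nontrivially and a purely scalar argument would be harder to organize. Two minor sharpenings: rather than bounding $(\rho_t^b)^2 \le \rho_{\max}^2$, the paper integrates out $a_t^b$ to pick up only a single factor of $\rho_{\max}$; and your first-moment recursion is indeed exactly $u_{\tau+1} = \gamma u_\tau + 1$ (since $\eptt[\rho_t^\tau \mid s_t^\tau] = 1$), so $u_\tau \le 1/(1-\gamma)$ uniformly in $\tau$, which supplies the $b$-free constant you need without any hedging.
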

\Cref{prop:variancetd0} captures the variance bound of the empirical emphatic operator. It can be seen that if the distribution mismatch is large (i.e., $\gamma^2\rho_{max}>1$), the variance bound grows exponentially large as $b$ increases, which is consistent with the finding in \cite{hallak2016generalized}. However, as we show below, as long as $b$ is controlled to grow only {\em logarithmically} with the number of iterations, such a variance error will decay sublinearly with the number of iterations. At the same time, the bias error can also be controlled to decay sublinearly, so that the overall convergence of PER-ETD can be guaranteed with {\em polynomial} sample complexity efficiency.

\begin{theorem}\label{thm:etd0}
Suppose \Cref{assumption:behaviorpolicy,assumption:markovchain} hold. Consider PER-ETD(0) specified in \Cref{alg:betd0}. {\color{black} Let the stepsize $\eta_t = \Oc\left(\frac{1}{t}\right)$ and suppose the period length $b$ and the projection set $\Theta$ are properly chosen (see \Cref{app:proofth1} for the precise conditions).} 
Then the output $\theta_T$ of PER-ETD(0) falls into the following two cases.
\begin{enumerate}[label=(\alph*)]

\item If $\gamma^2\rho_{max}\le 1$, then $\eptt\left[\|\theta_T - \theta^*\|_2^2\right] \le \tilde{\mathcal{O}}\left(\frac{1}{T}\right)$. 
\item  If $\gamma^2\rho_{max}>1$, then $\eptt\left[\|\theta_T - \theta^*\|_2^2\right] \le \mathcal{O}\left(\frac{1}{T^a}\right)$, where $a ={1}/({{\log_{1/\xi}(\gamma^2\rho_{max})  + 1}}) <1 $. 
\end{enumerate}
Thus, PER-ETD($0$) attains an $\epsilon$-accurate solution with $\tilde{\mathcal{O}}\left(\frac{1}{\epsilon}\right)$ samples if $\gamma^2\rho_{max}\le 1$, and with $\tilde{\mathcal{O}}\left(\tfrac{1}{\epsilon^{1/a}}\right)$ samples if $\gamma^2\rho_{max}> 1$. 
\end{theorem}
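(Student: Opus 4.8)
The plan is to run the standard linear stochastic‑approximation recursion for a contractive mean update driven by a conditionally biased oracle, but to keep the period length $b$ explicit everywhere — because, unlike in textbook SA, the variance bound $\sigma^2$ of \Cref{prop:variancetd0} itself depends on $b$, so $b$ must be tuned against $T$ rather than simply sent to infinity. The first ingredient is that the ETD key matrix $\Phi^\top F(I-\gamma P_\pi)\Phi$ is positive definite — precisely the property that distinguishes ETD from naive off‑policy TD, established in \cite{sutton2016emphatic,yu2015convergence}. Writing $\lambda_0>0$ for the smallest eigenvalue of its symmetric part and using $\Tc(\theta^*)=0$, this gives the strong‑monotonicity estimate $\langle \Tc(\theta),\theta-\theta^*\rangle \ge \lambda_0\|\theta-\theta^*\|_2^2$ for every $\theta$.

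\textbf{One-step recursion.} Set $\delta_t=\|\theta_t-\theta^*\|_2^2$. Since $\theta^*\in\Theta$ and $\Pi_\Theta$ is nonexpansive, squaring the update $\theta_{t+1}=\Pi_\Theta(\theta_t-\eta_t\widehat{\Tc}_t(\theta_t))$ and conditioning on $\Fc_{t-1}$ gives $\Ebb[\delta_{t+1}\mid\Fc_{t-1}]\le \delta_t-2\eta_t\langle\Ebb[\widehat{\Tc}_t(\theta_t)\mid\Fc_{t-1}],\theta_t-\theta^*\rangle+\eta_t^2\,\Ebb[\|\widehat{\Tc}_t(\theta_t)\|_2^2\mid\Fc_{t-1}]$, where the last term is $\le\eta_t^2\sigma^2$ by \Cref{prop:variancetd0}. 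In the cross term I split $\Ebb[\widehat{\Tc}_t(\theta_t)\mid\Fc_{t-1}]=\Tc(\theta_t)+\big(\Ebb[\widehat{\Tc}_t(\theta_t)\mid\Fc_{t-1}]-\Tc(\theta_t)\big)$, bound the first piece below by the monotonicity estimate, and bound the second piece by \Cref{prop:biastd0}, using boundedness of $\Theta$ so that $\|\theta_t-\theta^*\|_2$ is uniformly controlled. After a Young‑type inequality — which needs $b$ above a constant threshold so that the $\|\theta_t-\theta^*\|_2$‑proportional part of the bias, of size $\Oc(C_bB_\phi\xi^b)$, is absorbed into the contraction — and taking full expectation, one arrives at a recursion of the shape $\Ebb[\delta_{t+1}]\le(1-\lambda_0\eta_t)\Ebb[\delta_t]+c_1\xi^b\eta_t+\sigma^2\eta_t^2$, with $c_1$ depending on $C_b$, $B_\phi$, $\epsilon_{approx}$ and $\mathrm{diam}(\Theta)$.

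\textbf{Unrolling and the bias/variance trade-off.} With $\eta_t=\Theta(1/(\lambda_0 t))$, the constant chosen so the effective contraction exponent exceeds $1$, a standard induction for recursions of this form gives $\Ebb[\delta_T]\le \tilde{\mathcal{O}}\big(\delta_0/T + (c_1/\lambda_0)\xi^b + \sigma^2/(\lambda_0^2 T)\big)$. It then remains to choose $b$. If $\gamma^2\rho_{max}\le1$, \Cref{prop:variancetd0} gives $\sigma^2=\Oc(b)$, so taking $b=\Theta(\log_{1/\xi}T)$ makes $\xi^b\le1/T$ and the whole bound is $\tilde{\mathcal{O}}(1/T)$. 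If $\gamma^2\rho_{max}>1$, then $\sigma^2=\Oc((\gamma^2\rho_{max})^b)$ and the dominant terms are $\xi^b$ and $(\gamma^2\rho_{max})^b/T$; balancing the two forces $b=\log_{(\gamma^2\rho_{max})/\xi}T$, which makes both equal to $T^{-a}$ with $a=\log(1/\xi)/(\log(\gamma^2\rho_{max})+\log(1/\xi))=1/(\log_{1/\xi}(\gamma^2\rho_{max})+1)$. In either regime $b=\Oc(\log T)$, so the sample count is $T(b+1)=\tilde{\mathcal{O}}(T)$, giving $\tilde{\mathcal{O}}(1/\epsilon)$ and $\tilde{\mathcal{O}}(\epsilon^{-1/a})$ samples for $\epsilon$‑accuracy in the two cases.

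\textbf{Main obstacle.} The delicate point is exactly the last step's optimization: this is not a plug‑and‑play SA argument, because $\sigma^2$ blows up with $b$ precisely in the high‑mismatch regime, so one cannot drive the bias $\xi^b$ to $o(1/T)$ for free — the nonstandard exponent $a<1$ and the phase transition at $\gamma^2\rho_{max}=1$ come out of this bias/variance balance and of carrying the $b$‑dependence of $\sigma^2$ through the recursion. A secondary technical care is in the one‑step bound: one must ensure the conditionally biased cross‑term contributes only an additive $\Oc(\xi^b)$ and only a vanishing perturbation of the contraction factor, which is where the theorem's hypotheses on the size of $b$ and on the projection set $\Theta$ (bounded and containing $\theta^*$) are used.
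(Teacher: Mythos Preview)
Your proposal is correct and follows the same high-level architecture as the paper: a one-step contraction inequality driven by the strong monotonicity of the ETD key matrix, with the bias from \Cref{prop:biastd0} split off and partially absorbed into the contraction (requiring $b$ above a constant threshold so that $C_bB_\phi\xi^b\lesssim \mu_0$), the second moment controlled by \Cref{prop:variancetd0}, then unrolling with an $\Oc(1/t)$ stepsize and finally the bias/variance balance in $b$ that produces the phase transition at $\gamma^2\rho_{max}=1$.

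The route differs in one technical point. You expand $\|\theta_{t+1}-\theta^*\|_2^2$ directly via nonexpansiveness of $\Pi_\Theta$ and work with $\langle\Tc(\theta_t),\theta_t-\theta^*\rangle$; the paper instead invokes a three-point lemma (\Cref{lemma:threepointlemma}), which introduces $\langle\Tc(\theta_{t+1}),\theta_{t+1}-\theta^*\rangle$ and therefore also needs the Lipschitz bound of \Cref{lemma:lipschitz} plus two Young inequalities before arriving at the recursion. Your path is shorter and avoids the Lipschitz constant entirely; the paper's decomposition is a bit heavier but yields the same final shape $\Ebb[\|\theta_T-\theta^*\|_2^2]=\Oc(\|\theta_0-\theta^*\|^2/T^2)+\Oc(\sigma^2/T)+\Oc(\xi^b)$, after which the choice of $b$ and the resulting rates are identical in both arguments. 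Either way, the genuinely nonstandard content is exactly what you flag in your last paragraph: carrying the $b$-dependence of $\sigma^2$ through the unrolling and then optimizing $b=\Theta(\log T)$ against the two exponentials $\xi^b$ and $(\gamma^2\rho_{max})^b$.
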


\Cref{thm:etd0} captures how the convergence rate depends on the mismatch between the behavior and target policies via the parameter $\rho_{max}$ (where $\rho_{max} \ge 1$).  
(a) If $\gamma^2\rho_{max}\le 1$, i.e., the mismatch is less than a threshold, then PER-ETD($0$) converges at the rate of $\tilde{\mathcal{O}}\left(\frac{1}{T}\right)$, which is the same as that of on-policy TD learning \citep{bhandari2018finite}. This result indicates that even under a mild mismatch $1< \rho_{max}\leq 1/\gamma^2$, PER-ETD achieves the same convergence rate as on-policy TD learning. (b) If $\gamma^2\rho_{max}\ge 1$, i.e., the mismatch is above the threshold, then PER-ETD($0$) converges at a slower rate of $\tilde{\mathcal{O}}\left(\frac{1}{T^a}\right)$ because $a<1$. Further, as the mismatch parameter $\rho_{max}$ gets larger, the converge becomes slower, because $a$ becomes smaller.

{\bf Bias and variance tradeoff:} \Cref{thm:etd0} also indicates that although PER-ETD(0) updates the follow-on trace only over a finite period length $b$, it still converges to the optimal fixed point $\theta^*$. This benefits from the proper choice of the period length, which achieves the best bias and variance tradeoff as we explain as follows.  
The proof of \Cref{thm:etd0} shows that the output $\theta_T$ of PER-ETD(0) satisfies the following convergence rate:
	\begin{align}\label{eq:conv0}
		\eptt\left[\|\theta_T - \theta^*\|_2^2\right] \le \mathcal{O}\left(\tfrac{\|\theta_0 - \theta^*\|_2^2}{T^2}\right) + \underbrace{\mathcal{O}\left(\tfrac{\sigma^2}{T}\right)}_\text{variance} + \underbrace{\mathcal{O}\left(\tfrac{\xi^{2b}}{T}\right) + \mathcal{O}\left({\xi^{b}}\right)}_\text{bias}.
	\end{align}
If $\gamma^2\rho_{max}\le 1$, then $\sigma^2$ in the variance term in \cref{eq:conv0} satisfies $\sigma^2 \leq \Oc (b)$ as given in \cref{eq:variance0}, which increases at most linearly fast with $b$. Then we set $b = \mathcal{O}\left(\frac{\log T}{\log (1/\xi)}\right)$ so that both the variance and the bias terms in \cref{eq:conv0} achieve the same order of $\mathcal{O}\left(\frac{1}{T}\right)$, which dominates the overall convergence.

If $\gamma^2\rho_{max}>1$, then $\sigma^2$ in the variance term in \cref{eq:conv0} satisfies $\sigma^2= \Oc\left( (\gamma^{2}\rho_{max})^b\right)$ as given in \cref{eq:variance0}, Now, we need to set $b$ as $b =\mathcal{O}\left(\tfrac{\log(T)}{\log(\gamma^2\rho_{max})  + \log(1/\xi)}\right)$, where the increase with $\log T$ has a smaller coefficient than the previous case, so that both the variance and the bias terms in \cref{eq:conv0} achieve the same order of $\mathcal{O}\left(\tfrac{1}{T^a}\right)$. Such a choice of $b$ balances the exponentially increasing variance and exponentially decaying bias to achieve the same rate.

\subsection{Finite-time Analysis of PER-ETD($\lambda$)}\label{section:analysis2}
 
In PER-ETD($\lambda$), the update of the value function parameter is determined by the empirical emphatic operator $\widehat{\Tc}^\lambda_t (\theta)$ defined in \cref{eq:tlambda}. Thus, 
%similarly to our analysis of PER-ETD($0$), 
we first obtain the bias and variance errors of $\widehat{\Tc}^\lambda_t (\theta)$, which facilitate the analysis of the convergence rate for PER-ETD($\lambda$).
\begin{proposition}
	\label{prop:etdlambdabias}
Suppose \Cref{assumption:behaviorpolicy,assumption:markovchain} hold. Then we have 
	\begin{align*}
		\left\|\eptt\left[\widehat{\Tc}_t^\lambda(\theta_{t})\middle|\Fc_{t-1}\right] - \Tc^\lambda(\theta_{t})\right\|_2 \le C_{b,\lambda}\left(B_\phi\|\theta_t -\theta_\lambda^*\|_2 + \epsilon_{approx}\right) \xi^b,
	\end{align*}
where $\epsilon_{approx} = \|\Phi\theta_\lambda^* - V_\pi\|_\infty$ is the approximation error of the fixed point, $\xi = \max\{\chi, \gamma\}<1$, {\color{black} $B_\phi=\max_s \|\phi(s)\|_2$, and $C_{b,\lambda}$ is a constant given a fixed $\lambda$ whose exact form can be found in  proof.}
\end{proposition}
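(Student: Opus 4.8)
The plan is to compute the conditional expectation $\eptt\left[\widehat{\Tc}^\lambda_t(\theta)\,\middle|\,\Fc_{t-1}\right]$ in closed form, recognise it as a finite‑horizon truncation of $\Tc^\lambda(\theta)$, and bound the discrepancy by $\xi^b$. Conditioned on $\Fc_{t-1}$ the state $s_t^0$ is fixed and $(s_t^1,\ldots,s_t^{b+1})$ is a block of the behavior chain. I first unroll the three coupled recursions:
\begin{align*}
F_t^\tau=\sum_{k=0}^{\tau}\gamma^k\prod_{i=\tau-k}^{\tau-1}\rho_t^i,\qquad
M_t^\tau=\lambda+(1-\lambda)F_t^\tau,\qquad
e_t^b=\sum_{\tau=0}^{b}(\gamma\lambda)^{b-\tau}\Big(\prod_{i=\tau}^{b-1}\rho_t^i\Big)M_t^\tau\phi_t^\tau .
\end{align*}
Setting $\psi_\tau(s):=\eptt[e_t^\tau\,\indicator{s_t^\tau=s}\mid\Fc_{t-1}]\in\RR^d$, $h_\tau(s):=\eptt[F_t^\tau\,\indicator{s_t^\tau=s}\mid\Fc_{t-1}]$, $q_\tau(s):=\prob(s_t^\tau=s\mid\Fc_{t-1})$, and repeatedly using $\eptt[\rho_t^{\tau}\indicator{s_t^{\tau+1}=s'}\mid s_t^\tau=s]=P_\pi(s'\mid s)$, $\eptt[\rho_t^\tau\mid s_t^\tau=s]=1$ and $\eptt[\rho_t^\tau r(s_t^\tau,a_t^\tau)\mid s_t^\tau=s]=r_\pi(s)$, the matrices $\Psi_\tau$ (with rows $\psi_\tau(s)^\top$) obey $\Psi_\tau=\gamma\lambda P_\pi^{\top}\Psi_{\tau-1}+\diag(\tilde m_\tau)\Phi$ with $\tilde m_\tau:=\lambda q_\tau+(1-\lambda)h_\tau$, $h_\tau=\gamma P_\pi^{\top}h_{\tau-1}+q_\tau$ and $h_0=q_0=\delta_{s_t^0}$. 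Solving this and taking the final expectation over $\rho_t^b$ (which converts $\phi_t^{b+1}$ into $P_\pi\Phi$, leaves $\phi_t^b$ unchanged and replaces $r_t^b$ by $r_\pi$) gives the exact identity
\begin{align*}
\eptt\left[\widehat{\Tc}^\lambda_t(\theta)\,\middle|\,\Fc_{t-1}\right]=\Psi_b^{\top}(I-\gamma P_\pi)\Phi\,\theta-\Psi_b^{\top}r_\pi,\qquad
\Psi_b^{\top}=\sum_{k=0}^{b}\Phi^{\top}\diag(\tilde m_{b-k})(\gamma\lambda P_\pi)^k .
\end{align*}

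Comparing with $\Tc^\lambda(\theta)=\Phi^{\top}M(I-\gamma\lambda P_\pi)^{-1}[(I-\gamma P_\pi)\Phi\theta-r_\pi]$ through the Neumann series $(I-\gamma\lambda P_\pi)^{-1}=\sum_{k\ge0}(\gamma\lambda P_\pi)^k$, the bias equals $\Delta v$ with $v:=(I-\gamma P_\pi)\Phi\theta-r_\pi$ and
\begin{align*}
\Delta=\Phi^{\top}\Big[\sum_{k=0}^{b}\diag(\tilde m_{b-k}-m)(\gamma\lambda P_\pi)^k-\diag(m)(\gamma\lambda P_\pi)^{b+1}(I-\gamma\lambda P_\pi)^{-1}\Big].
\end{align*}
The Bellman identity $(I-\gamma P_\pi)V_\pi=r_\pi$ lets me write $v=(I-\gamma P_\pi)\Phi(\theta-\theta_\lambda^*)+(I-\gamma P_\pi)(\Phi\theta_\lambda^*-V_\pi)$, so $\|v\|_\infty\le(1+\gamma)(B_\phi\|\theta-\theta_\lambda^*\|_2+\epsilon_{approx})$. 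The remaining estimates are done in the $\ell_1/\ell_\infty$ pair: $\|\Phi^{\top}y\|_2\le B_\phi\|y\|_1$, $\gamma\lambda P_\pi$ is an $\ell_\infty$‑contraction, and — the key point that removes any $|\Sc|$ factor — $\|m\|_1=\lambda+\frac{1-\lambda}{1-\gamma}$ and $\|\tilde m_\tau\|_1\le\lambda+\frac{1-\lambda}{1-\gamma}$, the latter because $\eptt[\prod_{i=\tau-k}^{\tau-1}\rho_t^i\mid\Fc_{t-1}]=1$ forces $\eptt[F_t^\tau\mid\Fc_{t-1}]=\sum_{k=0}^\tau\gamma^k$. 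The truncation term of $\Delta v$ is then $O((\gamma\lambda)^{b})=O(\xi^b)$. For the first term it remains to bound $\|\tilde m_\tau-m\|_1\le\lambda\|q_\tau-d_\mu\|_1+(1-\lambda)\|h_\tau-f\|_1$: the first piece is $\le C_M\chi^\tau$ by \Cref{lemma:ergodicity}, and subtracting $h_\tau=\gamma P_\pi^{\top}h_{\tau-1}+q_\tau$ from $f=\gamma P_\pi^{\top}f+d_\mu$ gives $h_\tau-f=\sum_{j=0}^{\tau-1}(\gamma P_\pi^{\top})^{j}(q_{\tau-j}-d_\mu)+(\gamma P_\pi^{\top})^{\tau}(\delta_{s_t^0}-f)$, whence $\|h_\tau-f\|_1\le C_M\sum_{j=0}^{\tau-1}\gamma^j\chi^{\tau-j}+\gamma^\tau(1+\tfrac{1}{1-\gamma})=O((\tau+1)\xi^\tau)$. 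Summing $\sum_{k=0}^{b}(\gamma\lambda)^k\|\tilde m_{b-k}-m\|_1=O(\mathrm{poly}(b)\,\xi^b)$ and collecting constants yields the claimed bound with $C_{b,\lambda}$ polynomial in $b$ for each fixed $\lambda$.

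The case $\lambda=0$ (where $e_t^b=F_t^b\phi_t^b$ and $M=F$) reduces to the setting of \Cref{prop:biastd0}, so this is a strict generalization. The main obstacle is the first step: correctly unrolling and re‑summing the three coupled traces, tracking which importance weights turn into $P_\pi$ under conditioning and which collapse to $1$, and solving the matrix recursion so that exactly $\Phi^{\top}M(I-\gamma\lambda P_\pi)^{-1}$ — with the right diagonal weight $m=\lambda d_\mu+(1-\lambda)f$ and the right resolvent — is what $\Psi_b^{\top}$ approaches as $b\to\infty$. A secondary point requiring care is the $\ell_1/\ell_\infty$ bookkeeping that keeps every constant free of $|\Sc|$; once that scaffolding is set up, the truncation/non‑stationarity split and the geometric‑ergodicity estimate for $\|\tilde m_\tau-m\|_1$ are routine.
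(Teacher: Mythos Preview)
Your approach is essentially identical to the paper's: you introduce the same weighted traces (your $\psi_\tau,h_\tau,q_\tau$ are the paper's $\beta_\tau,f_\tau,d_{\mu,\tau}$), derive the same recursion $\Psi_\tau^\top=\gamma\lambda\Psi_{\tau-1}^\top P_\pi+\Phi^\top\diag(\tilde m_\tau)$ (their \cref{eq:iterativebeta}), obtain the same representation $\eptt[\widehat{\Tc}_t^\lambda(\theta)\mid\Fc_{t-1}]=\Psi_b^\top[(I-\gamma P_\pi)\Phi\theta-r_\pi]$, and then compare with $\Phi^\top M(I-\gamma\lambda P_\pi)^{-1}$ via the Neumann series and the same $\ell_1/\ell_\infty$ bookkeeping (their \Cref{lemma:supportinglemma}).

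The one substantive discrepancy is the constant: the paper's $C_{b,\lambda}$ is \emph{independent of $b$} (see the explicit formula at the end of their proof), whereas you end with ``$C_{b,\lambda}$ polynomial in $b$.'' The difference comes from your cruder estimates $\|h_\tau-f\|_1=O((\tau{+}1)\xi^\tau)$ and $\sum_{k\le b}(\gamma\lambda)^k\|\tilde m_{b-k}-m\|_1=O(\mathrm{poly}(b)\,\xi^b)$. The paper avoids these polynomial factors via \Cref{lemma:series}: if $p\neq q$ then $\sum_{k<n}p^kq^{n-k}\le\frac{1}{|p-q|}\xi^n$. Applying this twice (once for $\gamma$ vs.\ $\chi$ in the $h_\tau$ bound, once for $\gamma\lambda$ vs.\ $\xi$ in the outer sum) yields a $b$-free constant, at the price of implicitly assuming $\gamma\neq\chi$ and $\gamma\lambda\neq\xi$. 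Your poly$(b)$ version is harmless downstream since $b=O(\log T)$, but as written it does not quite match the proposition's statement that $C_{b,\lambda}$ is a constant; invoking \Cref{lemma:series} closes that gap.
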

The above proposition shows that the bias error of the empirical emphatic operator $\widehat{\Tc}^\lambda_t (\theta)$ in PER-ETD($\lambda$) decays exponentially fast as $b$ increases, because $\xi = \max\left\{\gamma, \chi\right\} <1$.

\begin{proposition}\label{prop:varianceofetdlambda}
Suppose \Cref{assumption:behaviorpolicy,assumption:markovchain} hold. Then we have  	$\eptt\Big[\left\|\widehat{\Tc}_t^\lambda(\theta_{t})\right\|_2^2\Big|\Fc_{t-1}\Big]\le \sigma^2_\lambda$,  where $\sigma^2_\lambda= \Oc\left(\rho_{max}^b\right)$.  
\end{proposition}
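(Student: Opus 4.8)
The plan is to reduce the claimed bound to an estimate of $\mathbb{E}\big[(\rho_t^b)^2\|e_t^b\|_2^2\mid\Fc_{t-1}\big]$, unroll the three coupled recursions that define $e_t^b$ into an explicit sum of products of importance ratios, and then take the conditional expectation using only the two elementary facts $\mathbb{E}_\mu[\rho(s,a)\mid s]=1$ and $\mathbb{E}_\mu[\rho(s,a)^2\mid s]\le\rho_{max}$ (both valid under \Cref{assumption:behaviorpolicy}). This mirrors the proof of \Cref{prop:variancetd0}, with the scalar follow-on trace $F_t^b$ there replaced by the eligibility vector $e_t^b$ here. Concretely, since $\Theta$ is a bounded closed convex set we have $\|\theta_t\|_2\le R_\Theta:=\sup_{\theta\in\Theta}\|\theta\|_2$; since $\Sc,\Ac$ are finite, $|r(s,a)|\le r_{max}$; and $\|\phi(s)\|_2\le B_\phi$. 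Plugging these into \cref{eq:tlambda} gives $\|\widehat{\Tc}_t^\lambda(\theta_t)\|_2\le \rho_t^b\|e_t^b\|_2\big((1+\gamma)B_\phi R_\Theta+r_{max}\big)$, so it suffices to bound $\mathbb{E}\big[(\rho_t^b)^2\|e_t^b\|_2^2\mid\Fc_{t-1}\big]$.

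Next I would unroll. Iterating $F_t^\tau=1+\gamma\rho_t^{\tau-1}F_t^{\tau-1}$ gives $F_t^\tau=\sum_{m=0}^{\tau}\prod_{j=\tau-m}^{\tau-1}\gamma\rho_t^j$, hence $M_t^\tau=\lambda+(1-\lambda)F_t^\tau$ is a nonnegative combination of such products; and iterating $e_t^\tau=\gamma\lambda\rho_t^{\tau-1}e_t^{\tau-1}+M_t^\tau\phi_t^\tau$ gives
\[
 e_t^b=\sum_{\tau=0}^{b}\Big(\prod_{k=\tau}^{b-1}\gamma\lambda\rho_t^k\Big)M_t^\tau\phi_t^\tau .
\]
The key structural point is that the ratio block from the $e$-propagation (indices $\tau,\dots,b-1$) and the ratio block from the $F_t^\tau$ expansion (indices $\tau-m,\dots,\tau-1$) concatenate into a single contiguous block $\rho_t^{b-\ell},\dots,\rho_t^{b-1}$ of some length $\ell\in\{0,\dots,b\}$. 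Collecting terms by $\ell$ and using $\gamma,\lambda\le 1$ and $\|\phi_t^\tau\|_2\le B_\phi$, one obtains
\[
 \|e_t^b\|_2\le B_\phi\sum_{\ell=0}^{b}(\ell+1)\,\gamma^{\ell}\prod_{j=b-\ell}^{b-1}\rho_t^j ,
\]
with the convention that the empty product ($\ell=0$) equals $1$.

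Then I would square, multiply by $(\rho_t^b)^2$, and take $\mathbb{E}[\cdot\mid\Fc_{t-1}]$. Expanding the square and assuming $\ell_1\le\ell_2$, the generic cross term is $\prod_{j=b-\ell_2}^{b-\ell_1-1}\rho_t^j\cdot\prod_{j=b-\ell_1}^{b}(\rho_t^j)^2$, where the last factor now also absorbs $(\rho_t^b)^2$. Peeling the ratios off one at a time from the largest index downward (tower property over $\sigma(\Fc_{t-1},s_t^0,a_t^0,\dots,s_t^j)$) and using $\mathbb{E}_\mu[\rho\mid s]=1$ and $\mathbb{E}_\mu[\rho^2\mid s]=\sum_a\pi(a\mid s)\rho(s,a)\le\rho_{max}$, we get $\mathbb{E}\big[\prod_j(\rho_t^j)^{p_j}\mid\Fc_{t-1}\big]\le\rho_{max}^{\sum_j(p_j-1)}$ for exponents $p_j\in\{1,2\}$; here exactly $\ell_1+1$ of the ratios are squared. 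Hence
\[
 \mathbb{E}\big[(\rho_t^b)^2\|e_t^b\|_2^2\mid\Fc_{t-1}\big]\le B_\phi^2\sum_{0\le\ell_1\le\ell_2\le b}(\ell_1+1)(\ell_2+1)\,\gamma^{\ell_1+\ell_2}\rho_{max}^{\ell_1+1}.
\]
The inner sum over $\ell_2$ is bounded by $\sum_{\ell_2\ge 0}(\ell_2+1)\gamma^{\ell_2}=(1-\gamma)^{-2}$, and the outer sum is $\rho_{max}\sum_{\ell_1=0}^{b}(\ell_1+1)(\gamma\rho_{max})^{\ell_1}$, which is $\mathcal{O}(1)$ if $\gamma\rho_{max}<1$ and $\mathcal{O}\big(b(\gamma\rho_{max})^b\big)$ if $\gamma\rho_{max}\ge 1$; since $b\gamma^b=\mathcal{O}(1)$, in both cases this is $\mathcal{O}(\rho_{max}^b)$. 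This proves the proposition with an explicit $\sigma_\lambda^2$ depending on $B_\phi,R_\Theta,r_{max},\gamma,\lambda$.

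The main obstacle I anticipate is the bookkeeping in the unrolling step: one must verify that the importance-ratio blocks generated by the intertwined $F$-, $M$-, and $e$-recursions always merge into a single contiguous block ending at index $b-1$, so that after squaring the overlap is precisely the prefix of length $\ell_1$ (plus the external $(\rho_t^b)^2$), and hence only $\mathcal{O}(b)$ ratios are squared. The reason the exponent of $\rho_{max}$ stays linear in $b$ — rather than blowing up — is that an \emph{un}-squared ratio has conditional mean exactly $1$: it merely re-weights the intermediate state distribution, which is irrelevant once the features are already bounded, so such ratios contribute no power of $\rho_{max}$ at all. The deterministic geometric factor $\gamma^\ell$ is what absorbs the polynomial-in-$b$ coefficients $(\ell+1)$ so that the final bound is a clean $\mathcal{O}(\rho_{max}^b)$.
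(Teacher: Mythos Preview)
Your proof is correct and takes a genuinely different route from the paper's. The paper defines state-indexed second-moment quantities $\Delta_b(s)=\prob(s_t^b=s\mid\Fc_{t-1})\,\eptt[(e_t^b)^\top e_t^b\mid s_t^b=s,\Fc_{t-1}]$ together with an auxiliary mixed moment $\delta_b(s)=\prob(s_t^b=s\mid\Fc_{t-1})\,\eptt[F_t^b e_t^b\mid s_t^b=s,\Fc_{t-1}]$, derives coupled one-step recursions linking $\Delta_b$ to $\Delta_{b-1},\delta_{b-1},r_{b-1},\beta_{b-1},f_{b-1}$, unrolls these matrix recursions, and bounds the resulting traces via \Cref{lemma:supportinglemma2}; the case analysis on $\gamma^2\rho_{max}$ versus $1$ from \Cref{prop:variancetd0} is then carried along throughout. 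You instead unroll $e_t^b$ completely into an explicit nonnegative combination of feature vectors whose scalar coefficients are contiguous products $\prod_{j=b-\ell}^{b-1}\rho_t^j$, square, and peel the ratios one at a time via the tower property. Your observation that the two ratio blocks (from the $e$-propagation over indices $\tau,\dots,b-1$ and from the $F_t^\tau$-expansion over indices $\tau-m,\dots,\tau-1$) always concatenate into a single contiguous block ending at index $b-1$ is exactly the structural fact the paper exploits implicitly through its state-indexed bookkeeping; you make it explicit and thereby avoid all the matrix machinery. The paper's approach yields more explicit constants and tracks the $\lambda$-dependence and the three $\gamma^2\rho_{max}$ regimes separately; your approach is shorter, sidesteps \Cref{lemma:supportinglemma2} and the case split entirely, and makes transparent why only $\ell_1+1=\mathcal{O}(b)$ ratios end up squared. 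One minor bookkeeping slip: your displayed sum $\sum_{0\le\ell_1\le\ell_2\le b}$ omits the symmetric pairs with $\ell_1>\ell_2$, so the bound should carry an additional factor of $2$; this does not affect the $\mathcal{O}(\rho_{max}^b)$ conclusion.
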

Compared with \Cref{prop:variancetd0} of PER-ETD($0$), \Cref{prop:varianceofetdlambda} indicates that ETD($\lambda$) has a larger variance, which always increases exponentially with $b$ when $\rho_{max}>1$.
% right after the mismatch between the behavior and target policies occurs (i.e., $\rho_{max}>1$).
This is due to the fact that 
the eligible trace $e_t^b$ carries the historical information and is less stable than $\phi_t^b$.
\begin{theorem}\label{thm:etdlambda}
Suppose \Cref{assumption:behaviorpolicy,assumption:markovchain} hold. Consider PER-ETD($\lambda$) specified in \Cref{alg:betdlambda}.  {\color{black} Let the stepsize $\eta_t = \Oc\left(\tfrac{1}{t}\right)$ and suppose the period length $b$ and the projection set $\Theta$ are properly chosen (see \Cref{app:proofth2} for the precise conditions).} Then the output $\theta_T$ of PER-ETD($\lambda$) satisfies
$\eptt\left[\|\theta_T - \theta_\lambda^*\|_2^2\right] \le \mathcal{O}\left(\tfrac{1}{T^{a_{\lambda}}}\right)$,
where $a_{\lambda} =\tfrac{1}{{\log_{1/\xi}(\rho_{max})+1}}$.  PER-ETD($\lambda$) attains an $\epsilon$-accurate solution with $\tilde{\mathcal{O}}\left(\frac{1}{\epsilon^{1/a_{\lambda}}}\right)$ samples. 
\end{theorem}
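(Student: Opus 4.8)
The plan is to run the standard projected stochastic-approximation argument, using \Cref{prop:etdlambdabias} and \Cref{prop:varianceofetdlambda} to control the bias and second moment of the empirical emphatic operator $\widehat{\Tc}^\lambda_t$, and then choosing the period $b$ to balance the (exponentially growing) variance against the (exponentially decaying) bias. I begin by recording the deterministic structure: since $\Tc^\lambda$ is affine and $\Tc^\lambda(\theta_\lambda^*)=0$, we have $\Tc^\lambda(\theta)=A_\lambda(\theta-\theta_\lambda^*)$ with $A_\lambda=\Phi^\top M(I-\gamma\lambda P_\pi)^{-1}(I-\gamma P_\pi)\Phi$. The matrix $A_\lambda$ is positive definite — this is the key-matrix property underlying the convergence of ETD($\lambda$), established in \cite{sutton2016emphatic,yu2015convergence,hallak2016generalized} — so its symmetric part has a smallest eigenvalue $\beta>0$, giving the strong-monotonicity bound $\inner{\theta-\theta_\lambda^*}{\Tc^\lambda(\theta)}\ge\beta\|\theta-\theta_\lambda^*\|_2^2$. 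I also use that the \emph{properly chosen} set $\Theta$ is closed and convex, contains $\theta_\lambda^*$, and is bounded with $\|\theta-\theta_\lambda^*\|_2\le R_\Theta$ for all $\theta\in\Theta$; hence $\Pi_\Theta$ is nonexpansive toward $\theta_\lambda^*$ and $\|\theta_t-\theta_\lambda^*\|_2\le R_\Theta$ along the whole trajectory.

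Next I derive the one-step recursion. Writing $\Delta_t\defeq\theta_t-\theta_\lambda^*$, nonexpansiveness of $\Pi_\Theta$ gives $\|\Delta_{t+1}\|_2^2\le\|\Delta_t\|_2^2-2\eta_t\inner{\Delta_t}{\widehat{\Tc}^\lambda_t(\theta_t)}+\eta_t^2\|\widehat{\Tc}^\lambda_t(\theta_t)\|_2^2$. Since $\theta_t$ is $\Fc_{t-1}$-measurable while the transitions drawn in iteration $t$ are fresh given $\Fc_{t-1}$, I take $\eptt[\,\cdot\mid\Fc_{t-1}]$: the last term is at most $\sigma_\lambda^2\eta_t^2$ by \Cref{prop:varianceofetdlambda}, and for the cross term I split $\eptt[\widehat{\Tc}^\lambda_t(\theta_t)\mid\Fc_{t-1}]=\Tc^\lambda(\theta_t)+\zeta_t$ with $\|\zeta_t\|_2\le C_{b,\lambda}(B_\phi\|\Delta_t\|_2+\epsilon_{approx})\xi^b$ by \Cref{prop:etdlambdabias}, so that $\inner{\Delta_t}{\Tc^\lambda(\theta_t)+\zeta_t}\ge(\beta-C_{b,\lambda}B_\phi\xi^b)\|\Delta_t\|_2^2-C_{b,\lambda}\epsilon_{approx}R_\Theta\,\xi^b$, the last step using $\|\Delta_t\|_2\le R_\Theta$. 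Requiring (one of the conditions hidden in ``$b$ properly chosen'') that $b$ be large enough that $C_{b,\lambda}B_\phi\xi^b\le\beta/2$, and taking full expectations, I obtain
\[
\eptt\!\left[\|\Delta_{t+1}\|_2^2\right]\;\le\;\Big(1-\tfrac{\beta}{2}\eta_t\Big)\eptt\!\left[\|\Delta_t\|_2^2\right]+c_1\xi^b\eta_t+\sigma_\lambda^2\eta_t^2,
\]
where $c_1=\Oc(\epsilon_{approx}R_\Theta)$ (up to the $C_{b,\lambda}$ factor).

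It remains to unroll this recursion and optimize $b$. Take $\eta_t=c/t$ with $c$ chosen so that $\tfrac{\beta}{2}\eta_t=p/t$ for a fixed $p>1$, and use $\prod_{s=t+1}^{T}(1-p/s)\lesssim(t/T)^p$ (the initial $\Oc(1)$ iterations where $1-p/t<0$ are absorbed via $\|\Delta_t\|_2\le R_\Theta$). Then the three terms in the recursion contribute $\Oc(\|\Delta_0\|_2^2/T^p)$, $\Oc(c_1\xi^b)$ (since $\sum_t\xi^b\eta_t(t/T)^p\lesssim\xi^b$), and $\Oc(\sigma_\lambda^2/T)$ (since $p>1$), so $\eptt[\|\theta_T-\theta_\lambda^*\|_2^2]\le\Oc(T^{-p})+\Oc(\xi^b)+\Oc(\sigma_\lambda^2/T)$. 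Finally, plugging in $\sigma_\lambda^2=\Oc(\rho_{max}^b)$ from \Cref{prop:varianceofetdlambda} and taking $b=\ceiling{\log T/(\log\rho_{max}+\log(1/\xi))}$ (and at least the threshold above), a direct computation yields $\xi^b\le T^{-a_\lambda}$ and $\rho_{max}^b/T\le T^{-a_\lambda}$ with $a_\lambda=\frac{\log(1/\xi)}{\log\rho_{max}+\log(1/\xi)}=\frac{1}{\log_{1/\xi}(\rho_{max})+1}$; since $p>1\ge a_\lambda$ the $T^{-p}$ term is dominated, and we conclude $\eptt[\|\theta_T-\theta_\lambda^*\|_2^2]=\Oc(T^{-a_\lambda})$. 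The sample-complexity claim follows because iteration $t$ consumes $b+\Oc(1)$ transitions, so $\Oc(Tb)=\tilde{\Oc}(T)$ samples realize $T$ iterations; setting $T^{-a_\lambda}\le\epsilon$ gives $\tilde{\Oc}(\epsilon^{-1/a_\lambda})$ samples.

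Two points deserve the most care. The only non-elementary ingredient is the positive definiteness of the ETD($\lambda$) key matrix $A_\lambda$ (equivalently the existence of the constant $\beta>0$), which I would import from the ETD literature rather than reprove here. The more delicate bookkeeping is to verify that the constant $C_{b,\lambda}$ from \Cref{prop:etdlambdabias} grows slowly enough in $b$ that the requirement $C_{b,\lambda}B_\phi\xi^b\le\beta/2$ is compatible with the logarithmic choice of $b$, and that $c_1\xi^b$ is genuinely $\tilde{\Oc}(T^{-a_\lambda})$; this is precisely where the $\Oc(\rho_{max}^b)$ variance of PER-ETD($\lambda$) — as opposed to the $\Oc(b)$ variance available to PER-ETD($0$) in its benign regime — forces the slower exponent $a_\lambda<1$ even for arbitrarily mild policy mismatch.
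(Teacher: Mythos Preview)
Your proposal is correct and lands on exactly the same recursion shape and the same choice of $b$ as the paper: bias $\Oc(\xi^b)$ from \Cref{prop:etdlambdabias}, second moment $\Oc(\rho_{max}^b)$ from \Cref{prop:varianceofetdlambda}, and $b=\Theta\!\big(\log T/(\log\rho_{max}+\log(1/\xi))\big)$ to balance them, yielding $\Oc(T^{-a_\lambda})$.

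The only real difference from the paper is in how the one-step inequality is obtained. You use the direct nonexpansiveness bound $\|\Delta_{t+1}\|_2^2\le\|\Delta_t\|_2^2-2\eta_t\inner{\Delta_t}{\widehat{\Tc}^\lambda_t(\theta_t)}+\eta_t^2\|\widehat{\Tc}^\lambda_t(\theta_t)\|_2^2$ and then apply strong monotonicity at $\theta_t$. The paper instead invokes the three-point lemma (\Cref{lemma:threepointlemma}), writes $\inner{\widehat{\Tc}^\lambda_t(\theta_t)}{\theta_{t+1}-\theta_\lambda^*}$ in terms of $\inner{\Tc^\lambda(\theta_{t+1})}{\theta_{t+1}-\theta_\lambda^*}$, and therefore needs the Lipschitz constant $L_\lambda$ of $\Tc^\lambda$ (\Cref{lemma:lipschitz}); this is why its stepsize carries the offset $t_\lambda=8L_\lambda^2/\mu_\lambda^2$ and its telescoping uses the weights $\alpha_t=(t+t_\lambda+1)(t+t_\lambda+2)$. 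Your route is the more elementary projected-SA argument and avoids the Lipschitz detour entirely; the paper's route is slightly heavier but gives the same $\Oc(T^{-2})$ decay of the initialization term and the same dominant $\Oc(\sigma_\lambda^2/T)+\Oc(\xi^b)$ error. Either is fine.

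On your two flagged concerns: the monotonicity constant you call $\beta$ is exactly the paper's $\mu_\lambda$ from \Cref{lemma:stronglyconvex}, proved there from positive definiteness of the ETD($\lambda$) key matrix; and the constant $C_{b,\lambda}$, despite its subscript, does \emph{not} depend on $b$ (see its explicit form at the end of the proof of \Cref{prop:etdlambdabias}), so your threshold $C_{b,\lambda}B_\phi\xi^b\le\beta/2$ is a genuine lower bound on $b$ and is compatible with the logarithmic choice.
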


\Cref{thm:etdlambda} indicates that PER-ETD($\lambda$) converges to the optimal fixed point $\theta_\lambda^*$ determined by the infinite-length update of the follow-on trace. Furthermore, PER-ETD($\lambda$) converges at the rate of $\tilde{\mathcal{O}}\left(\frac{1}{T^{a_{\lambda}}}\right)$ which is slower than PER-ETD($0$) (as $a_{\lambda}<a$) due to the larger variance of PER-ETD($\lambda$).

{\bf Bias and variance tradeoff:} We next explain how the period length $b$ achieves the best tradeoff between the bias and variance errors and thus yields polynomial sample efficiency. The proof of \Cref{thm:etdlambda} shows that the output $\theta_T$ of PER-ETD($\lambda$) satisfies the following convergence rate:
	\begin{align}\label{eq:convlambda}
		\eptt\left[\|\theta_T - \theta_\lambda^*\|_2^2\right]\le \mathcal{O}\left(\tfrac{\|\theta_0 - \theta_\lambda^*\|_2^2}{T^2}\right) +\underbrace{ \mathcal{O}\left(\tfrac{\sigma^2_\lambda}{T}\right)}_\text{variance} +\underbrace{ \mathcal{O}\left(\tfrac{\xi^{2b}}{T}\right) + \mathcal{O}\left(\xi^{b}\right)}_\text{bias}.
	\end{align}
In \cref{eq:convlambda}, $\sigma^2_\lambda$ in the variance term takes the form $\sigma^2_\lambda= \Oc\left( \rho_{max}^b\right)$ as given in \Cref{prop:varianceofetdlambda}. We need to set $b =\mathcal{O}\left(\tfrac{\log(T)}{\log(\rho_{max})  + \log(1/\xi)}\right)$
%, where the increase with $\log T$ has a smaller coefficient than the previous case, 
so that both the variance and the bias terms in \cref{eq:convlambda} achieve the same order of $\mathcal{O}\left(\frac{1}{T^{a_\lambda}}\right)$. Thus, such a choice of $b$ balances the exponentially increasing variance and exponentially decaying bias to achieve the same rate.

{\bf \color{black}{Impact of the eligible trace (via the parameter $\lambda$)} on error bound:}  
It has been shown that with the aid of eligible trace, both TD and ETD achieve smaller error bounds \citep{sutton2018reinforcement,hallak2016generalized}. However, this is not always the case for PER-ETD. Since PER-ETD applies a finite period length $b$, the fixed point of PER-ETD($1$) is generally not the same as the projection of the ground truth to the function approximation space. Thus, as $\lambda$ changes from 0 to 1, depending on the geometrical locations of the fixed points of PER-ETD($\lambda$) for all $\lambda$ (determined by chosen features) with respect to the ground truth projection, any value $0\leq \lambda\leq 1$ may achieve the smallest bias error. We illustrate this further by experiments in \Cref{sec:performanceofetdlambda}.

\section{Experiments}\label{sec:exp} 
\subsection{Performance of PER-ETD(0)}\label{sec:expetd}
We consider the BAIRD counter-example.  The details of the MDP setting and behavior and target policies could be found in \Cref{app:settings}. We adopt a constant learning rate  for both PER-ETD($0$) and PER-ETD($\lambda$) and all experiments take an average over $20$ random initialization. We set the stepsize $\eta=2^{-9}$ for all algorithms for fair comparison. For PER-ETD(0), we adopt one-dimensional features $\Phi_1= (0.35, 0.35, 0.35, 0.35, 0.35, 0.35, 0.37)^\top$. 
The ground truth value function $V_\pi = (10, 10, 10, 10, 10, 10, 10)^\top$ and does not lie inside the linear function class.  
\begin{figure}[!ht]
     \centering
\begin{tabular}{cc}
\includegraphics[height=3cm]{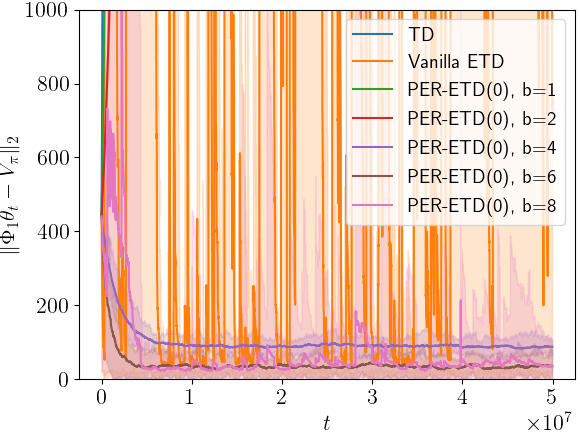} 
&\includegraphics[height=3cm]{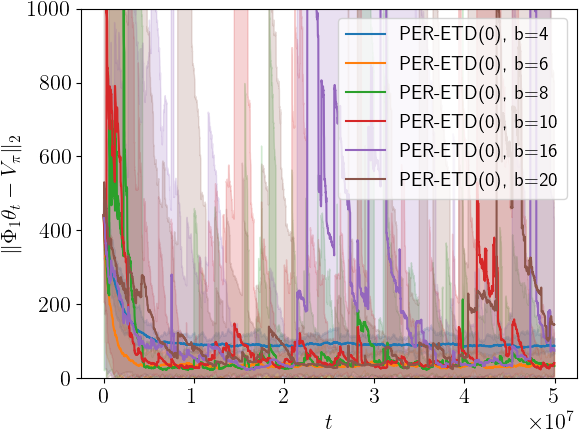}\\
(a) Comparison of TD, ETD, PER-ETD(0) 
&(b) Tradeoff between bias and variance by $b$
\end{tabular}
\caption{Performance of PER-ETD(0) and comparison}\label{fig:etd0}
\end{figure}

In \Cref{fig:etd0}(a), we compare the performance of of TD, vanilla ETD($0$) and PER-ETD($0$) with $b = 2, 4, 8$ in terms of the distance between the ground truth and the learned value functions. It can be observed that our proposed PER-ETD(0) converges close to the ground truth at a properly chosen period length such as $b=4$ and $b=8$, whereas TD diverges due to no treatment on off-policy data historically, and ETD ($0$) also diverges due to the very large variance. 

In \Cref{fig:etd0}(b), we plot how the bias and the variance of PER-ETD(0) change as the period length $b$ changes. Clearly, small $b$ (e.g., $b=4$) yields a small variance but a large bias. Then as $b$ increases from $4$ to $6$, bias is substantially reduced. As $b$ continues to increase from $8$ to $20$, there is a significant increase in variance. This demonstrates a clear tradeoff between the bias and variance as we capture in our theory.

\subsection{Performance of PER-ETD($\lambda$)}\label{sec:performanceofetdlambda}

 \begin{figure}[!ht]
     \centering
     \begin{subfigure}{0.32\textwidth}
     \centering
     \includegraphics[height=3cm]{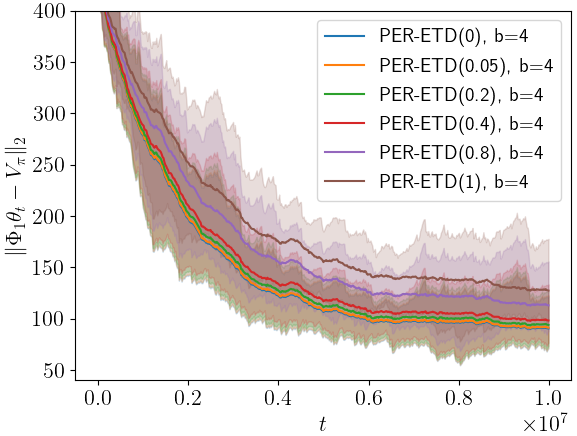}
     \caption{Feature $\Phi_1$ }
    \end{subfigure}
    \hfill
     \begin{subfigure}{0.32\textwidth}
     \centering
     \includegraphics[height=3cm]{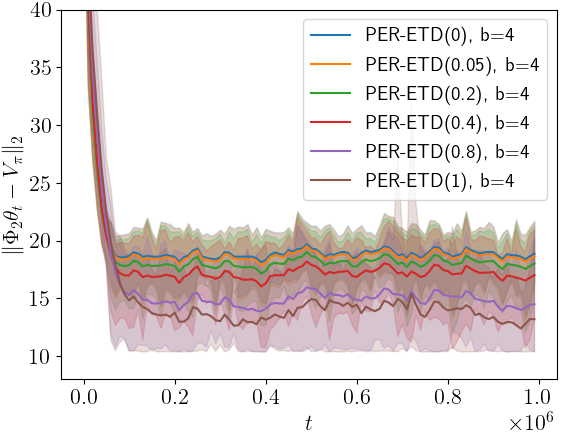}
     \caption{Feature $\Phi_2$ }
    \end{subfigure}
    \hfill
    \begin{subfigure}{0.32\textwidth}
     \centering
     \includegraphics[height=3cm]{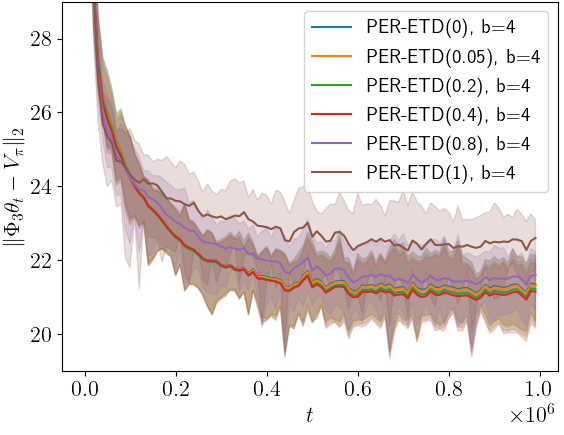} 
     \caption{Feature $\Phi_3$ }
    \end{subfigure} 
\caption{Performance of PER-ETD($\lambda$) and dependence on features}\label{fig:etdlambda}
\end{figure}
 \begin{figure}[!ht]
     \centering
     \begin{subfigure}{0.32\textwidth}
     \centering
     \includegraphics[height=3cm]{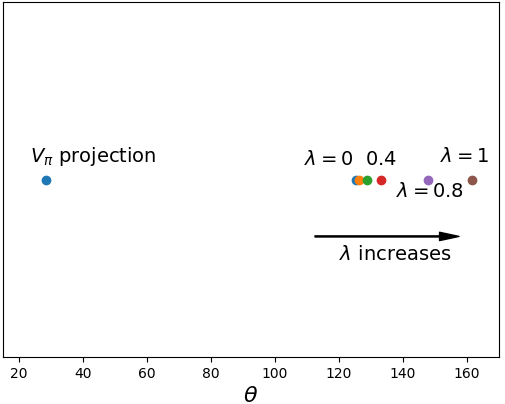}
     \caption{Feature $\Phi_1$ }
    \end{subfigure}
    \hfill
     \begin{subfigure}{0.32\textwidth}
     \centering
     \includegraphics[height=3cm]{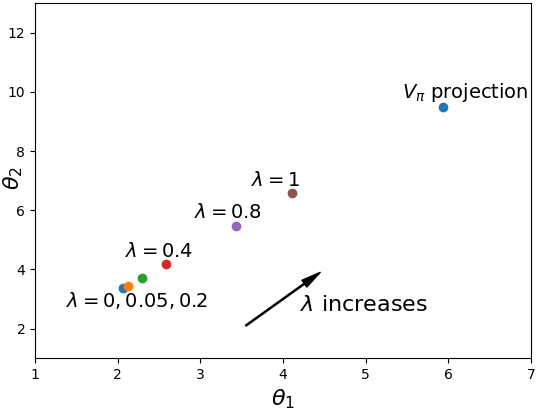}
     \caption{Feature $\Phi_2$ }
    \end{subfigure}
    \hfill
    \begin{subfigure}{0.32\textwidth}
     \centering
     \includegraphics[height=3cm]{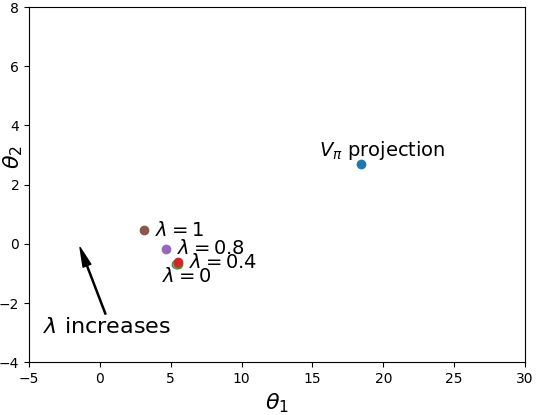}
     \caption{Feature $\Phi_3$ }
    \end{subfigure}
\caption{Fixed points of PER-ETD($\lambda$) and project of the value function. {\color{black} (a): $\theta$ lies in 1-dimensional Euclidean space $\mathbb{R}^1$ along horizontal direction; (b), (c): $\theta$ lies in 2-dimensional Euclidean space $\mathbb{R}^2$.} }\label{fig:etdlambdafixedpoints}
\end{figure} 
We next focus on PER-ETD($\lambda$) under the same experiment setting as in \Cref{sec:expetd} and study how $\lambda$ affects the performance. We conduct our experiments under three features $\Phi_1$, $\Phi_2,$ and $\Phi_3$ specified in \Cref{app:features}. \Cref{fig:etdlambda} shows how the bias error with respect to the ground truth changes as $\lambda$ increases under the three chosen features. As shown in \Cref{fig:etdlambda} (a), (b), and (c), $\lambda=0$, 1, and some value between $0$ and $1$ respectively achieve the smallest error under the corresponding feature. This is in contrast to the general understanding that $\lambda=1$ typically achieves the smallest error. In fact, each case can be explained by the plot in  \Cref{fig:etdlambdafixedpoints} under the same feature.
%the geometry shown in the plot under the same feature . 
Each plot in \Cref{fig:etdlambdafixedpoints} illustrates how the fixed points of PER-ETD($\lambda$) are located with respect to the ground truth projection (as $V_\pi$ projection) for $b=4$. Since the period length $b$ is finite, the fixed point of PER-ETD($1$) is not located at the same point as the ground truth projection. The geometric locations of the fixed points of PER-ETD($\lambda$) for $0 \leq \lambda \leq 1$ are determined by chosen features. The bias error corresponds to the distance between the fixed point of PER-ETD($\lambda$) and the $V_\pi$ projection. Then under each feature, the value of $\lambda$ that attains the smallest error with respect to the $V_\pi$ projection can be readily seen from the plot in \Cref{fig:etdlambdafixedpoints}. 
For example, under the feature $\Phi_3$, \Cref{fig:etdlambdafixedpoints} (c) suggests that neither $\lambda=0$ nor $\lambda=1$, but some $ \lambda$ between 0 and 1 achieves the smallest error. This explains the result in \Cref{fig:etdlambda} (c) that $\lambda=0.4$ achieves the smallest error among other curves. 

As a summary, our experiment suggests that the best $\lambda$, under which PER-ETD($\lambda$) attains the smallest error, depends on the geometry of the problem determined by chosen features. In practice, if PER-ETD($\lambda$) is used as a critic in policy optimization problems, $\lambda$ may be tuned via the final reward achieved by the algorithm.

\section{Conclusion}
In this paper, we proposed a novel PER-ETD algorithm, which uses a periodic restart technique to control the variance of follow-on trace update. Our analysis shows that by selecting the period length properly, both bias and variance of PER-ETD vanishes sublinearly with the number of iterations, leading to the polynomial sample efficiency to the desired unique fixed point of ETD, whereas ETD requires exponential sample complexity. Our experiments verified the advantage of PER-ETD against both TD and ETD. Moreover, our experiments of PER-ETD($\lambda$) illustrated that under the finite period length in practice, the best $\lambda$ that achieves the smallest bias error is feature dependent. We anticipate that PER-ETD can be applied to various off-policy optimal control algorithms such as actor-critic algorithms and multi-agent reinforcement learning algorithms.

\section*{Acknowledgment}
     The work was supported in part by the U.S. National Science Foundation under the grants CCF-1761506 and CNS-2112471.

\bibliography{references}

\begin{thebibliography}{36}
\providecommand{\natexlab}[1]{#1}
\providecommand{\url}[1]{\texttt{#1}}
\expandafter\ifx\csname urlstyle\endcsname\relax
  \providecommand{\doi}[1]{doi: #1}\else
  \providecommand{\doi}{doi: \begingroup \urlstyle{rm}\Url}\fi

\bibitem[Baird(1995)]{baird1995residual}
Leemon Baird.
\newblock Residual algorithms: Reinforcement learning with function
  approximation.
\newblock In \emph{Machine Learning}, pp.\  30--37. Elsevier, 1995.

\bibitem[Bhandari et~al.(2018)Bhandari, Russo, and Singal]{bhandari2018finite}
Jalaj Bhandari, Daniel Russo, and Raghav Singal.
\newblock A finite time analysis of temporal difference learning with linear
  function approximation.
\newblock In \emph{Proc. Annual Conference on Learning Theory (COLT)}, pp.\
  1691--1692. PMLR, 2018.

\bibitem[Chen et~al.(2019)Chen, Zhang, Doan, Maguluri, and
  Clarke]{chen2019performance}
Zaiwei Chen, Sheng Zhang, Thinh~T Doan, Siva~Theja Maguluri, and John-Paul
  Clarke.
\newblock Performance of q-learning with linear function approximation:
  Stability and finite-time analysis.
\newblock \emph{arXiv preprint arXiv:1905.11425}, 2019.

\bibitem[Dalal et~al.(2018{\natexlab{a}})Dalal, Sz{\"o}r{\'e}nyi, Thoppe, and
  Mannor]{dalal2018finite}
Gal Dalal, Bal{\'a}zs Sz{\"o}r{\'e}nyi, Gugan Thoppe, and Shie Mannor.
\newblock Finite sample analyses for td (0) with function approximation.
\newblock In \emph{Proc. AAAI Conference on Artificial Intelligence (AAAI)},
  2018{\natexlab{a}}.

\bibitem[Dalal et~al.(2018{\natexlab{b}})Dalal, Thoppe, Sz{\"o}r{\'e}nyi, and
  Mannor]{dalal2018finite2}
Gal Dalal, Gugan Thoppe, Bal{\'a}zs Sz{\"o}r{\'e}nyi, and Shie Mannor.
\newblock Finite sample analysis of two-timescale stochastic approximation with
  applications to reinforcement learning.
\newblock In \emph{Proc. Annual Conference on Learning Theory (COLT)}, pp.\
  1199--1233. PMLR, 2018{\natexlab{b}}.

\bibitem[Dayan(1992)]{dayan1992convergence}
Peter Dayan.
\newblock The convergence of {TD} ($\lambda$) for general $\lambda$.
\newblock \emph{Machine learning}, 8\penalty0 (3-4):\penalty0 341--362, 1992.

\bibitem[Dayan \& Sejnowski(1994)Dayan and Sejnowski]{dayan1994td}
Peter Dayan and Terrence~J Sejnowski.
\newblock {TD} ($\lambda$) converges with probability 1.
\newblock \emph{Machine Learning}, 14\penalty0 (3):\penalty0 295--301, 1994.

\bibitem[Ghiassian et~al.(2016)Ghiassian, Rafiee, and
  Sutton]{ghiassian2017first}
Sina Ghiassian, Banafsheh Rafiee, and Richard~S Sutton.
\newblock A first empirical study of emphatic temporal difference learning.
\newblock In \emph{Proc. Advances in Neural Information Processing Systems
  (NeurIPS)}, Continual Learning and Deep Networks workshop, 2016.

\bibitem[Gupta et~al.(2019)Gupta, Srikant, and Ying]{gupta2019finite}
Harsh Gupta, R~Srikant, and Lei Ying.
\newblock Finite-time performance bounds and adaptive learning rate selection
  for two time-scale reinforcement learning.
\newblock \emph{Proc. Advances in Neural Information Processing Systems
  (NeurIPS)}, 32:\penalty0 4704--4713, 2019.

\bibitem[Hallak et~al.(2016)Hallak, Tamar, Munos, and
  Mannor]{hallak2016generalized}
Assaf Hallak, Aviv Tamar, R{\'e}mi Munos, and Shie Mannor.
\newblock Generalized emphatic temporal difference learning: Bias-variance
  analysis.
\newblock In \emph{Proc. AAAI Conference on Artificial Intelligence (AAAI)},
  2016.

\bibitem[Hu \& Syed(2019)Hu and Syed]{hu2019characterizing}
Bin Hu and Usman~Ahmed Syed.
\newblock Characterizing the exact behaviors of temporal difference learning
  algorithms using markov jump linear system theory.
\newblock \emph{Proc. Advances in Neural Information Processing Systems
  (NeurIPS)}, 2019.

\bibitem[Imani et~al.(2018)Imani, Graves, and White]{imani2018off}
Ehsan Imani, Eric Graves, and Martha White.
\newblock An off-policy policy gradient theorem using emphatic weightings.
\newblock \emph{Proc. Advances in Neural Information Processing Systems
  (NeurIPS)}, 2018.

\bibitem[Jaakkola et~al.(1994)Jaakkola, Jordan, and
  Singh]{jaakkola1994convergence}
Tommi Jaakkola, Michael~I Jordan, and Satinder~P Singh.
\newblock On the convergence of stochastic iterative dynamic programming
  algorithms.
\newblock \emph{Neural computation}, 6\penalty0 (6):\penalty0 1185--1201, 1994.

\bibitem[Jiang et~al.(2021)Jiang, Zhang, Chelu, White, and van
  Hasselt]{jiang2021learning}
Ray Jiang, Shangtong Zhang, Veronica Chelu, Adam White, and Hado van Hasselt.
\newblock Learning expected emphatic traces for deep {RL}.
\newblock \emph{arXiv preprint arXiv:2107.05405}, 2021.

\bibitem[Kaledin et~al.(2020)Kaledin, Moulines, Naumov, Tadic, and
  Wai]{kaledin2020finite}
Maxim Kaledin, Eric Moulines, Alexey Naumov, Vladislav Tadic, and Hoi-To Wai.
\newblock Finite time analysis of linear two-timescale stochastic approximation
  with markovian noise.
\newblock In \emph{Proc. Annual Conference on Learning Theory (COLT)}, pp.\
  2144--2203. PMLR, 2020.

\bibitem[Kolter(2011)]{kolter2011fixed}
J~Kolter.
\newblock The fixed points of off-policy {TD}.
\newblock \emph{Proc. Advances in Neural Information Processing Systems
  (NeurIPS)}, 24:\penalty0 2169--2177, 2011.

\bibitem[Kotsalis et~al.(2020)Kotsalis, Lan, and Li]{kotsalis2020simple}
Georgios Kotsalis, Guanghui Lan, and Tianjiao Li.
\newblock Simple and optimal methods for stochastic variational inequalities,
  ii: Markovian noise and policy evaluation in reinforcement learning.
\newblock \emph{arXiv preprint arXiv:2011.08434}, 2020.

\bibitem[Lan(2020)]{lan2020first}
Guanghui Lan.
\newblock \emph{First-order and Stochastic Optimization Methods for Machine
  Learning}.
\newblock Springer Nature, 2020.

\bibitem[Levin \& Peres(2017)Levin and Peres]{levin2017markov}
David~A Levin and Yuval Peres.
\newblock \emph{Markov chains and mixing times}, volume 107.
\newblock American Mathematical Soc., 2017.

\bibitem[Mahmood et~al.(2015)Mahmood, Yu, White, and
  Sutton]{mahmood2015emphatic}
A~Rupam Mahmood, Huizhen Yu, Martha White, and Richard~S Sutton.
\newblock Emphatic temporal-difference learning.
\newblock \emph{European Workshop on Reinforcement Learning}, 2015.

\bibitem[Ni(2021)]{ni_2021}
Jingjiao Ni.
\newblock Toward emphatic reinforcement learning.
\newblock Master's thesis, University of Alberta, 2021.

\bibitem[Srikant \& Ying(2019)Srikant and Ying]{srikant2019finite}
Rayadurgam Srikant and Lei Ying.
\newblock Finite-time error bounds for linear stochastic approximation andtd
  learning.
\newblock In \emph{Proc. Annual Conference on Learning Theory (COLT)}, pp.\
  2803--2830. PMLR, 2019.

\bibitem[Sutton(1988)]{sutton1988learning}
Richard~S Sutton.
\newblock Learning to predict by the methods of temporal differences.
\newblock \emph{Machine learning}, 3\penalty0 (1):\penalty0 9--44, 1988.

\bibitem[Sutton \& Barto(2018)Sutton and Barto]{sutton2018reinforcement}
Richard~S Sutton and Andrew~G Barto.
\newblock \emph{Reinforcement learning: An introduction}.
\newblock MIT press, 2018.

\bibitem[Sutton et~al.(2008)Sutton, Szepesv{\'a}ri, and
  Maei]{sutton2008convergent}
Richard~S Sutton, Csaba Szepesv{\'a}ri, and Hamid~Reza Maei.
\newblock A convergent o (n) algorithm for off-policy temporal-difference
  learning with linear function approximation.
\newblock \emph{Proc. Advances in Neural Information Processing Systems
  (NeurIPS)}, 21\penalty0 (21):\penalty0 1609--1616, 2008.

\bibitem[Sutton et~al.(2016)Sutton, Mahmood, and White]{sutton2016emphatic}
Richard~S Sutton, A~Rupam Mahmood, and Martha White.
\newblock An emphatic approach to the problem of off-policy temporal-difference
  learning.
\newblock \emph{Journal of Machine Learning Research (JMLR)}, 17\penalty0
  (1):\penalty0 2603--2631, 2016.

\bibitem[Tsitsiklis \& Van~Roy(1997)Tsitsiklis and
  Van~Roy]{tsitsiklis1997analysis}
John~N Tsitsiklis and Benjamin Van~Roy.
\newblock An analysis of temporal-difference learning with function
  approximation.
\newblock \emph{IEEE transactions on automatic control}, 42\penalty0
  (5):\penalty0 674--690, 1997.

\bibitem[Van~Hasselt et~al.(2018)Van~Hasselt, Doron, Strub, Hessel, Sonnerat,
  and Modayil]{van2018deep}
Hado Van~Hasselt, Yotam Doron, Florian Strub, Matteo Hessel, Nicolas Sonnerat,
  and Joseph Modayil.
\newblock Deep reinforcement learning and the deadly triad.
\newblock \emph{arXiv preprint arXiv:1812.02648}, 2018.

\bibitem[Wang et~al.(2018)Wang, Chen, Liu, Ma, and Liu]{wang2018finite}
Yue Wang, Wei Chen, Yuting Liu, Zhi-Ming Ma, and Tie-Yan Liu.
\newblock Finite sample analysis of the {GTD} policy evaluation algorithms in
  markov setting.
\newblock \emph{arXiv preprint arXiv:1809.08926}, 2018.

\bibitem[Xu \& Liang(2021)Xu and Liang]{xu2021sample}
Tengyu Xu and Yingbin Liang.
\newblock Sample complexity bounds for two timescale value-based reinforcement
  learning algorithms.
\newblock In \emph{International Conference on Artificial Intelligence and
  Statistics (AISTATS)}, pp.\  811--819. PMLR, 2021.

\bibitem[Xu et~al.(2019)Xu, Zou, and Liang]{xu2019two}
Tengyu Xu, Shaofeng Zou, and Yingbin Liang.
\newblock Two time-scale off-policy {TD} learning: Non-asymptotic analysis over
  markovian samples.
\newblock \emph{Proc. Advances in Neural Information Processing Systems
  (NeurIPS)}, 2019.

\bibitem[Yu(2010)]{yu2010convergence}
Huizhen Yu.
\newblock Convergence of least squares temporal difference methods under
  general conditions.
\newblock In \emph{Proc. International Conference on Machine Learning (ICML)},
  pp.\  1207--1214, 2010.

\bibitem[Yu(2015)]{yu2015convergence}
Huizhen Yu.
\newblock On convergence of emphatic temporal-difference learning.
\newblock In \emph{Proc. Annual Conference on Learning Theory (COLT)}, pp.\
  1724--1751. PMLR, 2015.

\bibitem[Zhang et~al.(2020{\natexlab{a}})Zhang, Dai, Li, and
  Schuurmans]{zhang2020gendice}
Ruiyi Zhang, Bo~Dai, Lihong Li, and Dale Schuurmans.
\newblock Gendice: Generalized offline estimation of stationary values.
\newblock \emph{Proc. International Conference on Learning Representations
  (ICLR)}, 2020{\natexlab{a}}.

\bibitem[Zhang \& Whiteson(2021)Zhang and Whiteson]{zhang2021truncated}
Shangtong Zhang and Shimon Whiteson.
\newblock Truncated emphatic temporal difference methods for prediction and
  control.
\newblock \emph{arXiv preprint arXiv:2108.05338}, 2021.

\bibitem[Zhang et~al.(2020{\natexlab{b}})Zhang, Liu, Yao, and
  Whiteson]{zhang2020provably}
Shangtong Zhang, Bo~Liu, Hengshuai Yao, and Shimon Whiteson.
\newblock Provably convergent two-timescale off-policy actor-critic with
  function approximation.
\newblock In \emph{Proc. International Conference on Machine Learning (ICML)},
  pp.\  11204--11213. PMLR, 2020{\natexlab{b}}.

\end{thebibliography}
\bibliographystyle{plain}

\appendix 
\newpage
\section*{\LARGE  Supplementary Materials}

\section{Specification of Experiments in \Cref{sec:exp}}
\subsection{Experiment settings}\label{app:settings}
The BAIRD counter-example is illustrated in \Cref{fig:baird}, which has $7$ states and $2$ actions. If the first action (illustrated as dashed lines) is taken, then the environment transitions from the current state to states $1$ to $6$ following the uniform distribution and returns a reward $0$; and if the second action (illustrated as solid lines) is taken, the environment transitions from the current state to state $7$ with probability $1$ and returns a reward $1$. We choose the target policy as $\pi(0|s) = 0.1$ and $\pi(1|s) =0.9$ for all states; and choose the behavior policy as $\mu(0|s) = {6}/{7}$ and $\mu(1|s) = {1}/{7}$ for all states. Moreover, we specify the discount factor $\gamma =0.99$.

\begin{figure}[!h]
    \centering
    \includegraphics{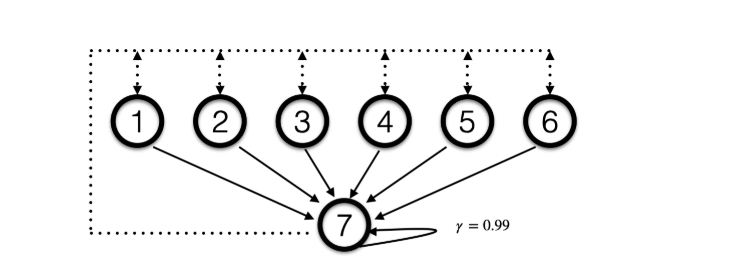}
    \caption{BAIRD example \citep{sutton2018reinforcement}}
    \label{fig:baird}
\end{figure}

\subsection{Features for Experiments in \Cref{sec:performanceofetdlambda}}\label{app:features}

In the experiment in \Cref{sec:performanceofetdlambda}, we choose the following features:
\begin{align*}
\Phi_1 =& (0.35, 0.35, 0.35, 0.35, 0.35, 0.35, 0.37)^\top; \\
\Phi_2 =& ((0.3425, 0.0171)^\top,
       (0.1902, 0.4248)^\top,
       (0.1354, 0.76)^\top,
       (0.1357, 0.7973)^\top,\\
       &\quad(0.8674, 0.8774)^\top,
       (0.5166, 0.9493)^\top,
       (0.3094, 0.8535)^\top)^\top; \\
\Phi_3 =& ((0.5162, 0.9013)^\top,(0.5128, 0.5999)^\top, (0.289,  0.4649)^\top, (0.3399, 0.5334)^\top,\\
    &\quad (0.315,  0.2278)^\top, (0.667,  0.461)^\top, (0.3706, 0.1457)^\top)^\top.
\end{align*}

\subsection{Computation of the fixed point of PER-ETD($\lambda$)}

In this section, we provide the steps to compute the fixed point of PER-ETD($\lambda$) in \Cref{fig:etdlambdafixedpoints}. We first define the matrix $A$ and $c$ as follows
\begin{align*}
    A &\coloneqq \lim_{t \to \infty} \mathbb{E}\left[A_t \left( \coloneqq \left(\rho_t^be_t^b( \phi_t^b  -  \gamma\phi_t^{b+1}\right)^\top\right)\right],\\
    c &\coloneqq \lim_{t \to \infty} \mathbb{E}\left[c_t \left(\coloneqq \rho_t^br_t^be_t^b\right)\right].
\end{align*}
It can be shown that, the fixed point of PER-ETD($\lambda$) algorithm is $\theta^* = A^{-1}c$.

We next show how to derive the formulation of the matrix $A$ and vector $c$. As we will show later in \cref{eq:boundsonat,eq:midproofc,eq:iterativebeta}, we have   
\begin{align}
    A &= \lim_{t \to \infty} \eptt\left[A_t \middle| \Fc_{t-1} \right] = \bar{\beta}_b (I -\gamma P_\pi) \Phi,\label{eq: At}\\
    c &= \lim_{t \to \infty} \eptt\left[c_t\middle|\Fc_{t-1}\right]  = \bar{\beta}_b r_\pi, \label{eq: ct}
\end{align}
where $\bar{\beta}_b \coloneqq \lim_{t\to\infty}\eptt\left[\beta_b\right]$ and
\begin{align}
    \beta_b(s) = \lambda \Phi^\top D_{\mu, b} + (1-\lambda)\Phi^\top F_{b} + \gamma\lambda\beta_{b-1}P_\pi,\label{eq: betab}
\end{align} 
where $D_{\mu,\tau} = \diag(d_{\mu, \tau})$, $d_{\mu, \tau}(s) = \prob(s_t^\tau = s|\Fc_{t-1})$, $F_b = \diag({f_b})$, and $f_b$ is determined iteratively by \cref{eq:iterativef} as follows
\begin{align}
	f_b = d_{\mu, b} + \gamma P_\pi^\top f_{b-1},\quad 
\text{with}\quad f_0 = d_{\mu, 0}. \label{eq: fbrecur}
\end{align}

Taking expectation on both sides of \cref{eq: fbrecur,eq: betab} with respect to $\Fc_{t-1}$ and letting $t\to\infty$ yield
\begin{align}
    \bar{f}_b &= d_{\mu} + \gamma P_\pi^\top \bar{f}_{b-1},\quad\text{with}\quad \bar{f}_0 = d_{\mu},\label{eq:barf}\\
    \bar{\beta}_b &= \lambda \Phi^\top D_\mu + (1-\lambda)\Phi^\top \bar{F}_b + \gamma\lambda\bar{\beta}_{b-1}P_\pi,\quad\text{with}\quad \bar{\beta}_0 = \Phi^\top D_\mu \label{eq:barbeta}
\end{align}
where $\bar{f}_b\coloneqq \lim_{t \to \infty} \eptt[f_b]$ and $\bar{F}_b = \diag(\bar{f}_b)$. The explicit formulation of $\bar{\beta}_b$ can be derived by applying \cref{eq:barbeta,eq:barf} iteratively.
We can then obtain $A$ and $c$ by substituting the obtained formulation of $\bar{\beta}_b$ into \cref{eq: At} and \cref{eq: ct}, respectively.

{\color{black}
\subsection{Replotted \Cref{fig:etd0,fig:etdlambda} with Variance Bars}
In this subsection, we replotted \Cref{fig:etd0,fig:etdlambda} with variance bars (rather than error bands) in \Cref{fig:etd0_errorbar,fig:etdlambda_errorbar}, respectively.

\begin{figure}[!ht]
     \centering
\begin{tabular}{cc}
\includegraphics[width=6cm]{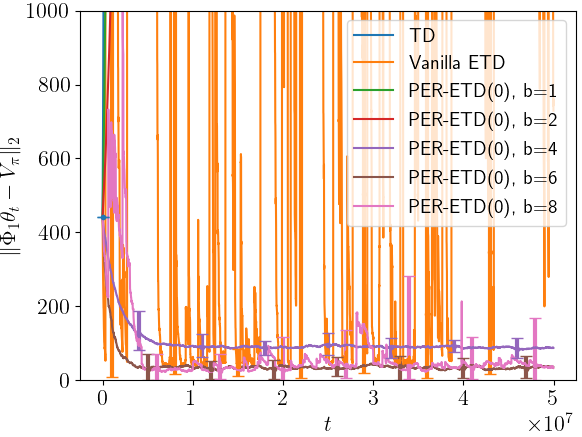} 
&\includegraphics[width=6cm]{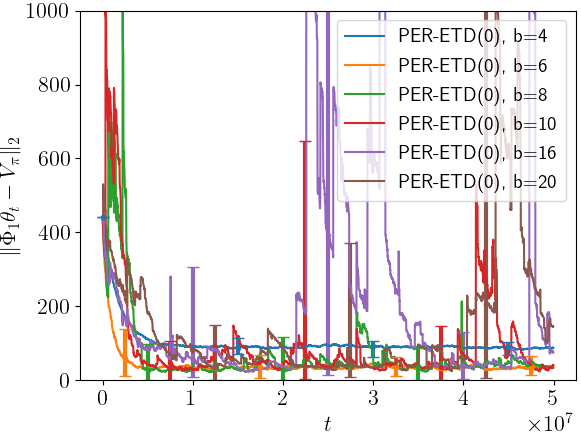}\\
(a) Comparison of TD, ETD, PER-ETD(0) 
&(b) Tradeoff between bias and variance by $b$
\end{tabular}
\caption{Performance of PER-ETD(0) and comparison}\label{fig:etd0_errorbar}
\end{figure}

 \begin{figure}[!ht]
     \centering
     \begin{subfigure}{0.32\textwidth}
     \centering
     \includegraphics[width=0.99\textwidth]{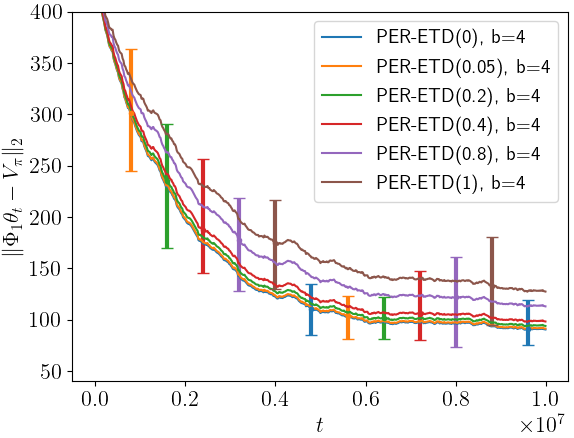}
     \caption{Feature $\Phi_1$ }
    \end{subfigure}
    \hfill
     \begin{subfigure}{0.32\textwidth}
     \centering
     \includegraphics[width=0.99\textwidth]{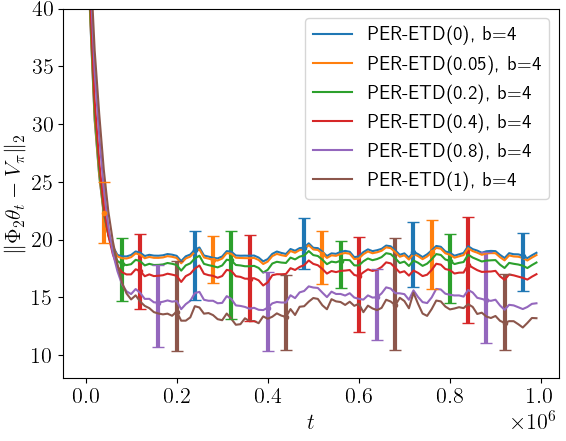}
     \caption{Feature $\Phi_2$ }
    \end{subfigure}
    \hfill
    \begin{subfigure}{0.32\textwidth}
     \centering
     \includegraphics[width=0.99\textwidth]{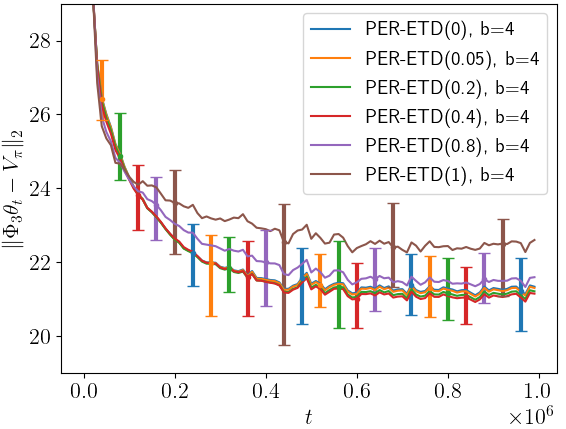}
     \caption{Feature $\Phi_3$ }
    \end{subfigure} 
\caption{Performance of PER-ETD($\lambda$) and dependence on features}\label{fig:etdlambda_errorbar}
\end{figure}

\section{More Experiments}\label{sec:expdifferentpolicies}

\begin{figure}[!ht]
     \centering
\begin{tabular}{cc}
\includegraphics[width=6cm]{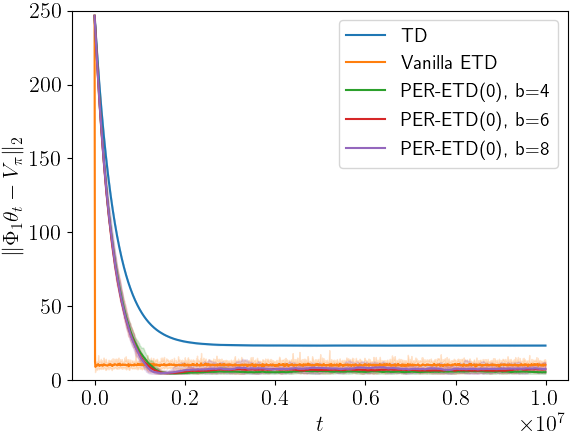}
&\includegraphics[width=6cm]{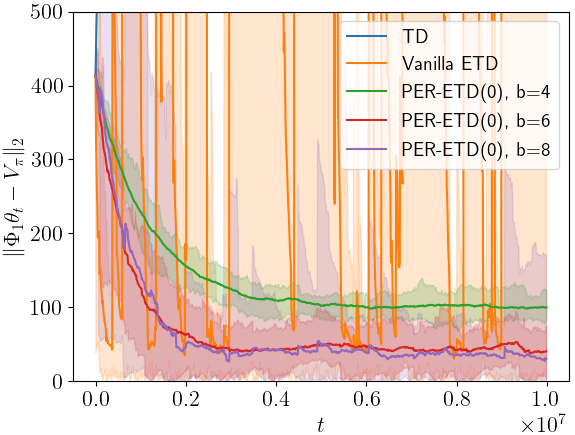} \\
(a) $\rho_{max} =1.17$
&(b) $\rho_{max} = 5.60$
\end{tabular}
\caption{Comparisons of TD, ETD, PER-ETD(0) with different target policies}\label{fig:differentrho}
\end{figure}

\begin{figure}[!ht]
     \centering
\begin{tabular}{cc}
\includegraphics[width=6cm]{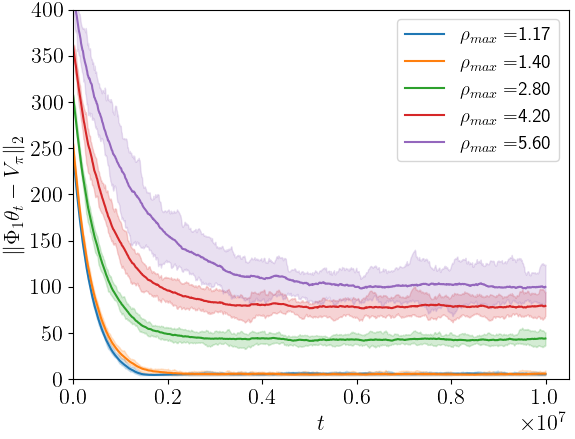}
&\includegraphics[width=6cm]{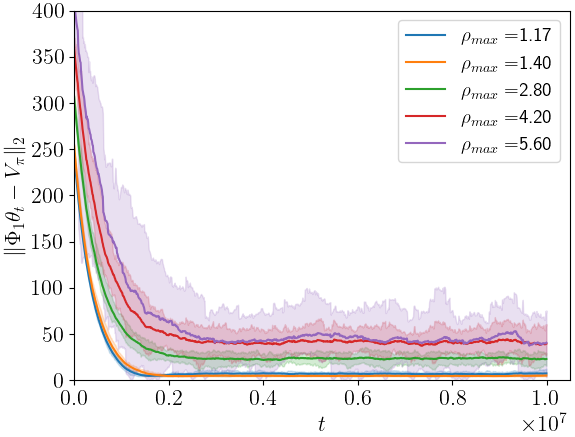} \\
(a) $b=4$ 
&(b) $b=6$
\end{tabular}
\caption{Performance of PER-ETD(0) under different target policies (marked by their different resulting distribution mismatch parameter $\rho_{max}$). The behavior policy is kept the same.}\label{fig:etd0_differentrho}%
\end{figure}

\begin{figure}[!ht]
     \centering
\begin{tabular}{cc}
\includegraphics[width=6cm]{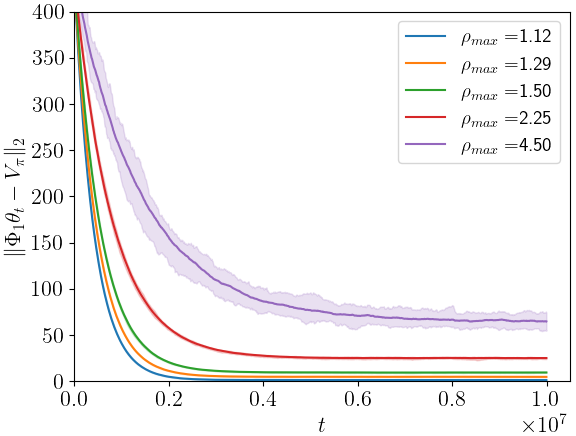}
&\includegraphics[width=6cm]{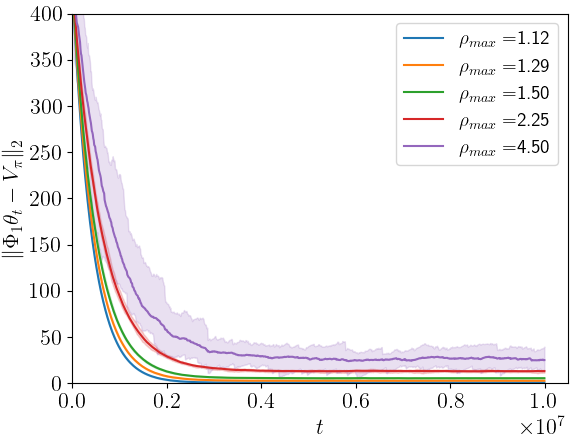} \\
(a) $b=4$ 
&(b) $b=6$
\end{tabular}
\caption{Performance of PER-ETD(0) under different behavior policies (marked by their different resulting distribution mismatch parameter $\rho_{max}$). The target policy is kept the same.}\label{fig:etd0_differentrho_behaviorchange}
\end{figure}

In this section, we conduct further experiments to answer the following two intriguing questions: 
\begin{itemize}
\item If the distribution mismatch parameter $\rho_{max}$ changes, how will different approaches perform and compare with each other? 
\item Focusing on our algorithm PER-ETD, {\color{black}how do the choices of behavior policy and target policy affect its convergence?} %increasing distribution mismatch $\rho_{max}$ affect its convergence of PER-ETD?
\end{itemize}

We focus on the MDP environment in \Cref{app:settings}. In our experiments, the performance of every algorithm   
is averaged over 20 random initializations and the error band in our plots captures the actual variation of the performance during these experimental runs (which can be viewed as the variance of the algorithms).

In \Cref{fig:differentrho}, we consider two settings with the distribution mismatch parameter $\rho_{max} = 1.17$ and $5.60$, respectively, and compare the performance of three off-policy algorithms: TD, vanilla ETD and our PER-ETD. More specifically, we choose the behavior policy as $\mu(1|s) = \tfrac{6}{7}$ and $\mu(2|s) = \tfrac{1}{7}$ for all states, and choose two target polices, whose probabilities to take the second action are $0.167$ and $0.8$, respectively, on all states. (Then their probabilities to take the first action are determined automatically through $\pi(1|s) + \pi(2|s) =1$). Therefore, the maximum distribution mismatch $\rho_{max}$ for the these two target policies are $1.17$ and $5.60$, respectively. \Cref{fig:differentrho} (a) shows that under only slightly mismatch (i.e., $\rho_{max} = 1.17$), TD suffers from a large convergence error (i.e., the error with respect to the ground truth value function at the point of convergence) and converges slowly. Vanilla ETD converges with the fastest rate and achieves a smaller convergence error than TD, but suffers from a relatively large variance. Our PER-ETD achieves a better tradeoff between the convergence rate and the variance (faster rate than TD, almost the same convergence error as ETD but with smaller variance). 
%outperform them in the convergence error (i.e., the difference between the convergent value function and the ground truth value function). 
\Cref{fig:differentrho} (b) shows that under a large distribution mismatch (i.e., $\rho_{max} = 5.6$), TD does not converge, and vanilla ETD experiences a substantially large variance. However, our PER-ETD still convergences fast as long as the period length $b$ is chosen properly, e.g., $b=4,6,8$. 
Further note that PER-ETD has a smaller convergence error as the period length $b$ increases, but the variance gets larger; which are consistent with our theorem.

In \Cref{fig:etd0_differentrho}, we focus on our PER-ETD, and study how different target policies affect the performance. We choose the same behavior policy as the above experiment, i.e., $\mu(1|s) = \tfrac{6}{7}$ and $\mu(2|s) = \tfrac{1}{7}$ for all states. We choose $5$ target polices, whose probabilities to take the second action are $0.167, 0.2, 0.4, 0.6,0.8$ for all states, respectively. 
These different target policies affect the performance via their resulting distribution mismatch $\rho_{max}=1.17, 1.40, 2.80, 4.20,5.60$, respectively. For both $b=4$ and $b=6$, \Cref{fig:etd0_differentrho} indicates that larger mismatch causes slower convergence rate, larger convergence error and larger variance, which agrees with our theorem. %the mismatch $\rho_{max}$

In \Cref{fig:etd0_differentrho_behaviorchange}, we also focus on our PER-ETD, and study how different behavior policies affect the performance. We pick the target policy to be $\pi(1|s) =0.1$ and $\pi(2|s) =0.9$ for all states, and $5$ different behavior policies with $\mu(2|s) = 0.2, 0.4, 0.6, 0.7$ and $0.8$ for all state, respectively. These different behavior policies affect the performance via their different resulting distribution mismatch $\rho_{max}=1.12, 1.29, 1.5, 2.25,4.5$, respectively. \Cref{fig:etd0_differentrho_behaviorchange} clearly demonstrates that larger distribution mismatch results in slower convergence rate, larger convergence error and larger variance, which is in the same nature as changing the target policy shown in \Cref{fig:etd0_differentrho} and is consistent with our theorem.}

\section{Supporting Lemmas} \label{sec:supportinglemmas}

The following lemma is well-known. We include it for the convenience of our proof.
\begin{lemma}\label{lemma:transitionkernel}
    Consider a transition matrix $P\in\mathbb{R}^{N\times N}$, where $\sum_{j} P_{(i,j)} = 1$ for all $i$ and $0\le P_{(i,j)}\le 1$ for all $j$.  We have for any $n\in \mathbb{N}$, $\|P^n\|_\infty = 1$, and $\|(P^\top)^n\|_1= 1$.
\end{lemma}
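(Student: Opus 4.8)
The plan is to reduce the statement to the elementary fact that the set of (row-)stochastic matrices is closed under multiplication, together with the standard identifications of the induced $\ell_\infty$ and $\ell_1$ operator norms with maximal absolute row and column sums.

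First I would record the two norm identities to be used: for any $M\in\mathbb{R}^{N\times N}$ one has $\|M\|_\infty=\max_{1\le i\le N}\sum_{j=1}^N|M_{(i,j)}|$ (maximum absolute row sum) and $\|M\|_1=\max_{1\le j\le N}\sum_{i=1}^N|M_{(i,j)}|$ (maximum absolute column sum); in particular $\|M^\top\|_1=\|M\|_\infty$. These are textbook facts about matrix operator norms and I would simply cite them.

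Next I would show by induction on $n$ that $P^n$ has nonnegative entries and all row sums equal to $1$. The base case $n=1$ is the hypothesis on $P$. For the inductive step, write $(P^n)_{(i,k)}=\sum_{j}(P^{n-1})_{(i,j)}P_{(j,k)}$, which is a sum of products of nonnegative numbers, hence nonnegative; and $P^n\mathbf{1}=P^{n-1}(P\mathbf{1})=P^{n-1}\mathbf{1}=\mathbf{1}$ by the inductive hypothesis, where $\mathbf{1}$ denotes the all-ones vector. Thus every row of $P^n$ is a probability vector. Combining with the norm identity above, $\|P^n\|_\infty=\max_i\sum_j|(P^n)_{(i,j)}|=\max_i\sum_j(P^n)_{(i,j)}=\max_i 1=1$. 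For the transpose claim, observe $(P^\top)^n=(P^n)^\top$, so $\|(P^\top)^n\|_1=\|(P^n)^\top\|_1=\|P^n\|_\infty=1$.

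There is essentially no obstacle in this argument; the only point that needs even a one-line justification is the closure of stochastic matrices under multiplication, which the induction above handles. (If $\mathbb{N}$ is taken to include $0$, the case $n=0$ is the identity matrix, for which both norms are trivially $1$.)
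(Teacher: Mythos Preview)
Your proof is correct. The paper does not actually supply a proof of this lemma; it simply states that the result is well known and includes it ``for the convenience of our proof.'' Your argument---closure of row-stochastic matrices under multiplication via $P^n\mathbf{1}=\mathbf{1}$, combined with the standard identification of $\|\cdot\|_\infty$ with the maximum absolute row sum and the relation $\|(P^\top)^n\|_1=\|P^n\|_\infty$---is exactly the elementary justification one would expect, and it fills in precisely what the paper omits.
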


\begin{lemma}\label{lemma:series}
    Consider $0<p, q<1$, with $p\neq q$. We have $\sum_{k=0}^{n-1} p^kq^{n-k} \le \frac{1}{|p-q|}\xi^n$, where  $\xi = \max\{p, q\}$. 
\end{lemma}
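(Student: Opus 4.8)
The plan is to evaluate (or rather, over-bound) the geometric-type sum directly. Note first that the summand $p^kq^{n-k}$ is \emph{not} symmetric under swapping $p$ and $q$, so I would split into the two cases $p>q$ and $p<q$, which are handled by the same idea.

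First suppose $p>q$, so that $\xi=p$ and $|p-q|=p-q$. Substituting $j=n-k$ rewrites the sum as $\sum_{k=0}^{n-1}p^kq^{n-k}=p^n\sum_{j=1}^{n}(q/p)^j$. Since $0<q/p<1$, the finite sum is dominated by the corresponding infinite geometric series: $\sum_{j=1}^{n}(q/p)^j\le\sum_{j=1}^{\infty}(q/p)^j=\frac{q/p}{1-q/p}=\frac{q}{p-q}$. Therefore $\sum_{k=0}^{n-1}p^kq^{n-k}\le\frac{p^nq}{p-q}\le\frac{p^n}{p-q}=\frac{\xi^n}{|p-q|}$, where the last inequality uses $q<1$. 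The case $p<q$ is entirely analogous: now $\xi=q$ and $|p-q|=q-p$, and factoring out $q^n$ gives $\sum_{k=0}^{n-1}p^kq^{n-k}=q^n\sum_{k=0}^{n-1}(p/q)^k\le q^n\sum_{k=0}^{\infty}(p/q)^k=\frac{q^{n+1}}{q-p}\le\frac{q^n}{q-p}=\frac{\xi^n}{|p-q|}$, again using $q<1$ in the final step.

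There is essentially no obstacle here; the statement is an elementary one-line geometric-series estimate. The only points worth a little care are (i) recalling that the summand is not symmetric in $p$ and $q$, so both cases must be recorded (even though the argument is the same), and (ii) using $\min\{p,q\}<1$ to absorb the spurious multiplicative factor $q/|p-q|$ that falls out of summing the geometric series, so that the bound collapses to exactly $\xi^n/|p-q|$. (Alternatively, one could use the closed form $\sum_{k=0}^{n-1}p^kq^{n-k}=\frac{q(p^n-q^n)}{p-q}$ and bound $q(p^n-q^n)\le\xi^n$ directly, but the series-domination argument above avoids the case $p^n-q^n$ having an awkward sign.)
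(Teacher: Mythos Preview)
Your proof is correct and follows essentially the same approach as the paper: factor out $\xi^n$, bound the remaining finite geometric series (the paper uses the closed form $\frac{1-(p/q)^n}{1-p/q}\le\frac{1}{1-p/q}$ while you dominate by the infinite series, which is equivalent), and then absorb the extra factor of $q$ via $q<1$. The only cosmetic difference is that the paper first reduces the case $p>q$ to the case $p<q$ by bounding the sum by its swapped version, whereas you simply handle both cases directly.
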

\begin{proof}[Proof of \Cref{lemma:series}]
If $p>q$, we have 
\begin{align*}
    \sum_{k=0}^{n-1} p^kq^{n-k} &\le \sum_{k=0}^{n-1} p^{k+1}q^{n-1-k}\le \sum_{m=0}^{n-1} p^{n-m}q^{m}.
\end{align*}
Without loss of generality, we assume $p<q$ and $\xi =q$. We have
\begin{align*}
    \sum_{k=0}^{n-1} p^kq^{n-k} &= q^n\cdot \sum_{k=0}^{n-1} (\tfrac{p}{q})^k = q^n\cdot \frac{1-(p/q)^n}{1 - p/q} \le  \frac{1}{q- p}\cdot q^n = \frac{1}{|p-q|}\cdot \xi^n.
\end{align*}
\end{proof}

\begin{lemma}\label{lemma:onedot}
	Consider a matrix $P\in\mathbb{R}^{N\times N}$ where $\sum_{j} P_{(i,j)}\le C$ for all $i$ and $0\le P_{(i,j)}\le 1$ for all $j$, and a positive vector $x$ where $x\in\mathbb{R}^N$ and $x_i\ge 0$ for all $i$. We have 
	\begin{align*}
		\mathbf{1}^\top Px\le C \mathbf{1}^\top x.
	\end{align*}
\end{lemma}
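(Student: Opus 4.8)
The plan is to expand $\mathbf{1}^\top P x$ and contract the leading row of ones into $P$, which turns the expression into a weighted sum of column sums of $P$. Concretely I would write
\begin{align*}
\mathbf{1}^\top P x = \sum_{i=1}^N (Px)_i = \sum_{i=1}^N \sum_{j=1}^N P_{(i,j)} x_j = \sum_{j=1}^N \left(\sum_{i=1}^N P_{(i,j)}\right) x_j,
\end{align*}
where the last equality is just an interchange of two finite sums (no convergence issue). The inner factor $\sum_i P_{(i,j)}$ is exactly the $j$-th entry of the row vector $\mathbf{1}^\top P$, i.e. the $j$-th \emph{column} sum of $P$. Since $x_j\ge 0$ for every $j$, any uniform bound $C$ on these column sums can be pulled through term by term, giving $\sum_j\left(\sum_i P_{(i,j)}\right)x_j \le \sum_j C x_j = C\,\mathbf{1}^\top x$, which is the claim. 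This is essentially the whole argument; the nonnegativity of $x$ is precisely what licenses replacing each column sum by its upper bound.

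The hard part is not the calculation but identifying \emph{which} sum of $P$ the hypothesis must control, and here I expect the main obstacle to be an index mismatch in the statement as worded. The contraction $\mathbf{1}^\top P$ produces the column sums $\sum_i P_{(i,j)}$, so the bound that actually drives the argument is a bound on the column sums, $\sum_i P_{(i,j)}\le C$ for every $j$. The displayed hypothesis $\sum_j P_{(i,j)}\le C$ for every $i$ is instead a bound on the \emph{row} sums, and a row-sum bound alone does not control $\mathbf{1}^\top P x$: for $N=2$, $C=1$, $P=\left(\begin{smallmatrix} 0 & 1 \\ 0 & 1\end{smallmatrix}\right)$ (each row sum equals $1$) and $x=(0,1)^\top\ge 0$, one computes $\mathbf{1}^\top P x = 2 > 1 = C\,\mathbf{1}^\top x$. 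Thus I would read the summation index in the hypothesis as a typographical slip for the column-sum condition $\sum_i P_{(i,j)}\le C$, which is the form the argument requires.

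This reading is also the one consistent with how the lemma is invoked in the appendix, where it is applied with $P=P_\pi^\top$: the column sums $\sum_i (P_\pi^\top)_{(i,j)} = \sum_i (P_\pi)_{(j,i)}$ are the row sums of the stochastic matrix $P_\pi$, hence equal to $1 = C$, so the column-sum hypothesis holds exactly. I would therefore state the lemma with the column-sum bound, flag the index correction explicitly, and keep the one-line derivation above as the proof, since under the corrected hypothesis it requires only the interchange of two finite sums together with the termwise nonnegativity of $x$.
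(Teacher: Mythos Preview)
Your proof is correct and uses the same one-line idea as the paper: expand $\mathbf{1}^\top Px$ as a double sum, interchange the two finite sums, and bound the inner sum by $C$ termwise using $x_j\ge 0$. You are also right to flag the index slip: under the standard convention $\mathbf{1}^\top Px=\sum_j\bigl(\sum_i P_{(i,j)}\bigr)x_j$ depends on the \emph{column} sums, and your $2\times 2$ counterexample is valid against the row-sum hypothesis as literally written. The paper's own proof carries the matching typo---it writes $\mathbf{1}^\top Px=\sum_{i,j}P_{i,j}x_i$ where $x_j$ is meant---so the two slips cancel on the page; your version simply has the indices straightened out.

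One small correction on the application: the lemma is invoked with $P=P_{\mu,\pi}$ to control $\mathbf{1}^\top P_{\mu,\pi}^\top \Delta_{b-1}$ (not with $P_\pi^\top$), but your reading still holds because the paper separately checks $\mathbf{1}^\top P_{\mu,\pi}^\top=(P_{\mu,\pi}\mathbf{1})^\top\preceq \rho_{\max}\mathbf{1}^\top$, which is exactly the column-sum bound your corrected hypothesis requires.
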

\begin{proof}[Proof of \Cref{lemma:onedot}]
We have
    \begin{align*}
        \mathbf{1}^\top Px &= \sum_{1\le i,j\le N} P_{i,j} x_i = \sum_{1\le i\le N} x_i \left( \sum_{1\le j\le N} P_{i,j}\right)\le C \sum_{1\le i\le N} x_i = C\mathbf{1}^\top x.
    \end{align*}
\end{proof}
\begin{lemma}\label{lemma:supportinglemma}
	Consider a diagonal matrix $D\in\mathbb{R}^{|\mathcal{S}|\times |\mathcal{S}|}$, where $D = \diag(d(1), d(2), \ldots, d(|\mathcal{S}|))$. a transition matrix $P\in\mathbb{R}^{|\mathcal{S}|\times |\mathcal{S}|}$ where $\sum_{j} P_{i,j} =1$ for all $i= 1,2,\ldots, |\mathcal{S}|$ and $P_{i,j}\ge0$ for all $j$, and an arbitrary vector $x\in\mathbb{R}^{|\mathcal{S}|}$. We have 
	\begin{align*}
		\|\Phi^\top DPx\|_2\le B_\phi \|d\|_1 \|x\|_\infty.
	\end{align*}
\end{lemma}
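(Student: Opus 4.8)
The plan is to expand the matrix--vector product column by column and reduce the whole estimate to elementary scalar bounds. Recall that the $s$-th row of $\Phi$ is $\phi(s)^\top$, so the $s$-th column of $\Phi^\top$ is $\phi(s)$, and right-multiplication by the diagonal matrix $D$ scales that column by $d(s)$. Hence I would first write
\[
\Phi^\top D P x \;=\; \sum_{s\in\Sc} d(s)\,(Px)_s\,\phi(s),
\]
and then apply the triangle inequality together with the uniform feature bound $\|\phi(s)\|_2\le B_\phi$ to get
\[
\left\|\Phi^\top D P x\right\|_2 \;\le\; \sum_{s\in\Sc} |d(s)|\,\big|(Px)_s\big|\,\|\phi(s)\|_2 \;\le\; B_\phi \sum_{s\in\Sc} |d(s)|\,\big|(Px)_s\big|.
\]

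The second step is to bound $|(Px)_s|$ uniformly in $s$. Since $P$ is row-stochastic (nonnegative entries with rows summing to one), for each $s$ we have $|(Px)_s| = \big|\sum_j P_{s,j} x_j\big| \le \sum_j P_{s,j}|x_j| \le \|x\|_\infty \sum_j P_{s,j} = \|x\|_\infty$; equivalently this is just $\|Px\|_\infty \le \|P\|_\infty\|x\|_\infty = \|x\|_\infty$ using $\|P\|_\infty = 1$ from \Cref{lemma:transitionkernel}. Substituting this into the previous display yields
\[
\left\|\Phi^\top D P x\right\|_2 \;\le\; B_\phi\,\|x\|_\infty \sum_{s\in\Sc} |d(s)| \;=\; B_\phi\,\|d\|_1\,\|x\|_\infty,
\]
which is exactly the asserted inequality.

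There is essentially no obstacle here; the argument is a short chain of triangle inequalities and Hölder-type estimates. The only point that warrants a little care is keeping straight which norm controls which object at each stage — the $\ell_2$-norm of the output vector in $\mathbb{R}^d$ versus the $\ell_1$-norm of $d$ and the $\ell_\infty$-norm of $x$ in $\mathbb{R}^{|\Sc|}$ — and noticing that the passage from the $\ell_2$-norm to a sum of scalar magnitudes costs nothing beyond pulling the uniform bound $B_\phi$ out first. Note also that the proof uses only that $P$ is a stochastic matrix, not irreducibility, recurrence, or any stationarity; so the same bound applies verbatim to $P_\pi$ and to the conditional state-transition matrices appearing later in the analysis.
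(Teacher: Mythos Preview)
Your proof is correct and follows essentially the same approach as the paper's: both write $\Phi^\top D P x$ as $\sum_{s} d(s)\,(Px)_s\,\phi(s)$ (the paper arrives at this after an explicit swap of summation order), apply the triangle inequality and the feature bound $B_\phi$, and then use row-stochasticity of $P$ to control $|(Px)_s|\le\|x\|_\infty$. Your presentation is slightly more streamlined by recognizing $(Px)_s$ directly rather than writing out the double sum, but the substance is identical.
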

\begin{proof}[Proof of \Cref{lemma:supportinglemma}]
	We have 
	\begin{align*}
		\Phi^\top D = \left(d(1)\phi(1), d(2)\phi(2),\ldots, d(|\mathcal{S}|)\phi(|\mathcal{S}|)\right),
	\end{align*}
	and
	\begin{align*}
		\left(\Phi^\top DP\right)_{(\cdot, s)} = \sum_{\tilde{s}} P_{\tilde{s}, s} d(\tilde{s})\phi(\tilde{s}),
	\end{align*}
	which implies
	\begin{align*}
		\Phi^\top DPx&= \sum_s x(s) \left(\sum_{\tilde{s}} P_{\tilde{s}, s} d(\tilde{s})\phi(\tilde{s})\right)=\sum_{\tilde{s}}\left(\sum_s x(s) P_{\tilde{s}, s}\right) d(\tilde{s})\phi(\tilde{s}).
	\end{align*}
	Taking $\ell_2$ norm on both sides of the above equality yields
	\begin{align*}
		\left\|\Phi^\top DPx\right\|_2&= \left\|\sum_{\tilde{s}}\left(\sum_s x(s) P_{\tilde{s}, s}\right) d(\tilde{s})\phi(\tilde{s})\right\|_2\le\sum_{\tilde{s}}\left|\sum_s x(s) P_{\tilde{s}, s}\right|\cdot\left| d(\tilde{s})\right|\cdot\|\phi(\tilde{s})\|_2\\
		&\qquad\qquad= B_\phi  \|d\|_1 \cdot\max_{\tilde{s}} \left|\sum_s x(s) P_{\tilde{s}, s}\right|\le B_\phi  \|d\|_1\|x\|_\infty \max_{\tilde{s}} \left|\sum_s P_{\tilde{s}, s}\right|= B_\phi  \|d\|_1\|x\|_\infty .
	\end{align*}
\end{proof}

\begin{lemma}\label{lemma:supportinglemma2}
	Consider a diagonal matrix $D\in\mathbb{R}^{|\mathcal{S}|\times |\mathcal{S}|}$ where $D = \diag(d(1), d(2), \ldots, d(|\mathcal{S}|))$, a transition matrix $P\in\mathbb{R}^{|\mathcal{S}|\times |\mathcal{S}|}$ where $\sum_{j} P_{i,j} =1$ for all $i= 1,2,\ldots, |\mathcal{S}|$ and $P_{i,j}\ge0$ for all $j$, a matrix $Q\in\mathbb{R}^{|\mathcal{S}| \times |\mathcal{S}|}$ that satisfies $0\le Q_{i,j}\le CP_{i,j}$, where $C> 1$ is a constant, and an arbitrary vector $x\in\mathbb{R}^{|\mathcal{S}|}$. We have 
	\begin{align*}
		\mathrm{trace}\left(Q^m P^n\Phi\Phi^\top D\right) \le C^m B_\phi^2\|d\|_1,
	\end{align*}
and,
\begin{align*}
	\mathrm{trace}\left(P^nQ^m\Phi\Phi^\top D\right) \le C^m B_\phi^2\|d\|_1,
\end{align*}
for any $m$ and $n\in\mathbb{N}_{\ge0}$.
\end{lemma}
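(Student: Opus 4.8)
The plan is to express the trace as a double sum over states and then control it using the entrywise domination $0\le Q_{i,j}\le C P_{i,j}$ together with the uniform feature bound $\|\phi(s)\|_2\le B_\phi$. (The vector $x$ in the hypothesis plays no role in the stated conclusion and can be ignored; it is presumably a copy-paste remnant.)

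First I would record an entrywise matrix inequality: $Q^m \le C^m P^m$ componentwise, with both sides having nonnegative entries. This is immediate by induction on $m$ --- the case $m=0$ gives the identity, and $(Q^{m+1})_{i,j}=\sum_k (Q^m)_{i,k}Q_{k,j}\le \sum_k C^m (P^m)_{i,k}\,C P_{k,j}=C^{m+1}(P^{m+1})_{i,j}$, using nonnegativity of every factor. Hence, setting $N\defeq Q^m P^n$ (and likewise $N\defeq P^n Q^m$), we obtain $0\le N_{i,j}\le C^m (P^{m+n})_{i,j}$, and since $P^{m+n}$ is again a transition matrix by \Cref{lemma:transitionkernel}, every row sum satisfies $\sum_j N_{i,j}\le C^m$.

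Next I would unfold the trace. Since $(\Phi\Phi^\top)_{s,s'}=\phi(s)^\top\phi(s')$ and $D$ is diagonal, computing the diagonal entries of $N\Phi\Phi^\top D$ directly gives
\[
\mathrm{trace}\big(N\Phi\Phi^\top D\big)=\sum_s d(s)\,\phi(s)^\top\!\Big(\sum_{s'}N_{s,s'}\phi(s')\Big).
\]
Applying Cauchy--Schwarz and the triangle inequality to the inner product, then the bounds $\|\phi(\cdot)\|_2\le B_\phi$ and $\sum_{s'}N_{s,s'}\le C^m$ from the previous step, yields $\big|\phi(s)^\top\sum_{s'}N_{s,s'}\phi(s')\big|\le B_\phi^2 C^m$ for every $s$. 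Summing over $s$ and using $\sum_s|d(s)|=\|d\|_1$ gives $\mathrm{trace}(N\Phi\Phi^\top D)\le C^m B_\phi^2\|d\|_1$ for both orderings $N=Q^m P^n$ and $N=P^n Q^m$, which is exactly the claim.

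There is no real obstacle here; the argument is essentially a matrix analogue of \Cref{lemma:onedot,lemma:supportinglemma}. The only two points that need slight care are the componentwise product bound $Q^m\le C^m P^m$ (which relies on $Q,P\ge 0$) and the fact that $d$ need not be nonnegative, so the final estimate must route through $|d(s)|$ and $\|d\|_1$ rather than through $\mathbf{1}^\top d$.
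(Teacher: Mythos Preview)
Your proof is correct and follows essentially the same approach as the paper's: both reduce the trace to $\sum_s d(s)\,(N\Phi\Phi^\top)_{s,s}$, bound $|(\Phi\Phi^\top)_{i,j}|\le B_\phi^2$, and use the row-sum bound $\sum_j N_{i,j}\le C^m$ to conclude. The only cosmetic difference is that you make the componentwise inequality $Q^m\le C^m P^m$ explicit by induction before passing to row sums, whereas the paper bounds $\|(Q^m)_{(i,\cdot)}\|_1\le C^m$ directly via H\"older; your presentation is arguably cleaner.
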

\begin{proof}[Proof of \Cref{lemma:supportinglemma2}]
	For any given $\tau\ge m \ge 0$ and $n\ge0$, 
	\begin{align}
		\left|(Q^{m}P^n\Phi\Phi^\top)_{(i, j)}\right| &=  \left|\left\langle (Q^m)_{(i,\cdot)},  (P^n\Phi\Phi^\top)_{(\cdot, j)}\right\rangle\right| \overset{(i)} \le \|(Q^{m})_{(i,\cdot)}\|_1\|(P^n\Phi\Phi^\top)_{(\cdot, j)}\|_\infty, \label{eq:proofoftrans}
	\end{align}
	where $(i)$ follows from the H$\mathrm{\ddot{o}}$lder's inequality. 
	
	Furthermore, for the term of $(P^n\Phi\Phi^\top)_{(\cdot, j)}$, we have,
	\begin{align*}
	    \|(P^n\Phi\Phi^\top)_{(\cdot, j)}\|_\infty\le \|P^n\|_\infty\|(\Phi\Phi^\top)_{(\cdot, j)}\|_\infty \overset{(i)}= \|(\Phi\Phi^\top)_{(\cdot, j)}\|_\infty,
	\end{align*}
	where $(i)$ follows from \Cref{lemma:transitionkernel}.
	
	Moreover, for the $i$th entry of $(\Phi\Phi^\top)_{(\cdot, j)}$, we have
	\begin{align*}
	   \left| (\Phi\Phi^\top)_{(i, j)}\right| =\left| \phi(i)^\top\phi(j) \right|\le \|\phi(i)\|_2\|\phi(j)\|_2\le B_\phi^2
	\end{align*}
	The above uniform bounds over all $i$ imply that $\|(\Phi\Phi^\top)_{(\cdot, j)}\|_\infty\le  B_\phi^2$. Hence,
	$$\|(P^n\Phi\Phi^\top)_{(\cdot, j)}\|_\infty\le B_\phi^2.$$
	
	Substituting the above inequality back into \cref{eq:proofoftrans}, we obtain
	\begin{align}
	    \left|(Q^{m}P^n\Phi\Phi^\top)_{(i, j)}\right| &\le B_\phi^2\|(Q^{m})_{(i,\cdot)}\|_1  \overset{(i)}\le C^{m}\|(P^{\tau- m})_{(i,\cdot)}\|_1B_\phi^2\le C^{m}B_\phi^2 \sum_{j} P^{\tau -m}(j|i) = C^{m}B_\phi^2,\label{eq:midproofoftrans}
	\end{align}
	where $(i)$ follows by the condition of $Q$, $Q_{(i, j)}\le C P_{(i, j)}$ for all $i, j$.
		%\textcolor{red}{Tengyu: also need to add explanation for the first inequality on the R.H.S}

Finally, we have
\begin{align}
\mathrm{trace}\left(Q^m P^n\Phi\Phi^\top D\right) &= \sum_{i} d(i) (Q^{m}P^n\Phi\Phi^\top)_{(i, i)}\le\sum_{i} |d(i)|\left| (Q^{m}P^n\Phi\Phi^\top)_{(i, i)}\right|\nonumber\\
&\le \|d\|_1 \max_{i}\left|(Q^{m}P^n\Phi\Phi^\top)_{(i, i)}\right|\overset{(i)}\le C^{m}B_\phi^2\|d\|_1,\label{eq:midproofoftrans2}
\end{align}
where $(i)$ follows from \cref{eq:midproofoftrans}.  
Following steps similar to those in \cref{eq:proofoftrans,eq:midproofoftrans,eq:midproofoftrans2}, we can obtain
$$\mathrm{trace}\left(P^nQ^m\Phi\Phi^\top D\right)\le C^m B_\phi^2 \|d\|_1.$$
\end{proof}

\begin{lemma}\label{lemma:stronglyconvex}
    The operators $\Tc(\theta)$ and $\Tc^\lambda(\theta)$ satisfy the generalized monotone variational inequality. There exist $\mu_0, \mu_\lambda >0$, s.t.,  $\left\langle{\Tc}(\theta), \theta - \theta^*\right\rangle \ge \mu_0 \|\theta - \theta^*\|_2^2$ ,and $\left\langle{\Tc}^\lambda(\theta), \theta - \theta^*\right\rangle \ge \mu_\lambda \|\theta - \theta^*\|_2^2$.
\end{lemma}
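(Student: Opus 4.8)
The plan is to show that the linear operators $\Tc(\theta) = A_0(\theta-\theta^*)$ and $\Tc^\lambda(\theta) = A_\lambda(\theta-\theta^*_\lambda)$ each have a positive-definite symmetric part, which immediately gives the generalized monotone variational inequality with $\mu_0 = \lambda_{\min}\bigl(\tfrac12(A_0+A_0^\top)\bigr)$ and $\mu_\lambda = \lambda_{\min}\bigl(\tfrac12(A_\lambda+A_\lambda^\top)\bigr)$. Here $A_0 = \Phi^\top F(I-\gamma P_\pi)\Phi$ and $A_\lambda = \Phi^\top M(I-\gamma\lambda P_\pi)^{-1}(I-\gamma P_\pi)\Phi$. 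Since $\Phi$ has linearly independent columns, it suffices to prove that the inner matrices $F(I-\gamma P_\pi)$ and $M(I-\gamma\lambda P_\pi)^{-1}(I-\gamma P_\pi)$ are positive definite when symmetrized against the $\ell_2$ inner product weighted appropriately; then positive-definiteness is preserved under $\Phi^\top(\cdot)\Phi$ and the minimum eigenvalue is bounded below by $\sigma_{\min}(\Phi)^2$ times the inner bound.

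The key steps, in order. First, I would recall the classical ETD stability argument of \cite{sutton2016emphatic,yu2015convergence}: the follow-on weights $f$ satisfy $f^\top = d_\mu^\top \sum_{k\ge 0}(\gamma P_\pi)^k$ component-wise (or, for the restarted version, $\bar f_b = \sum_{k=0}^{b}(\gamma P_\pi^\top)^k d_\mu$ from \cref{eq:barf}), so $f$ is strictly positive and $F = \diag(f)$ is invertible. Second, I would invoke the standard lemma (e.g. \cite{sutton2016emphatic}, Lemma; also used in \cite{yu2015convergence}) that for a stochastic matrix $P_\pi$ with stationary-type weighting $f$ satisfying $f^\top(I-\gamma P_\pi) \ge (1-\gamma) d_\mu^\top > 0$ entrywise, the matrix $F(I-\gamma P_\pi)$ has the property that $x^\top F(I-\gamma P_\pi) x > 0$ for all $x \ne 0$; this uses the fact that $\mathbf 1^\top F (I-\gamma P_\pi) = f^\top - \gamma f^\top P_\pi$ combined with a Cauchy–Schwarz/rearrangement bound on the off-diagonal mass of $F\gamma P_\pi$, exactly the calculation isolated in \Cref{lemma:onedot} and \Cref{lemma:supportinglemma}-type manipulations. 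Third, for the $\lambda$ case I would write $(I-\gamma\lambda P_\pi)^{-1}(I-\gamma P_\pi) = (1-\lambda)(I-\gamma\lambda P_\pi)^{-1} + \lambda I$ — wait, more precisely $(I-\gamma\lambda P_\pi)^{-1}(I-\gamma P_\pi) = I - (1-\lambda)\gamma(I-\gamma\lambda P_\pi)^{-1}P_\pi$ — and verify that the emphatic weights $m$ from \cref{eq:barbeta} make $M(I-\gamma\lambda P_\pi)^{-1}(I-\gamma P_\pi)$ positive definite by the same mechanism, using that $m^\top (I-\gamma\lambda P_\pi)^{-1}(I-\gamma P_\pi) \ge (1-\gamma) d_\mu^\top$ entrywise, which is the content of the ETD($\lambda$) stability proof in \cite{yu2015convergence,hallak2016generalized}. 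Finally, I would conclude by pushing through $\Phi$: $\langle \Tc(\theta),\theta-\theta^*\rangle = (\theta-\theta^*)^\top \Phi^\top F(I-\gamma P_\pi)\Phi (\theta-\theta^*) \ge \mu_0\|\theta-\theta^*\|_2^2$ with $\mu_0>0$ since $\Phi(\theta-\theta^*)\ne 0$ whenever $\theta\ne\theta^*$, and identically for $\Tc^\lambda$.

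The main obstacle is the positive-definiteness of the asymmetric matrix $F(I-\gamma P_\pi)$ (and its $\lambda$-analogue): one must show $x^\top F(I-\gamma P_\pi)x > 0$ rather than merely $\ge 0$, and the standard proof is slightly delicate because $F(I-\gamma P_\pi)$ is not symmetric and not diagonally dominant in general — the argument instead symmetrizes and uses that the column sums of $F\gamma P_\pi$ are controlled by $f^\top - (1-\gamma)d_\mu^\top$ together with a weighted arithmetic-geometric inequality $|f(i)\gamma P_\pi(j|i) x(i) x(j)| \le \tfrac12 \gamma P_\pi(j|i)(f(i)x(i)^2 + f(j)x(j)^2)$ only after re-weighting, which requires reversibility-free handling of $P_\pi$. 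I would therefore lean on the precise statement already established in the ETD literature (\cite{sutton2016emphatic}, the key-matrix negative-definiteness lemma; \cite{yu2015convergence}) and adapt it to the finite-horizon restarted weights $\bar f_b$, noting that for every finite $b$ the same strict positivity $\bar f_b > 0$ and the same inequality $\bar f_b^\top(I-\gamma P_\pi) \ge \bar f_b^\top - \gamma \bar f_b^\top P_\pi = d_\mu^\top + (\gamma P_\pi^\top \bar f_{b-1})^\top - \gamma(\cdots)$ hold, so $\mu_0$ can be taken uniform in $b$ over the relevant range. The $\lambda$-version is analogous with $M$ and the resolvent $(I-\gamma\lambda P_\pi)^{-1}$ in place of $F$ and $I$.
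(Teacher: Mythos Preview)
Your proposal is correct and follows essentially the same route as the paper: write $\langle \Tc(\theta),\theta-\theta^*\rangle$ as the quadratic form $(\theta-\theta^*)^\top \Phi^\top F(I-\gamma P_\pi)\Phi(\theta-\theta^*)$, invoke the positive definiteness of $F(I-\gamma P_\pi)$ (respectively $M(I-\gamma\lambda P_\pi)^{-1}(I-\gamma P_\pi)$) from \cite{sutton2016emphatic,mahmood2015emphatic}, and push through $\Phi$ via its full column rank to get a strictly positive minimum. Your closing discussion about adapting the argument to the finite-period weights $\bar f_b$ is unnecessary here, since $\Tc$ and $\Tc^\lambda$ in the lemma are defined using the limiting weights $F$ and $M$, not the truncated ones.
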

\begin{proof}[Proof of \Cref{lemma:stronglyconvex}]
We have
    \begin{align}
            \left\langle{\Tc}(\theta), \theta - \theta^*\right\rangle &\overset{(i)}= \left\langle \left(\Phi^\top F(I -\gamma P_\pi) \Phi  \right)(\theta -\theta^*), \theta - \theta^* \right\rangle = (\theta- \theta^*)^\top\left(\Phi^\top F(I -\gamma P_\pi) \Phi  \right) (\theta- \theta^*)\nonumber\\
        	&\ge \lambda_{min} \left(\Phi^\top F(I -\gamma P_\pi) \Phi  \right) \|\theta- \theta^*\|_2^2,\label{eq:midproofstronglyconvex}
    \end{align}
    where $(i)$ follows from the definition of the $\Tc$ and $\theta^*$.

Recall that $F(I -\gamma P_\pi)$ is positive definite\citep{sutton2016emphatic,mahmood2015emphatic} and $\Phi$ has linearly independent columns. For any $x \in \mathbb{R}^d$ with $x\neq 0$, we have $\Phi x\neq 0$ and
\begin{align*}
    x^\top\Phi^\top F(I -\gamma P_\pi) \Phi x = (\Phi x)^\top  F(I -\gamma P_\pi) (\Phi x) >0.
\end{align*}
The above inequality shows that $\Phi^\top F(I -\gamma P_\pi) \Phi$ is positive definite and thus, there exists $\mu_0>0$ such that $\mu_0 = \lambda_{min} \left(\Phi^\top F(I -\gamma P_\pi) \Phi  \right)$.

Following steps similar to those in \cref{eq:midproofstronglyconvex} and applying the positive definiteness of $ M(I -\gamma\lambda P_\pi)^{-1} (I -\gamma P_\pi)$ \citep{sutton2016emphatic,mahmood2015emphatic} yield
\begin{align*}
        	\left\langle{\Tc}^\lambda(\theta), \theta - \theta^*\right\rangle \ge \mu_\lambda \|\theta - \theta^*\|_2^2.
    \end{align*}
\end{proof}
\begin{lemma}\label{lemma:lipschitz}
The operators $\Tc(\theta)$ and $\Tc^\lambda(\theta)$ satisfy the Lipschitz condition. There exist $L_0, L_\lambda >0$, such that, $\left\|{\Tc}(\theta_1) - {\Tc}(\theta_2)\right\|_2 \le L_0 \|\theta_1 - \theta_2\|_2$, and $\left\|{\Tc}^\lambda(\theta_1) - {\Tc}^\lambda(\theta_2)\right\|_2 \le L_\lambda \|\theta_1 - \theta_2\|_2$.
\end{lemma}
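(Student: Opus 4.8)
\textbf{Proof proposal for Lemma~\ref{lemma:lipschitz}.}
The plan is to exploit the explicit closed-form expressions for the two population operators, namely $\Tc(\theta) = \left(\Phi^\top F(I -\gamma P_\pi) \Phi\right)\theta - \Phi^\top F r_\pi$ and $\Tc^\lambda(\theta) = \Phi^\top M(I -\gamma\lambda P_\pi)^{-1}(I -\gamma P_\pi)\Phi\,\theta - \Phi^\top M(I-\gamma\lambda P_\pi)^{-1} r_\pi$. Both are affine in $\theta$, so the difference $\Tc(\theta_1) - \Tc(\theta_2)$ equals the linear part applied to $\theta_1 - \theta_2$; the constant (reward) terms cancel exactly. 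Hence it suffices to bound the spectral norm of the matrix $\Phi^\top F(I-\gamma P_\pi)\Phi$, and set $L_0$ to be that norm (similarly $L_\lambda = \|\Phi^\top M(I-\gamma\lambda P_\pi)^{-1}(I-\gamma P_\pi)\Phi\|_2$). This is precisely the counterpart to \Cref{lemma:stronglyconvex}, which lower-bounds the same matrix by its smallest eigenvalue; here we upper-bound it.

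The key steps, in order. First I would write $\|\Tc(\theta_1) - \Tc(\theta_2)\|_2 = \|\Phi^\top F(I-\gamma P_\pi)\Phi(\theta_1-\theta_2)\|_2 \le \|\Phi^\top F(I-\gamma P_\pi)\Phi\|_2\,\|\theta_1-\theta_2\|_2$, reducing everything to bounding the operator norm of a fixed matrix. Second, I would bound this norm submultiplicatively: $\|\Phi^\top F(I-\gamma P_\pi)\Phi\|_2 \le \|\Phi\|_2^2\,\|F\|_2\,\|I-\gamma P_\pi\|_2$. The factor $\|\Phi\|_2$ is finite because $\Phi$ has finitely many rows each of norm at most $B_\phi$ (so $\|\Phi\|_2 \le \sqrt{|\Sc|}\,B_\phi$, or just some finite constant depending on the feature matrix). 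The factor $\|I-\gamma P_\pi\|_2 \le 1 + \gamma\|P_\pi\|_2 \le 1+\gamma < 2$ since $P_\pi$ is a stochastic matrix (each row sums to $1$, entries nonnegative, so $\|P_\pi\|_\infty = 1$ and combined with $\|P_\pi^\top\|_\infty=\|P_\pi\|_1$ one gets a finite bound on $\|P_\pi\|_2$ via $\|P_\pi\|_2 \le \sqrt{\|P_\pi\|_1\|P_\pi\|_\infty}$; at worst one can use $\sqrt{|\Sc|}$). The factor $\|F\|_2 = \max_i f(i) = \max_i d_\mu(i)\lim_{t}\Ebb[F_t|s_t=i]$ is finite: it is a maximum over finitely many states of a quantity shown to be well-defined (and bounded, e.g.\ by $1/(1-\gamma\rho_{max})$ when $\gamma\rho_{max}<1$, or in general bounded because the limit exists). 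Collecting these gives a finite $L_0$. For $\Tc^\lambda$ the only extra ingredient is $\|(I-\gamma\lambda P_\pi)^{-1}\|_2$, which is finite because the Neumann series $\sum_{k\ge0}(\gamma\lambda)^k P_\pi^k$ converges (each $\|P_\pi^k\|_2$ is uniformly bounded by \Cref{lemma:transitionkernel}-type arguments and $\gamma\lambda<1$), plus $\|M\|_2 = \max_i m(i)$ finite by the same reasoning as for $F$.

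The main obstacle, such as it is, is purely bookkeeping: confirming that $\|F\|_2$ and $\|M\|_2$ (equivalently the entrywise limits $f(i)$, $m(i)$) are genuinely finite. This is not really an obstacle since their existence and boundedness is already invoked in the definition of the fixed point \cref{eq: etdfixed} and used in \Cref{lemma:stronglyconvex}; one can cite the stationarity/ergodicity (\Cref{lemma:ergodicity}) together with the recursion for $F_t$ to see that $\Ebb[F_t|s_t=i]$ has a finite limit, and on a finite state space the maximum over $i$ of finite numbers is finite. Everything else is elementary submultiplicativity of operator norms. Once the boundedness of these diagonal weighting matrices is in hand, the Lipschitz constants $L_0, L_\lambda$ are explicit finite products of the quantities above, which completes the proof.
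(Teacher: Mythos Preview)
Your proposal is correct and follows essentially the same approach as the paper: both exploit that $\Tc$ and $\Tc^\lambda$ are affine in $\theta$, so the difference reduces to the linear part applied to $\theta_1-\theta_2$, and then define $L_0 = \|\Phi^\top F(I-\gamma P_\pi)\Phi\|_2$ and $L_\lambda = \|\Phi^\top M(I-\gamma\lambda P_\pi)^{-1}(I-\gamma P_\pi)\Phi\|_2$. The paper stops there, whereas you additionally sketch how to bound these operator norms via submultiplicativity; that extra bookkeeping is fine but not required for the lemma as stated.
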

\begin{proof}[Proof of \cref{lemma:lipschitz}]
We have
    \begin{align}
        	\left\|{\Tc}(\theta_1) - {\Tc}(\theta_2)\right\|_2 &= \left\|\left(\Phi^\top F(I -\gamma P_\pi) \Phi  \right)(\theta_1 - \theta_2)\right\|_2\le \left\|\Phi^\top F(I -\gamma P_\pi) \Phi  \right\|_2\|\theta_1 - \theta_2\|_2. \label{eq:proofoflipschitz}
    \end{align}
    Let $L_0 \coloneqq \left\|\Phi^\top F(I -\gamma P_\pi) \Phi  \right\|_2$, \cref{eq:proofoflipschitz} completes the proof of the first inequality in the Lemma. Let $L_\lambda \coloneqq \|\Phi^\top M(I -\gamma\lambda P_\pi)^{-1} (I -\gamma P_\pi)\Phi\|_2$, the steps similar to those in \cref{eq:proofoflipschitz} finalizes the proof of the second inequality in the Lemma.
\end{proof}
\begin{lemma}[Three point lemma]\label{lemma:threepointlemma}
	Suppose $\Theta$ is a closed and bounded subset of $\mathbb{R}^d$, and $\theta^*$ is the solution of the following maximization problem, $\max_{\theta\in\Theta} \eta\left\langle G, \theta\right\rangle + \frac{1}{2} \|\theta - \theta_0\|_2^2$, where $G\in\mathbb{R}^d$ is a vector. Then, we have, for any $\theta\in\Theta$, 
	\begin{align*}
		\eta\left\langle G, \theta^* - \theta\right\rangle + \frac{1}{2} \|\theta_0 - \theta^*\|_2^2 \le \frac{1}{2} \|\theta_0 - \theta\|_2^2 - \frac{1}{2}\|\theta^* - \theta\|_2^2.
	\end{align*}
\end{lemma}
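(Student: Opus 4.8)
The plan is to treat the optimization defining $\theta^*$ as a strongly convex minimization over the convex set $\Theta$ (reading the displayed problem as $\theta^*=\argmin_{\theta\in\Theta}\,\eta\langle G,\theta\rangle+\tfrac12\|\theta-\theta_0\|_2^2$, equivalently the Euclidean projection $\Pi_{\Theta}(\theta_0-\eta G)$ that shows up in the PER-ETD updates), and to derive the inequality by combining the first-order optimality condition for this problem with an elementary polarization identity for squared norms. This is the familiar ``three-point'' (prox) lemma, so I expect the argument to be very short.

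First I would record the optimality condition. Since $\Theta$ is closed and bounded, hence compact, and the objective $h(\theta)=\eta\langle G,\theta\rangle+\tfrac12\|\theta-\theta_0\|_2^2$ is continuous and $1$-strongly convex, the minimizer $\theta^*$ exists, is unique, and $h$ is differentiable with $\nabla h(\theta)=\eta G+(\theta-\theta_0)$. The standard variational inequality for constrained minimization over a convex set then gives, for every $\theta\in\Theta$, $\langle \eta G+\theta^*-\theta_0,\ \theta-\theta^*\rangle\ge 0$, which I would rearrange to $\eta\langle G,\theta^*-\theta\rangle\le \langle \theta_0-\theta^*,\ \theta^*-\theta\rangle$.

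Next I would turn the right-hand side into squared norms via the identity $\langle \theta_0-\theta^*,\ \theta^*-\theta\rangle=\tfrac12\|\theta_0-\theta\|_2^2-\tfrac12\|\theta_0-\theta^*\|_2^2-\tfrac12\|\theta^*-\theta\|_2^2$, which is just the expansion of $\|(\theta_0-\theta^*)+(\theta^*-\theta)\|_2^2$. Substituting this into the previous bound and moving $\tfrac12\|\theta_0-\theta^*\|_2^2$ to the left-hand side yields exactly the claimed inequality.

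The hard part will be essentially nothing analytic; the only points that need care are the sign/orientation conventions — the statement is phrased as a maximization, but the direction of the asserted inequality forces the relevant object to be the minimizer above (equivalently the projection $\Pi_{\Theta}(\theta_0-\eta G)$), and one must orient the optimality VI accordingly — together with the routine remark that the first-order condition holds on all of $\Theta$, which is immediate from compactness of $\Theta$ and strong convexity of $h$. Once these two steps are in place the proof reduces to a two-line computation.
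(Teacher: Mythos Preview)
Your argument is correct and, in fact, more explicit than the paper's own treatment: the paper does not prove this lemma at all but simply refers the reader to Lan's monograph. Your two-step derivation (first-order optimality condition for the projection minimizer, then the polarization identity $\langle \theta_0-\theta^*,\theta^*-\theta\rangle=\tfrac12\|\theta_0-\theta\|_2^2-\tfrac12\|\theta_0-\theta^*\|_2^2-\tfrac12\|\theta^*-\theta\|_2^2$) is exactly the standard proof of the prox three-point inequality and yields the stated bound verbatim.

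You are also right to flag the orientation issue: the lemma (and its invocation in the proof of \Cref{thm:etd0}) is phrased as a ``maximization,'' but the update $\theta_{t+1}=\Pi_\Theta(\theta_t-\eta_t\widehat{\Tc}_t(\theta_t))$ is the \emph{minimizer} of $\eta_t\langle\widehat{\Tc}_t(\theta_t),\theta\rangle+\tfrac12\|\theta-\theta_t\|_2^2$, and the direction of the inequality only holds for that minimizer. One small refinement: your variational inequality step requires $\Theta$ to be convex, which the lemma statement omits (it says only ``closed and bounded''); this is harmless since the paper elsewhere takes $\Theta$ to be a Euclidean ball, but it is worth stating explicitly rather than leaving it implicit in the phrase ``over the convex set $\Theta$.''
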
 
\begin{proof}
	The proof can be found in \cite{lan2020first}.%\textcolor{red}{(add reference of Lan's book)}
\end{proof}
\section{Proofs of Propositions and Theorem for PER-ETD(0)}
\subsection{Proof of \Cref{prop:biastd0}}\label{sec:proofofetd0}
First, by the definition of $\widehat{\Tc}_t(\theta_t)$, we have 
\begin{align}
	&\eptt\left[\widehat{\Tc}_t(\theta_t)\middle| \Fc_{t-1}\right]\nonumber\\
	&\quad\overset{(i)}= \sum_{s\in\Sc}\sum_{a\in\Ac} \sum_{s^\prime\in \Sc} \prob\left(s_t^b=s, a_t^b =a, s^{b+1}_t =s^\prime\middle| \Fc_{t-1}\right)\eptt\left[\widehat{\Tc}_t(\theta_t)\middle| \Fc_{t-1}, s_t^b =s, a_t^b =a, s_t^{b+1} =s^\prime\right]\nonumber\\
	&\quad\overset{(ii)}= \sum_{s\in\Sc}\sum_{a\in\Ac} \sum_{s^\prime\in \Sc} \prob\left(s_t^b = s\middle| \Fc_{t-1}\right) \mu(a|s) \mathsf{P} (s^\prime|s,a) \nonumber\\
	&\quad\qquad\cdot\eptt\left[\rho^b_tF^b_t\phi(s)[\phi^\top(s)\theta_t - r(s,a) - \gamma\phi^\top(s^\prime)\theta_t]\middle| \Fc_{t-1}, s_t^b =s, a_t^b =a, s_t^{b+1} =s^\prime\right]\nonumber\\
	&\quad\overset{(iii)} = \sum_{s\in\Sc}\sum_{a\in\Ac} \sum_{s^\prime\in \Sc} \prob\left(s_t^b = s\middle| \Fc_{t-1}\right) \pi(a|s) \mathsf{P} (s^\prime|s,a) \phi(s)[\phi^\top(s)\theta_t - r(s,a) - \gamma\phi^\top(s^\prime)\theta_t]\cdot\eptt\left[F^b_t\middle| \Fc_{t-1}, s_t^b =s\right]\nonumber\\
	&\quad= \sum_{s\in\Sc} \prob\left(s_t^b =s \middle| \Fc_{t-1}\right) \eptt\left[F_t^b\middle| \Fc_{t-1}, s_t^b=s\right] \phi(s)\left[\phi^\top(s)\theta_t -\gamma[P_\pi\Phi]_{(s, \cdot)}\theta_t - r_\pi(s) \right],\label{eq:eptthatt}
\end{align}
where $(i)$ follows from the law of total probability, $(ii)$ follows from rewriting $\prob\left(s_t^b=s, a_t^b =a, s^{b+1}_t =s^\prime\middle| \Fc_{t-1}\right)$, and $(iii)$ follows from the facts that $F_t^b$ only depends on $(s_0, a_0, s_1, a_1, \ldots, s_{t(b+1)+b-1}, a_{t(b+1)+b-1})$ and the chain is Markov.

Recall the definition of $\Tc(\theta_t)$, we have 
\begin{align}
	\Tc(\theta_t)  &= \Phi^\top F\left[(I - \gamma P_\pi)\Phi\theta_t - r_\pi\right]= \sum_{s\in\Sc} f(s)\left(\phi^\top(s)\theta_t -\gamma [P_\pi\Phi]_{(s, \cdot)}\theta_t - r_\pi(s)\right)\phi(s).\label{eq:stationaryT}
\end{align}

\Cref{eq:eptthatt,eq:stationaryT} together imply the following,
\begin{align*}
	&\Tc(\theta_t) - \eptt\left[\widehat{\Tc}_t(\theta_t)\middle| \Fc_{t-1}\right]\\
	&\quad = \sum_{s\in\Sc} \left(f(s) - \prob\left(s_t^b =s \middle| \Fc_{t-1}\right)\eptt\left[F_t^b\middle|\Fc_{t-1}, s_t^b=s \right] \right) \cdot\left(\phi^\top(s)\theta_t - \gamma\left[P_\pi\Phi\right]_{(s,\cdot)}\theta_t -r_\pi(s)\right) \phi(s)\\
	&\quad\overset{(i)}=\sum_{s\in\Sc} \left(f(s) - f_b(s) \right) \left(\phi^\top(s)\theta_t - \gamma\left[P_\pi\Phi\right]_{(s,\cdot)}\theta_t -r_\pi(s)\right) \phi(s)\\
	&\quad= \sum_{s\in\Sc} \left(f(s) - f_b(s) \right) \left( \left(I - \gamma P_\pi\right)\Phi\theta_t -r_\pi\right)_{s} \phi(s),
\end{align*}
where in $(i)$ We define $f_b(s) \coloneqq  \prob\left(s_t^b =s \middle| \Fc_{t-1}\right)\eptt\left[F_t^b\middle|\Fc_{t-1}, s_t^b=s \right]$. Taking $\ell_2$ norm on both sides of the above equality yields
\begin{align}
	\left\|\Tc(\theta_t) - \eptt\left[\widehat{\Tc}_t(\theta_t)\middle| \Fc_{t-1}\right]\right\|_2&\le \left\|\sum_{s\in\Sc} \left(f(s) - f_b(s) \right) \left( \left(I - \gamma P_\pi\right)\Phi\theta_t -r_\pi\right)_{s} \phi(s)\right\|_2 \nonumber\\
	&\le \sum_{s\in\Sc} \left|f(s) - f_b(s) \right|\cdot\left|\left( \left(I - \gamma P_\pi\right)\Phi\theta_t -r_\pi\right)_{s} \right|\cdot \|\phi(s)\|_2\nonumber\\
	&\le \max_{s\in\Sc}\left\{\|\phi(s)\|_2\right\} \max_{s\in\Sc}\left\{\left|\left( \left(I - \gamma P_\pi\right)\Phi\theta_t -r_\pi\right)_{s}\right|\right\} \sum_{s\in\mathcal{S}} \left|f(s) - f_b(s) \right|\nonumber\\
	&=B_\phi\|(I-\gamma P_\pi)\Phi\theta_t  -r_\pi\|_\infty \|f-f_b\|_1. \label{eq:importantmiddle}
\end{align}
We next proceed to bound $\|f- f_b\|_1$. Consider $f_b(s)$, we have 
\begin{align*}
	f_b(s) &= \prob\left(s_t^b =s \middle| \Fc_{t-1}\right) \eptt\left[F_t^b\middle| s_t^b =s, \Fc_{t-1}\right]\\
	&\overset{(i)}=\prob\left(s_t^b =s \middle| \Fc_{t-1}\right) \\
	&\qquad\sum_{\tilde{s}\in\Sc, \tilde{a}\in\Ac} \prob\left(s_t^{b-1} =\tilde{s}, a_t^{b-1} = \tilde{a} \middle| s_t^b =s , \Fc_{t-1}\right) \cdot\eptt\left[\gamma\rho_t^{b-1}F_t^{b-1} +1 \middle| \Fc_{t-1}, s_t^b =s, s_t^{b-1} =\tilde{s}, a_t^{b-1} = \tilde{a}\right]\\
	&\overset{(ii)}= \prob\left(s_t^b =s \middle| \Fc_{t-1}\right)\left( 1 +\sum_{\tilde{s}\in\Sc, \tilde{a}\in\Ac}  \frac{\prob\left(s_t^{b-1} =\tilde{s}\middle|\Fc_{t-1}\right)\mu(\tilde{a}|\tilde{s})\mathsf{P}(s|\tilde{s},\tilde{a})}{\prob\left(s_t^b =s \middle| \Fc_{t-1}\right)} \eptt\left[\gamma\frac{\pi(\tilde{a}|\tilde{s})}{\mu(\tilde{a}|\tilde{s})}F_t^{b-1}\middle| \Fc_{t-1},s_t^{b-1} =\tilde{s}\right]\right)\\
	& = \prob\left(s_t^b =s \middle| \Fc_{t-1}\right) + \gamma\sum_{\tilde{s}\in\Sc}\prob(s_t^{b-1} =\tilde{s}|\Fc_{t-1})P_\pi(s|\tilde{s}) \eptt\left[F_t^{b-1}\middle|\Fc_{t-1}, s_t^{b-1} =s\right],
\end{align*}
where $(i)$ follows from the law of total probability and $(ii)$ follows from the Bayes rule and the facts that $F_t^{b-1}$ only depends on the chain elements $(s_t^0, a_t^0, s_t^1, \ldots, s_t^{b-1}, a_t^{b-1})$ and the Markov property.

Define  $d_{\mu,b}(s) = \prob\left(s_t^b =s \middle| \Fc_{t-1}\right)$, the above equality can be rewritten as
\begin{align}
	f_b(s) = d_{\mu, b}(s) + \gamma\sum_{\tilde{s}} P_\pi(s|\tilde{s})f_{b-1} (\tilde{s}).\label{eq:midproposition1}
\end{align}
Since \cref{eq:midproposition1} holds for all $s\in\Sc$, we have
\begin{align}
	f_b = d_{\mu, b} + \gamma P_\pi^\top f_{b-1}.\label{eq:iterativef}
\end{align}
Note that for $f$ we have the following holds \citep{sutton2016emphatic,zhang2020gendice} 
\begin{align}
    f = d_{\mu} + \gamma P_\pi^\top f.\label{eq:midpropostion1_2}
\end{align}
 \Cref{eq:iterativef,eq:midpropostion1_2} imply
\begin{align*}
	f - f_b = d_\mu -d_{\mu, b}   + \gamma P_\pi^\top (f - f_{b-1}).
\end{align*}
Applying the above equality recursively yields
\begin{align}
	f - f_b &= \sum_{\tau=0}^{b-1} (\gamma P_\pi^\top)^\tau(d_\mu - d_{\mu, b-\tau}) + \gamma^b(P_\pi^\top)^b(f - f_0)\nonumber.
\end{align} 
Take $\ell_1$ norm on both sides of the above equality, we have
\begin{align}
	\|f - f_b\|_1&=\left\| \sum_{\tau=0}^{b-1} \gamma^\tau (P_\pi^\top)^\tau (d_{\mu} - d_{\mu, b-\tau}) + \gamma^b (P_\pi^\top)^b(f - f_0) \right\|\nonumber\\
	&\le \sum_{\tau=0}^{b-1} \gamma^\tau\left\| (P_\pi^\top)^\tau (d_{\mu} - d_{\mu, b-\tau})\right\|_1 + \gamma^b \left\|(P_\pi^\top)^b(f - f_0)\right\|_1\nonumber\\ 
	&\le \sum_{\tau=0}^{b-1} \gamma^\tau\left\| (P_\pi^\top)^\tau\right\|_1 \left\|d_{\mu} - d_{\mu, b-\tau}\right\|_1 + \gamma^b \left\|(P_\pi^\top)^b\right\|\left\|f - f_0\right\|_1 \nonumber\\
	&\overset{(i)}\le\sum_{\tau=0}^{b-1} \gamma^\tau\left\|d_{\mu} - d_{\mu, b-\tau}\right\|_1 + \gamma^b \left\|f - f_0\right\|_1 \nonumber\\
	&\overset{(ii)}\le \sum_{\tau=0}^{b-1} C_M\gamma^\tau\chi^{b-\tau} + \gamma^b \|f - f_0\|_1\nonumber\\
	&\overset{(ii)}\le\frac{1}{|\chi-\gamma|}\cdot C_M \xi^b  + \gamma^b (1 + \|f\|_1),\label{eq:middleresult1}
\end{align}
where $(i)$ follows from \Cref{lemma:transitionkernel}, $(ii)$ follows from \Cref{lemma:ergodicity} , and $(iii)$ follows from \Cref{lemma:series} and defining $\xi\coloneqq \max\{\chi, \gamma\}$.

To bound the term $\|(I-\gamma P_\pi)\Phi\theta_t  -r_\pi\|_\infty$, we proceed as following
\begin{align}
	\|(I-\gamma P_\pi)\Phi\theta_t  -r_\pi\|_\infty  &\overset{(i)}= \|(I-\gamma P_\pi)(\Phi\theta_t  -V_\pi)\|_\infty\nonumber\\
	&= \|(I-\gamma P_\pi)(\Phi\theta_t  - \Phi\theta^* + \Phi\theta^* -V_\pi)\|_\infty\nonumber\\
	&\le \|I-\gamma P_\pi\|_\infty (\|\Phi\theta_t  - \Phi\theta^*\|_\infty + \|\Phi\theta^* -V_\pi\|_\infty)\nonumber\\
	&\overset{(ii)}\le (1+\gamma) B_\phi \|\theta_t - \theta^*\|_2 + (1+\gamma)\epsilon_{approx}, \label{eq:middleresult2}
\end{align}
where $(i)$ follows from the fact $V_\pi = (I-\gamma P_\pi)^{-1} r_\pi$ and $(ii)$ follows from the facts that $\|I - \gamma P_\pi\|_\infty = \max_{i} \{1 -(P_\pi)_{(i,i)}+ \gamma\sum_{j\neq i} (P_\pi)_{(i,j)}\} \le 1 +\gamma$, $\epsilon_{approx} \coloneqq \|\Phi\theta^* - V_\pi\|_\infty$, and  
\begin{align*}
	\|\Phi\theta_t  - \Phi\theta^*\|_\infty = \max_{s\in\Sc} \phi^\top (s)(\theta_t - \theta^*) \le B_\phi \|\theta_t - \theta^*\|_2.
\end{align*}

Substituting \cref{eq:middleresult1,eq:middleresult2} into \cref{eq:importantmiddle} yields
\begin{align*}
	\left\|\Tc(\theta_t) - \eptt\left[\widehat{\Tc}_t(\theta_t)\middle| \Fc_{t-1}\right]\right\|_2 &\le\left( B_\phi^2 \|\theta_t - \theta^*\|_2 + B_\phi\epsilon_{approx}\right)(1+\gamma) \left(  \tfrac{C_M\xi^b}{|\chi -\gamma|}   + \gamma^b (1 + \|f\|_1)\right)\\
	&\le C_b\left(B_\phi\|\theta_t - \theta^*\|_2 + \epsilon_{approx}\right)\xi^b,
\end{align*}
where $\xi= \max\left\{\chi, \gamma\right\}$ and $C_b =B_\phi (1+\gamma)\left(\tfrac{C_M}{|\chi-\gamma|}+ \left( 1 + \|f\|_1\right)\right)$. 

\subsection{Proof of \Cref{prop:variancetd0}}
According to the definition of $\widehat{\Tc}_t(\theta_t)$, we have
\begin{align}
	&\eptt\left[\left\|\widehat{\Tc}_t(\theta_t)\right\|^2\middle| \Fc_{t-1}\right]\nonumber\\
	&\qquad= \eptt\left[\left(\rho_t^bF_t^b\right)^2\left(\theta_t^\top\phi_t^b - r^b_t - \gamma\theta_t^\top\phi^{b+1}_{t}\right)^2\|\phi^b_t\|_2^2\middle|\Fc_{t-1}\right]\nonumber\\
	&\qquad\overset{(i)}= \sum_{s\in\Sc}\sum_{a\in\Ac}\sum_{s^\prime\in\Sc} \prob\left(s_t^b =s, a_t^b= a, s_t^{b+1}=s^\prime \middle| \Fc_{t-1}\right)\cdot\left(\theta_t^\top\phi(s) - r(s, a) - \gamma\theta_t^\top\phi(s^\prime)\right)^2\nonumber\\
	&\qquad\qquad\cdot\|\phi(s)\|_2^2\cdot \eptt\left[(\rho_t^bF_t^b)^2\middle| \Fc_{t-1}, s_t^b =s, a_t^b= a, s_t^{b+1}=s^\prime\right]\nonumber\\
	&\qquad\overset{(ii)}= \sum_{s\in\Sc}\sum_{a\in\Ac}\sum_{s^\prime\in\Sc} \prob\left(s_t^b=s\middle| \Fc_{t-1}\right) \mu( a|s)\mathsf{P}(s^\prime| s,a ) \cdot\left(\theta_t^\top\phi(s) - r(s, a) - \gamma\theta_t^\top\phi(s^\prime)\right)^2\nonumber\\
	&\qquad\qquad \cdot\|\phi(s)\|_2^2
	\cdot\frac{\pi^2(a|s)}{\mu^2(a|s)}\eptt\left[(F_t^b)^2\middle| \Fc_{t-1}, s_t^b =s, a_t^b= a, s_t^{b+1}=s^\prime\right]\nonumber\\
	&\qquad=\sum_{s\in\Sc} \prob\left(s_t^b=s\middle|\Fc_{t-1}\right)\eptt\left[(F_t^b)^2\middle| \Fc_{t-1}, s_t^b =s\right] \|\phi(s)\|_2^2\nonumber\\
	&\qquad\qquad \cdot\sum_{a\in\Ac}\sum_{s^\prime\in \Sc} \frac{\pi^2(a|s)}{\mu(a|s)} \mathsf{P}(s^\prime| s, a) (\phi^\top(s)\theta_t - r(s, a) - \gamma\phi^\top(s^\prime)\theta_t)^2, \label{eq:firstboundofvariance}
\end{align}
where $(i)$ follows from the law of total probability and $(ii)$ follows from the fact that  $F_t^b$ is independent from previous states and actions given $s_t^b$.

Note that $\|\phi(s)\|_2 \le B_\phi$ for all $s\in\Sc$, $r(s, a)\le r_{max}$ for all $(s,a) \in\Sc\times \Ac$, and $\|\theta_t\|_2\le B_\theta$ for all $t$ due to projection. We have 
\begin{align}
	(\phi^\top(s)\theta_t - r(s, a) - \gamma\phi^\top(s^\prime)\theta_t)^2 &\le 2[(\phi(s)- \gamma\phi(s^\prime))^\top\theta_t]^2 + 2 r^2(s, a)\nonumber\\
	&\le 2\|\phi(s) - \gamma\phi(s^\prime)\|_2^2\|\theta_t\|_2^2 + 2r_{max}^2\nonumber\\
	&\le 4(\|\phi(s)\|_2^2 + \gamma^2\|\phi(s^\prime)\|_2^2)\|\theta_t\|_2^2 + 2r_{max}^2\nonumber\\
	&\le 4(1+\gamma^2)B_\phi^2B_\theta^2 + 2r_{max}^2.\label{eq:boundingsquare}
\end{align}

We also have 
\begin{align*}
	\sum_{a\in\Ac}\sum_{s^\prime\in \Sc} \frac{\pi^2(a|s)}{\mu(a|s)} \mathsf{P}(s^\prime| s, a) &\le\rho_{max}\cdot \sum_{a\in\Ac}\sum_{s^\prime\in \Sc} \pi(a|s)\mathsf{P}(s^\prime| s, a) =\rho_{max}.
\end{align*}

Substituting the above two inequalities into \cref{eq:firstboundofvariance} yields
\begin{align}
	&\eptt\left[\left\|\widehat{\Tc}_t(\theta_t)\right\|^2\middle| \Fc_{t-1}\right]  \le\rho_{max} \left(4(1+\gamma^2)B_\phi^2B_\theta^2 + 2r_{max}^2\right)B_\phi^2 \sum_{s\in\Sc} \prob\left(s_t^b=s\middle|\Fc_{t-1}\right)\eptt\left[(F_t^b)^2\middle| \Fc_{t-1}, s_t^b =s\right].\label{eq:secondboundsonvariance}
\end{align}
Define $r_b(s) \coloneqq \prob\left(s_t^b =s \middle| \Fc_{t-1}\right)\eptt\left[(F_t^b)^2 \middle| \Fc_{t-1}, s_t^b =s\right] = d_{\mu, b}(s)\eptt\left[(F_t^b)^2 \middle| \Fc_{t-1}, s_t^b =s\right]$.

We have the following equations hold for $r_b(s)$:
\begin{align}
	r_b(s) &= \prob(s_t^b=s | \Fc_{t-1}) \eptt\left[\left(\gamma\rho_t^{b-1}F_t^{b-1} + 1\right)^2\middle| \Fc_{t-1}, s_t^b =s\right]\nonumber\\
	&= d_{\mu, b}(s) \eptt\left[1+2\gamma\rho_t^{b-1}F_t^{b-1} + \gamma^2(\rho_t^{b-1})^2(F_t^{b-1})^2\middle| \Fc_{t-1}, s_t^b =s\right]\nonumber\\
	&= d_{\mu, b}(s) + 2\gamma d_{\mu, b}(s)\eptt\left[\rho_t^{b-1}F_t^{b-1}\middle| \Fc_{t-1}, s_t^b =s\right] \nonumber\\
	&\quad + \gamma^2d_{\mu, b}(s) \eptt\left[(\rho_t^{b-1})^2(F_t^{b-1})^2\middle| \Fc_{t-1}, s_t^b =s\right]. \label{eq:thedeductionofmiddleterm}
\end{align}
For the second term in the RHS of \cref{eq:thedeductionofmiddleterm}, we have
\begin{align}
	&d_{\mu, b}(s)\eptt\left[\rho_t^{b-1}F_t^{b-1}\middle| \Fc_{t-1}, s_t^b =s\right]  \nonumber\\
	&\quad\overset{(i)}= d_{\mu, b}(s) \sum_{\tilde{s}\in\Sc, \tilde{a}\in\Ac} \prob\left(s_t^{b-1} =\tilde{s}, a_t^{b-1} = \tilde{a}\middle| s_t^b =s, \Fc_{t-1}\right)\cdot\eptt\left[\rho_t^{b-1}F_t^{b-1}\middle| \Fc_{t-1}, s_t^b =s, s_t^{b-1}=\tilde{s}, a_t^{b-1}=\tilde{a}\right]\nonumber\\
	&\quad\overset{(ii)}= d_{\mu, b}(s) \sum_{\tilde{s}, \tilde{a}} \frac{d_{\mu, b-1}(\tilde{s})\mu(\tilde{a}|\tilde{s})\mathsf{P}(s|\tilde{s},\tilde{a})}{d_{\mu,b}(s)}\cdot\eptt\left[\rho_t^{b-1}F_t^{b-1}\middle| \Fc_{t-1}, s_t^b =s, s_t^{b-1}=\tilde{s}, a_t^{b-1}=\tilde{a}\right]\nonumber\\
	&\quad \overset{(iii)}=\sum_{\tilde{s}\in\Sc, \tilde{a}\in\Ac}d_{\mu, b-1}(\tilde{s})\mu(\tilde{a}|\tilde{s})\mathsf{P}(s|\tilde{s},\tilde{a})\cdot\frac{\pi(\tilde{a}|\tilde{s})}{\mu(\tilde{a}|\tilde{s})}\eptt\left[F_t^{b-1}\middle| \Fc_{t-1}, s_t^{b-1}=\tilde{s}\right]\nonumber\\
	&\quad= \sum_{\tilde{s}\in\Sc} {P}_\pi(s|\tilde{s})\cdot d_{\mu, b-1}(\tilde{s})\eptt\left[F_t^{b-1}\middle| \Fc_{t-1}, s_t^{b-1}=\tilde{s}\right]\nonumber\\
	&\quad\overset{(iv)}= (P_\pi^\top f_{b-1})_s,\label{eq:secondtermofvariance}
\end{align}
where $(i)$ follows from the law of total probability, $(ii)$ follows from the Bayes rule, $(iii)$ follows from Markov property and $(iv)$ follow from the definition of $f_b$ which is given above \cref{eq:importantmiddle}.

For the third term on the RHS of \cref{eq:thedeductionofmiddleterm}, we have
\begin{align}
	&d_{\mu,  b}(s) \eptt\left[(\rho_t^{b-1} F_b^{b-1})^2\middle| \Fc_{t-1}, s_t^b =s\right]\nonumber\\
	& \overset{(i)}=d_{\mu, b}(s)\sum_{\tilde{s}\in\Sc, \tilde{a}\in\Ac}\prob\left(s_t^{b-1} =\tilde{s}, a_t^{b-1} = \tilde{a}\middle| s_t^b =s, \Fc_{t-1}\right)\cdot \eptt\left[(\rho_t^{b-1} F_b^{b-1})^2\middle| \Fc_{t-1}, s_t^b =s, s_t^{b-1} =\tilde{s}, a_t^{b-1} = \tilde{a}\right]\nonumber\\
	&\overset{(ii)}=d_{\mu, b}(s) \sum_{\tilde{s}, \tilde{a}} \frac{d_{\mu, b-1}(\tilde{s})\mu(\tilde{a}|\tilde{s})\mathsf{P}(s|\tilde{s},\tilde{a})}{d_{\mu,b}(s)}\cdot\eptt\left[(\rho_t^{b-1}F_t^{b-1})^2\middle| \Fc_{t-1}, s_t^b =s, s_t^{b-1}=\tilde{s}, a_t^{b-1}=\tilde{a}\right] \nonumber\\
	&= \sum_{\tilde{s}\in\Sc, \tilde{a}\in\Ac} d_{\mu, b-1}(\tilde{s})\mu(\tilde{a}|\tilde{s})\mathsf{P}(s|\tilde{s},\tilde{a})\cdot \frac{\pi^2(\tilde{a}|\tilde{s})}{\mu^2(\tilde{a}|\tilde{s})}\cdot\eptt\left[(F_t^{b-1})^2\middle| \Fc_{t-1}, s_t^{b-1}=\tilde{s}\right] \nonumber\\
	&\overset{(iii)}= \sum_{\tilde{s}\in\Sc} P_{\mu, \pi} (s|\tilde{s}) r_{b-1}(\tilde{s}) \nonumber\\
	&= (P_{\mu, \pi}^\top r_{b-1})_s, \label{eq:thirdtermofvariance}
\end{align}
where $(i)$ follows from the law of total probability, $(ii)$ follows from the Bayes' rule, and in $(iii)$ we define $P_{\mu, \pi}\in \mathbb{R}^{|\mathcal{S}|\times |\mathcal{S}|}$ where $(P_{\mu, \pi})_{s, \tilde{s}} = \sum_{\tilde{a}\in\Ac} \frac{\pi^2(\tilde{a}|\tilde{s})}{\mu(\tilde{a}|\tilde{s})}\mathsf{P}(s|\tilde{s}, \tilde{a})$ for each $(s,\tilde{s})\in\Sc\times\Sc$.

Substituting \cref{eq:secondtermofvariance,eq:thirdtermofvariance} into \cref{eq:thedeductionofmiddleterm} yields
\begin{align*}
	r_b = d_{\mu, b} + 2\gamma P_\pi^\top f_{b-1} + \gamma^2 P_{\mu, \pi}^\top r_{b-1}.
\end{align*}
We also have the following inequality holds
\begin{align}
	%	\sum_{s\in\Sc} r_b(s) &=  \nonumber\\
	\mathbf{1}^\top r_b&= \mathbf{1}^\top d_{\mu, b} + 2\gamma \mathbf{1}^\top P_\pi^\top f_{b-1} + \gamma^2 \mathbf{1}^\top P_{\mu, \pi}^\top r_{b-1}\overset{(i)}= 1 + 2\gamma \mathbf{1}^\top f_{b-1} + \gamma^2 \mathbf{1}^\top P_{\mu, \pi}^\top r_{b-1}\nonumber\\
	&\overset{(ii)}\le 1 + 2\gamma \mathbf{1}^\top f_{b-1} + \gamma^2 \rho_{max} \mathbf{1}^\top r_{b-1}, \label{eq: 3}
\end{align}
where $(i)$ follows from $\mathbf{1}^\top P_\pi^\top = (P_\pi \mathbf{1})^\top = \mathbf{1}^\top$, and $(ii)$ follows from the facts that $r_{b-1}\succeq \mathbf{0}$ and 
\begin{align*}
	\mathbf{1}^\top P_{\mu, \pi}^\top &= (P_{\mu, \pi} \mathbf{1})^\top =\mathrm{vec} \left(\sum_{s\in\Sc}\sum_{\tilde{a}\in\Ac}\frac{\pi^2(\tilde{a}|\tilde{s})}{\mu(\tilde{a}|\tilde{s})} \mathsf{P}(s|\tilde{s}, \tilde{a})\right)= \mathrm{vec}\left(\sum_{\tilde{a}\in\Ac} \frac{\pi^2(\tilde{a}|\tilde{s})}{\mu(\tilde{a}|\tilde{s})}\right)\preceq \rho_{max} \mathbf{1}^\top.
\end{align*}

Recursively applying \cref{eq: 3} yields
\begin{align}
	\mathbf{1}^\top r_b &\le\sum_{\tau = 0}^{b-1} (\gamma^2 \rho_{max})^\tau ( 1+ 2\gamma \mathbf{1}^\top f_{b-\tau -1})  + (\gamma^2\rho_{max})^b \mathbf{1}^\top r_0 \overset{(i)}= \sum_{\tau = 0}^{b-1} (\gamma^2 \rho_{max})^\tau ( 1+ 2\gamma \mathbf{1}^\top f_{b-\tau -1})  + (\gamma^2 \rho_{max})^b, \label{eq:importantmiddle2}
\end{align}
where $(i)$ follows from the fact that $\mathbf{1}^\top r_0 = 1$.

Recall that $f_b = d_{\mu, b} + \gamma P_\pi^\top f_{b-1}$ and $f = d_{\mu} + \gamma P_\pi^\top f$. We have
\begin{align*}
	\mathbf{1}^\top (f_{\tau} - f)  &= \mathbf{1}^\top (d_{\mu, \tau} - d_{\mu}) + \gamma \mathbf{1}^\top P_\pi^\top (f_{\tau-1}  -f)\overset{(i)}=\gamma \mathbf{1}^\top(f_{\tau -1} - f)\overset{(ii)} = \gamma^\tau \mathbf{1}^\top(f_{0} - f),
\end{align*}
where $(i)$ follows from the facts that $d_{\mu, \tau}$ and $d_\mu$ are both probability distributions and $\mathbf{1}^\top d_{\mu, \tau} = \mathbf{1}^\top d_{\mu} = 1$ and $\mathbf{1}^\top P_\pi^\top = \mathbf{1}^\top$ and $(ii)$ follows from recursively applying $(i)$. 

Thus, we have
\begin{align}
	|\mathbf{1}^\top f_\tau| &= |(1- \gamma^\tau) \mathbf{1}^\top f + \gamma^\tau \mathbf{1}^\top f_0|\le |(1-\gamma^\tau) \mathbf{1}^\top f| + \gamma^\tau\le \|f\|_1 + 1.\label{eq:l1normonf}
\end{align}
Substituting \cref{eq:l1normonf} into \cref{eq:importantmiddle2} yields
\begin{align*}
	\mathbf{1}^\top r_b &\le \sum_{\tau=0}^{b-1}(\gamma^2\rho_{max})^\tau (3 +2\gamma\|f\|_1) + (\gamma^2 \rho_{max})^b.
\end{align*}
Under different conditions of $\rho_{max}$, the term $\mathbf{1}^\top r_b$ is upper bounded differently as following:

($a$). $\gamma^2\rho_{max}>1$
    \begin{align}
	\mathbf{1}^\top r_b &\le \left(\frac{3 + 2\gamma\|f\|_1}{\gamma^2 \rho_{max} -1 } + 1\right) \gamma^{2b} \rho_{max}^b.\label{eq:boundonrtau}
	\end{align}

($b$). $\gamma^2\rho_{max}=1$
	\begin{align}
	\mathbf{1}^\top r_b &\le \left(3 + 2\gamma\|f\|_1\right)b + 1.\label{eq:boundonrtau2}
    \end{align}
    
    % \item 
($c$). $\gamma^2\rho_{max}<1$
    \begin{align}
	\mathbf{1}^\top r_b &\le \frac{3 + 2\gamma\|f\|_1}{1-\gamma^2\rho_{max}} + 1.\label{eq:boundonrtau3}
    \end{align} 

Substituting the above inequalities into \cref{eq:secondboundsonvariance}, we can upper-bound the term $\eptt\left[\left\|\widehat{\Tc}_t(\theta_t)\right\|_2^2\middle| \Fc_{t-1}\right]$ under different conditions accordingly: 

($a$). $\gamma^2\rho_{max}>1$
\begin{align*}
	\eptt\left[\left\|\widehat{\Tc}_t(\theta_t)\right\|_2^2\middle| \Fc_{t-1}\right] &\le\rho_{max} \left(4(1+\gamma^2)B_\phi^2B_\theta^2 + 2r_{max}^2\right)B_\phi^2 \left(\frac{3 + 2\gamma\|f\|_1}{\gamma^2 \rho_{max} -1 } + 1\right) \gamma^{2b} \rho_{max}^b=C_{\sigma, 1} \gamma^{2b}\rho_{max}^b,
\end{align*}
where we specify $C_{\sigma, 1} = \rho_{max} \left(4(1+\gamma^2)B_\phi^2B_\theta^2 + 2r_{max}^2\right)B_\phi^2 \left(\frac{3 + 2\gamma\|f\|_1}{\gamma^2 \rho_{max} -1 } + 1\right)$. 

($b$). $\gamma^2\rho_{max}=1$
\begin{align*}
	\eptt\left[\left\|\widehat{\Tc}_t(\theta_t)\right\|_2^2\middle| \Fc_{t-1}\right] &\le\rho_{max} \left(4(1+\gamma^2)B_\phi^2B_\theta^2 + 2r_{max}^2\right)B_\phi^2\left(\left(3 + 2\gamma\|f\|_1\right)b + 1\right)=C_{\sigma, 2} b,
\end{align*}
where we specify $C_{\sigma, 2} = \rho_{max} \left(4(1+\gamma^2)B_\phi^2B_\theta^2 + 2r_{max}^2\right)B_\phi^2\left(4 + 2\gamma\|f\|_1\right)$. 

($c$). $\gamma^2\rho_{max}<1$
\begin{align*}
	\eptt\left[\left\|\widehat{\Tc}_t(\theta_t)\right\|_2^2\middle| \Fc_{t-1}\right] &\le\rho_{max} \left(4(1+\gamma^2)B_\phi^2B_\theta^2 + 2r_{max}^2\right)B_\phi^2\left(\frac{3 + 2\gamma\|f\|_1}{1-\gamma^2\rho_{max}} + 1\right)\coloneqq C_{\sigma, 3}.
\end{align*}
where we specify $C_{\sigma, 3} = \rho_{max} \left(4(1+\gamma^2)B_\phi^2B_\theta^2 + 2r_{max}^2\right)B_\phi^2\left(\frac{3 + 2\gamma\|f\|_1}{1-\gamma^2\rho_{max}}+ 1\right)$. 

To summarize, the variance term $\left\|\widehat{\Tc}_t(\theta_t)\right\|_2^2$ can be bounded as following
\begin{align*}
    \eptt\left[\left\|\widehat{\Tc}_t(\theta_t)\right\|_2^2\middle| \Fc_{t-1}\right] \le \sigma^2,
\end{align*}
where $$\sigma^2 =\left\{\begin{aligned}&\mathcal{O}(1), &\textit{if }\gamma^2\rho_{max}<1.\\&\mathcal{O}(b), &\textit{if }\gamma^2\rho_{max} = 1.\\&\mathcal{O}((\gamma^2\rho_{max})^b), &\textit{if } \gamma^2\rho_{max}>1.\end{aligned}\right.$$

\subsection{Proof of \Cref{thm:etd0}}\label{app:proofth1}
{\color{black}
\begin{theorem}[Formal Statement of \Cref{thm:etd0}]
Suppose \Cref{assumption:behaviorpolicy,assumption:markovchain} hold. Consider PER-ETD(0) specified in \Cref{alg:betd0}. Let the stepsize $\eta_t = \tfrac{2}{\mu_0(t+t_0)}$, where $t_0 = \tfrac{8 L_0^2}{\mu_0^2}$, $\mu_0$ is defined in \Cref{lemma:stronglyconvex}, and $L_0$ is defined in \Cref{lemma:lipschitz} in \Cref{sec:supportinglemmas}. Let the projection set $\Theta = \left\{\theta\in \mathbb{R}^d: \|\theta\|_2\le B_\theta\right\}$, where $B_\theta=\tfrac{\|\Phi^\top\|_2r_{max}}{(1-\gamma)\mu_0}$ (which implies $\theta^* \in \Theta$). 
Then the convergence guarantee falls into the following two cases depending on the value of $\rho_{max}$.

$(a)$ If $\gamma^2\rho_{max}\le 1$, let $b =\max\left\{ \left\lceil\tfrac{\log(\mu_0) - \log(5C_bB_\phi)}{\log(\xi)}\right\rceil, \frac{\log T}{\log (1/\xi)}\right\}$, where $C_b$ is a constant defined in the proof of \Cref{prop:biastd0} in \Cref{sec:proofofetd0}, $B_\phi\coloneqq \max_{s\in\Sc}\|\phi(s)\|_2$, and $\xi \coloneqq \max\{\gamma, \chi\}$. Then the output $\theta_T$ satisfies 
$$\eptt\left[\|\theta_T - \theta^*\|_2^2\right] \le \tilde{\mathcal{O}}\left(\frac{1}{T}\right).$$ 

$(b)$  If $\gamma^2\rho_{max}>1$, let $b =\max\left\{ \left\lceil\tfrac{\log(\mu_0) - \log(5C_bB_\phi)}{\log(\xi)}\right\rceil, \tfrac{\log(T)}{\log(\gamma^2\rho_{max})  + \log(1/\xi)}\right\}$, where $C_b$ is a constant whose definition could be found in the proof of \Cref{prop:biastd0} in \Cref{sec:proofofetd0}, $B_\phi\coloneqq \max_{s\in\Sc}\|\phi(s)\|_2$, and $\xi \coloneqq \max\{\gamma, \chi\}$.
Then the output $\theta_T$ satisfies 
$$\eptt\left[\|\theta_T - \theta^*\|_2^2\right] \le \mathcal{O}\left(\frac{1}{T^a}\right),$$where $a =\tfrac{1}{{\log_{1/\xi}(\gamma^2\rho_{max})  + 1}} <1$. 

Thus, PER-ETD($0$) attains an $\epsilon$-accurate solution with $\tilde{\mathcal{O}}\left(\frac{1}{\epsilon}\right)$ samples if $\gamma^2\rho_{max}\le 1$, and with $\tilde{\mathcal{O}}\left(\frac{1}{\epsilon^{1/a}}\right)$ samples if $\gamma^2\rho_{max}> 1$. 
\end{theorem}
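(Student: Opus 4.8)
The plan is to read PER-ETD(0) as a projected stochastic-approximation scheme for the strongly monotone, Lipschitz operator $\Tc$ with a small, $b$-controlled bias, derive a scalar recursion for $\delta_t\defeq\eptt[\|\theta_t-\theta^*\|_2^2]$ from the bias/variance bounds \Cref{prop:biastd0,prop:variancetd0} and the structural \Cref{lemma:stronglyconvex,lemma:lipschitz,lemma:threepointlemma}, solve that recursion for the stepsize $\eta_t=\tfrac{2}{\mu_0(t+t_0)}$, and finally substitute the two prescribed choices of $b$.

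\textbf{Step 1 (one-step bound).} Since $\theta_{t+1}=\Pi_\Theta(\theta_t-\eta_t\widehat{\Tc}_t(\theta_t))$ is the maximizer over $\Theta$ of $-\eta_t\langle\widehat{\Tc}_t(\theta_t),\theta\rangle-\tfrac12\|\theta-\theta_t\|_2^2$, I would apply \Cref{lemma:threepointlemma} with $\theta=\theta^*$ and then cancel the term $\langle\widehat{\Tc}_t(\theta_t),\theta_{t+1}-\theta_t\rangle$ against $\tfrac12\|\theta_{t+1}-\theta_t\|_2^2$ by Young's inequality, obtaining the deterministic estimate
\[
\|\theta_{t+1}-\theta^*\|_2^2\le\|\theta_t-\theta^*\|_2^2-2\eta_t\langle\widehat{\Tc}_t(\theta_t),\theta_t-\theta^*\rangle+\eta_t^2\|\widehat{\Tc}_t(\theta_t)\|_2^2 .
\]
Because the filtration is arranged so that $\theta_t$ is $\Fc_{t-1}$-measurable, taking $\eptt[\cdot\mid\Fc_{t-1}]$ turns the last term into the variance bound $\sigma^2$ of \Cref{prop:variancetd0} and the cross term into $\langle\eptt[\widehat{\Tc}_t(\theta_t)\mid\Fc_{t-1}],\theta_t-\theta^*\rangle$.

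\textbf{Step 2 (isolating the contraction).} I would split $\eptt[\widehat{\Tc}_t(\theta_t)\mid\Fc_{t-1}]=\Tc(\theta_t)+\big(\eptt[\widehat{\Tc}_t(\theta_t)\mid\Fc_{t-1}]-\Tc(\theta_t)\big)$, lower-bound the first piece by $\mu_0\|\theta_t-\theta^*\|_2^2$ via \Cref{lemma:stronglyconvex}, and bound the second by Cauchy--Schwarz together with the (pointwise) bias estimate of \Cref{prop:biastd0}, namely $\le C_b(B_\phi\|\theta_t-\theta^*\|_2+\epsilon_{approx})\xi^b$. The threshold on $b$ is used precisely here: $b\ge\lceil(\log\mu_0-\log(5C_bB_\phi))/\log\xi\rceil$ forces $C_bB_\phi\xi^b\le\mu_0/5$, so the quadratic bias contribution is absorbed by a fraction of the contraction, while the $\epsilon_{approx}$-part is handled by Young's inequality, leaving an additive $\mathcal{O}(\eta_t\xi^{2b})$. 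Taking full expectation gives
\[
\delta_{t+1}\le(1-\mu_0\eta_t)\,\delta_t+\eta_t^2\sigma^2+\mathcal{O}\!\left(\eta_t\xi^{2b}\right),
\]
which is the per-step form of the decomposition stated in \cref{eq:conv0}.

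\textbf{Step 3 (solving the recursion and choosing $b$).} With $\eta_t=\tfrac{2}{\mu_0(t+t_0)}$ and $t_0=8L_0^2/\mu_0^2\ge2$ (so $1-\mu_0\eta_t=1-\tfrac2{t+t_0}\ge0$), I would multiply through by $(t+t_0)(t+t_0-1)$, telescope, and divide back to get $\delta_T\le\mathcal{O}(\delta_0/T^2)+\mathcal{O}(\sigma^2/T)+\mathcal{O}(\xi^b)$, the $\mathcal{O}(\xi^{2b})$ contribution being absorbed. If $\gamma^2\rho_{max}\le1$ then $\sigma^2=\mathcal{O}(b)$, and $b=\Theta(\log T/\log(1/\xi))$ makes $\xi^b=\mathcal{O}(1/T)$ and $\sigma^2=\mathcal{O}(\log T)$, hence $\delta_T=\tilde{\mathcal{O}}(1/T)$. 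If $\gamma^2\rho_{max}>1$ then $\sigma^2=\mathcal{O}((\gamma^2\rho_{max})^b)$, and $b=\Theta(\log T/(\log(\gamma^2\rho_{max})+\log(1/\xi)))$ gives $(\gamma^2\rho_{max})^b=T^{1-a}$ and $\xi^b=T^{-a}$ with $a=1/(\log_{1/\xi}(\gamma^2\rho_{max})+1)$, so both $\sigma^2/T$ and $\xi^b$ are $\mathcal{O}(T^{-a})$, giving $\delta_T=\mathcal{O}(T^{-a})$. Since each iteration consumes $b+\mathcal{O}(1)$ samples, the sample budget is $\mathcal{O}(Tb)=\tilde{\mathcal{O}}(T)$, which converts the iteration bounds into the claimed $\tilde{\mathcal{O}}(1/\epsilon)$ and $\tilde{\mathcal{O}}(\epsilon^{-1/a})$ complexities; $\theta^*\in\Theta$ follows routinely from $\theta^*=(\Phi^\top F(I-\gamma P_\pi)\Phi)^{-1}\Phi^\top Fr_\pi$, $\lambda_{min}(\Phi^\top F(I-\gamma P_\pi)\Phi)=\mu_0$, and $\|f\|_1=1/(1-\gamma)$.

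\textbf{Main obstacle.} The crux is not any individual inequality but the coupled bookkeeping: the "constants" $\sigma^2$ and $\xi^b$ depend on $b$, which is tied to the horizon $T$, so $b$ must be chosen to equate an exponentially growing variance against an exponentially decaying bias while keeping $C_bB_\phi\xi^b$ small enough to preserve the contraction. Extracting the exact exponent $a$ from this balance, and checking that the constant floor $\lceil(\log\mu_0-\log(5C_bB_\phi))/\log\xi\rceil$ is eventually dominated by the $T$-dependent choice of $b$, is the delicate part.
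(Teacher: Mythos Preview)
Your proposal is correct and follows the paper's overall strategy---three-point lemma, bias/variance decomposition via \Cref{prop:biastd0,prop:variancetd0}, telescoping with quadratic weights, then substituting the prescribed $b$---but with two minor technical simplifications worth noting. First, you decompose the cross term around $\Tc(\theta_t)$ and obtain the explicit recursion $\delta_{t+1}\le(1-\mu_0\eta_t)\delta_t+\cdots$, whereas the paper decomposes around $\Tc(\theta_{t+1})$, which brings in the Lipschitz constant $L_0$ via $\|\Tc(\theta_t)-\Tc(\theta_{t+1})\|$ and produces the implicit form $(1+2\mu_0\eta_t-2\eta_t^2L_0^2)\,\eptt[\|\theta_{t+1}-\theta^*\|^2]\le\cdots$; your route is slightly cleaner and explains why $t_0=8L_0^2/\mu_0^2$ is only needed to guarantee $\eta_t\le 1/\mu_0$. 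Second, for the $\epsilon_{approx}$-part of the bias you use Young's inequality, obtaining an additive $\mathcal{O}(\eta_t\xi^{2b})$ and hence a final $\mathcal{O}(\xi^{2b})$ term; the paper instead invokes the projection bound $\|\theta_t-\theta^*\|_2\le 2B_\theta$, yielding the (weaker) $\mathcal{O}(\xi^b)$ term you quote from \cref{eq:conv0}. After substituting the two choices of $b$, both routes give exactly the same rates.
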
 
}
\begin{proof}
Note the $\theta$ update specified in \Cref{alg:betd0} is the closed form solution of the following maximization problem.
\begin{align*}
	\theta_{t+1} = \argmax_{\theta\in\Theta} \eta_t\left\langle \widehat{\Tc}_t (\theta_t), \theta\right\rangle + \frac{1}{2} \|\theta - \theta_t\|_2^2.
\end{align*}
Applying \Cref{lemma:threepointlemma} with $\theta^* = \theta_{t+1}$, $\eta = \eta_t$, $G= \widehat{\Tc}_t(\theta_t)$, and $\theta_0 = \theta_t$ yields, for any $\theta\in\Theta$,
\begin{align}
	\eta_t \left\langle\widehat{\Tc}_t(\theta_t), \theta_{t+1} -\theta\right\rangle + \frac{1}{2}\|\theta_t - \theta_{t+1}\|_2^2 \le \frac{1}{2}\|\theta_t - \theta\|_2^2 - \frac{1}{2}\|\theta_{t+1} - \theta\|_2^2.\label{eq:firstimportantconvergenceineq}
\end{align}
Proceed with the first term in the above inequality as follows
\begin{align*}
	&\left\langle\widehat{\Tc}_t(\theta_t), \theta_{t+1} -\theta\right\rangle\\
	&\quad=  \left\langle{\Tc}(\theta_{t+1}), \theta_{t+1} -\theta\right\rangle + \left\langle{\Tc}(\theta_t) - {\Tc}(\theta_{t+1}), \theta_{t+1} -\theta\right\rangle + \left\langle\widehat{\Tc}_t(\theta_t) - {\Tc}(\theta_{t}), \theta_{t+1} -\theta\right\rangle\\
	&\quad\overset{(i)}\ge \left\langle{\Tc}(\theta_{t+1}), \theta_{t+1} -\theta\right\rangle - L_0\|\theta_t -\theta_{t+1}\|_2\|\theta_{t+1} -\theta\|_2 + \left\langle\widehat{\Tc}_t(\theta_t) - {\Tc}(\theta_{t}), \theta_{t+1} -\theta\right\rangle\\
	&\quad=\left\langle{\Tc}(\theta_{t+1}), \theta_{t+1} -\theta\right\rangle - L_0\|\theta_t -\theta_{t+1}\|_2\|\theta_{t+1} -\theta\|_2 + \left\langle\widehat{\Tc}_t(\theta_t) - {\Tc}(\theta_{t}), \theta_{t+1} -\theta_t\right\rangle +\left\langle\widehat{\Tc}_t(\theta_t) - {\Tc}(\theta_{t}), \theta_{t} -\theta\right\rangle\\
	&\quad\ge \left\langle{\Tc}(\theta_{t+1}), \theta_{t+1} -\theta\right\rangle - L_0\|\theta_t -\theta_{t+1}\|_2\|\theta_{t+1} -\theta\|_2 -\left\|\widehat{\Tc}_t(\theta_t) - {\Tc}(\theta_{t})\right\|_2\cdot\|\theta_{t+1} -\theta_t\|_2 \\
	&\quad\qquad +\left\langle\widehat{\Tc}_t(\theta_t) - {\Tc}(\theta_{t}), \theta_{t} -\theta\right\rangle,
\end{align*}
where $(i)$ follows from the Cauchy-Schwartz inequality and \Cref{lemma:lipschitz}.

Substituting the above inequality into \cref{eq:firstimportantconvergenceineq} yields
\begin{align}
	&\eta_t\left\langle{\Tc}(\theta_{t+1}), \theta_{t+1} -\theta\right\rangle - \eta_tL_0\|\theta_t -\theta_{t+1}\|_2\|\theta_{t+1} -\theta\|_2 -\eta_t\left\|\widehat{\Tc}_t(\theta_t) - {\Tc}(\theta_{t})\right\|_2\cdot\|\theta_{t+1} -\theta_t\|_2 \nonumber\\
	&\qquad +\eta_t\left\langle\widehat{\Tc}_t(\theta_t) - {\Tc}(\theta_{t}), \theta_{t} -\theta\right\rangle + \frac{1}{2}\|\theta_t - \theta_{t+1}\|_2^2 \le \frac{1}{2}\|\theta_t - \theta\|_2^2 - \frac{1}{2}\|\theta_{t+1} - \theta\|_2^2.\label{eq:secondimportantconvergence}
\end{align}

Applying Young's inequality to $\eta_t\left\|\widehat{\Tc}_t(\theta_t) - {\Tc}(\theta_{t})\right\|_2\cdot\|\theta_{t+1} -\theta_t\|_2$ yields
\begin{align*}
	\eta \left\|\widehat{\Tc}_t(\theta_t) - {\Tc}(\theta_{t})\right\|_2\cdot\|\theta_{t+1} -\theta_t\|_2  \le \frac{1}{4}\|\theta_{t+1} -\theta_t\|_2^2 + \eta_t^2 \left\|\widehat{\Tc}_t(\theta_t) - {\Tc}(\theta_{t})\right\|_2^2,
\end{align*}
and applying Young's inequality to $\eta_tL_0\|\theta_t -\theta_{t+1}\|_2\|\theta_{t+1} -\theta\|_2$ yields 
\begin{align*}
	\eta_tL_0\|\theta_t -\theta_{t+1}\|_2\|\theta_{t+1} -\theta\|_2\le \frac{1}{4}\|\theta_t -\theta_{t+1}\|_2^2 + \eta_t^2L_0^2\|\theta_{t+1} -\theta\|_2^2.
\end{align*}
Substituting the above two inequalities into \cref{eq:secondimportantconvergence} yields
\begin{align*}
	\frac{1}{2}\|\theta_t - \theta\|_2^2 &\ge\eta_t\left\langle{\Tc}(\theta_{t+1}), \theta_{t+1} -\theta\right\rangle +\left(\frac{1}{2} - \eta_t^2L_0^2\right)\|\theta_{t+1} -\theta\|_2^2+\eta_t  \left\langle\widehat{\Tc}_t(\theta_t) - {\Tc}(\theta_{t}), \theta_{t}-\theta\right\rangle  \\
	&\qquad\qquad- \eta_t^2\left\|\widehat{\Tc}_t(\theta_t) - {\Tc}(\theta_{t})\right\|_2^2. 
\end{align*}
Taking expectation conditioned on $\Fc_{t-1}$ on the both sides of the above inequality, we obtain
\begin{align}
	\frac{1}{2}\|\theta_t - \theta\|_2^2 &\ge\eta_t\eptt\left[\left\langle{\Tc}(\theta_{t+1}), \theta_{t+1} -\theta\right\rangle\middle|\Fc_{t-1}\right] +\left(\frac{1}{2} - \eta_t^2L_0^2\right)\eptt\left[\|\theta_{t+1} -\theta\|_2^2\middle|\Fc_{t-1}\right]\nonumber\\
	&\qquad+\eta_t  \left\langle\eptt\left[\widehat{\Tc}_t(\theta_t) - {\Tc}(\theta_{t})\middle|\Fc_{t-1}\right], \theta_{t} -\theta\right\rangle - \eta_t^2\eptt\left[\left\|\widehat{\Tc}_t(\theta_t) - {\Tc}(\theta_{t})\right\|_2^2\middle|\Fc_{t-1}\right]. \label{eq:middleresult10}
\end{align}
Letting $\theta = \theta^*$ and applying \Cref{lemma:stronglyconvex} to \cref{eq:middleresult10} yields
\begin{align}
	\frac{1}{2}\|\theta_t - \theta^*\|_2^2 &\ge\left(\frac{1}{2}+ \mu_0\eta_t - \eta_t^2L_0^2\right)\eptt\left[\|\theta_{t+1} -\theta^*\|_2^2\middle|\Fc_{t-1}\right]\nonumber\\
	&\qquad -\eta_tC_bB_\phi\xi^b\left\|\theta_{t} -\theta^*\right\|^2_2 - \eta_tC_b\epsilon_{approx}\xi^b\|\theta_t -\theta^*\|_2 - \eta_t^2\eptt\left[\left\|\widehat{\Tc}_t(\theta_t) - {\Tc}(\theta_{t})\right\|_2^2\middle|\Fc_{t-1}\right]\nonumber\\
	&\overset{(i)}\ge\left(\frac{1}{2}+ \mu_0\eta_t - \eta_t^2L_0^2\right)\eptt\left[\|\theta_{t+1} -\theta^*\|_2^2\middle|\Fc_{t-1}\right]\nonumber\\
	&\quad -\eta_tC_bB_\phi\xi^b\left\|\theta_{t} -\theta^*\right\|^2_2 - \eta_tC_b\epsilon_{approx}\xi^b\|\theta_t -\theta^*\|_2 -4\eta_t^2 C_b^2B_\phi^2\xi^{2b}\|\theta_{t} -\theta^*\|_2^2 \nonumber\\
	&\qquad\quad -4\eta_t^2C_b^2\xi^{2b}\epsilon_{approx}^2 - 2\sigma^2\eta_t^2, \label{eq:applyingprop}
\end{align}
where $(i)$ follows from \Cref{prop:biastd0,prop:variancetd0}, and the facts that $(x +y)^2 \le 2x^2 + 2y^2$ and \begin{align*}
	\left\|\widehat{\Tc}_t(\theta_t) - {\Tc}(\theta_{t})\right\|_2^2&\le 2\left\|\eptt\left[\widehat{\Tc}_t(\theta_t)\middle|\Fc_{t-1}\right] - {\Tc}(\theta_{t})\right\|_2^2 + 2\left\|\eptt\left[\widehat{\Tc}_t(\theta_t)\middle|\Fc_{t-1}\right] - \widehat{\Tc}_t(\theta_t)\right\|_2^2\\
	&\le 2\left\| \widehat{\Tc}(\theta_{t})\right\|_2^2 + 2\left\|\eptt\left[\widehat{\Tc}_t(\theta_t)\middle|\Fc_{t-1}\right] - \widehat{\Tc}_t(\theta_t)\right\|_2^2.
\end{align*}
Taking expectation on both sides of the above inequality yields
\begin{align*}
	 \left(\frac{1}{2}+ \mu_0\eta_t - \eta_t^2L_0^2\right)\eptt\left[\|\theta_{t+1} -\theta^*\|_2^2\right] 
	&\le \left(\frac{1}{2} + C_bB_\phi\xi^b\eta_t+ 4C_b^2B_\phi^2\xi^{2b}\eta_t^2\right)\eptt\left[\left\|\theta_{t} -\theta^*\right\|_2^2\right]+ \eta_t C_b B_\theta\epsilon_{approx}\xi^b\\
	&\quad  +4\eta_t^2C_b^2\xi^{2b}\epsilon_{approx}^2 + 2\sigma^2\eta_t^2.
\end{align*}
Recall that we set $t_0 = \frac{8L_0^2}{\mu_0^2}$. Let $\alpha_t = (t+t_0+1)(t+t_0+2)$. Multiplying $2\alpha_t$ on both sides of the above inequality and telescoping from $t =0, 1, 2,\ldots, T-1$ yields
\begin{align}
	&\sum_{t=0}^{T-1}\alpha_t\left(1+  2\mu_0\eta_t - 2\eta_t^2L_0^2\right)\eptt\left[\|\theta_{t+1} -\theta^*\|_2^2\right]\nonumber\\
	&\quad\le \sum_{t=0}^{T-1}\alpha_t\left(1+ 2C_bB_\phi\xi^b\eta_t +8C_b^2B_\phi^2\xi^{2b}\eta_t^2\right)\eptt\left[\left\|\theta_{t} -\theta^*\right\|_2^2\right]+ \left(4\sigma^2 + 8C_b^2\xi^{2b}\epsilon_{approx}^2\right)\sum_{t=0}^{T-1}\alpha_t\eta_t^2\nonumber\\
	&\qquad+ 2C_b B_\theta\epsilon_{approx}\xi^b\sum_{t=0}^{T-1}\alpha_t\eta_t. \label{eq:thirdimportantconvergenceineq}
\end{align}

Recall the setting of $\eta_t$, we have
\begin{align*}
	1 + 2\mu_0\eta_t - 2\eta_t^2L_0^2  &=1 + \frac{3\mu_0\eta_t}{2}\left(\frac{4}{3}  - \frac{4}{3\mu_0}\eta_tL_0^2\right) =1 + \frac{3\mu_0\eta_t}{2}\left(1 + \frac{1}{3}\left(1 -\frac{4}{\mu_0}\eta_tL_0^2\right)\right) \overset{(i)}\ge 1 + \frac{3\mu_0\eta_t}{2},
\end{align*}
where $(i)$ follows from the fact that $1 - \frac{4\eta_tL_0^2}{\mu_0} \ge 1 - \frac{8L_0^2}{\mu^2_0 t_0} \ge 0$. Multiplying $\alpha_t$ on both sides of the above inequality yields
\begin{align}
	\alpha_t(1+2\mu_0\eta_t-2\eta_t^2L_0^2) &\ge (t+t_0+1)(t+t_0+2)\left(1+\frac{3\mu_0}{2}\frac{2}{\mu_0(t+t_0)}\right)\nonumber\\
	&=(t+t_0 + 1)(t+t_0+2)(t+t_0 + 3)/(t+t_0).\label{eq:middleresult11}
\end{align}

Under appropriate value of $b$, we have $C_bB_\phi\xi^b\le \frac{\mu_0}{5}$. Which implies that
\begin{align*}
	2C_bB_\phi\xi^b\eta_t +8C_b^2B_\phi^2\xi^{2b}\eta_t^2 &= \frac{\mu_0\eta_t}{2} + \frac{\mu_0\eta_t}{2}\left(\frac{4C_bB_\phi\xi^b}{\mu_0} + \frac{16C_b^2B_\phi^2\xi^{2b}\eta_t}{\mu_0}-1\right)\le  \frac{\mu_0\eta_t}{2} + \frac{\mu_0\eta_t}{2}\left(\frac{4}{5} + \frac{16\mu_0\eta_t}{25}-1\right)\\ 
	&\le \frac{\mu_0\eta_t}{2} + \frac{\mu_0\eta_t}{2}\left(\frac{4\mu_0^2}{25L_0^2} -\frac{1}{5}\right)\\
	&\le \frac{\mu_0\eta_t}{2}.
\end{align*}
Multiplying $\alpha_{t+1}$ on both sides of the above inequality yields
\begin{align}
	&\alpha_{t+1}(1+2C_bB_\phi\xi^b\eta_{t+1} +8C_b^2B_\phi^2\xi^{2b}\eta_{t+1}^2)\nonumber\\
	&\quad\le \alpha_{t+1}\left(1+\frac{\mu_0\eta_{t+1}}{2}\right) =(t+t_0+2)(t+t_0+3)\left(1 + \frac{\mu_0}{2}\frac{2}{\mu_0(t+t_0+1)}\right)\nonumber\\
	&\quad=(t+t_0+2)^2(t+t_0+3)/(t+t_0+1).\label{eq:middleresult12}
\end{align}
\Cref{eq:middleresult12,eq:middleresult11} together imply that 
\begin{align*}
	&\alpha_t(1+2\mu_0\eta_t-2\eta_t^2L_0^2) - \alpha_{t+1}(1+ 2C_bB_\phi\xi^b\eta_t +8C_b^2B_\phi^2\xi^{2b}\eta_t^2)\\
	&\qquad\ge \frac{(t+t_0 + 1)(t+t_0+2)(t+t_0 + 3)}{t+t_0} - \frac{(t+t_0+2)^2(t+t_0+3)}{t+t_0+1}\\
	&\qquad=\frac{(t+t_0+2)(t+t_0+3)}{(t+t_0)(t+t_0+1)} \left((t+t_0+1)^2 - (t+t_0)(t+t_0+2)\right)=\frac{(t+t_0+2)(t+t_0+3)}{(t+t_0)(t+t_0+1)}\\
	&\qquad>0.
\end{align*}
The above inequality shows that the $\|\theta_t - \theta^*\|_2^2$, $t=1, \ldots, T-1$, terms on both sides of \cref{eq:thirdimportantconvergenceineq} can be canceled, which indicates the following
\begin{align}
	&(T+t_0)(T+t_0+1)\left(1+  2\mu_0\eta_{T-1}- 2\eta_{T-1}^2L_0^2\right)\eptt\left[\|\theta_{T} -\theta^*\|_2^2\right]\nonumber\\
	&\qquad\le (t_0 + 1)(t_0+2)\left(1 + 2C_bB_\phi\xi^b\eta_0 +8C_b^2B_\phi^2\xi^{2b}\eta_0^2\right)\|\theta_{0} -\theta^*\|_2^2+\left(4\sigma^2 + 8C_b^2\xi^{2b}\epsilon_{approx}^2\right)\sum_{t=0}^{T-1}\alpha_t\eta_t^2\nonumber\\
	&\qquad\qquad + 2C_b B_\theta\epsilon_{approx}\xi^b\sum_{t=0}^{T-1}\alpha_t\eta_t .\label{eq:final2}
\end{align}
Note that $\sum_{t=0}^{T-1}\alpha_t\eta_t^2 \le \sum_{t=0}^{T-1}\frac{6}{\mu_0^2} \le \frac{6T}{\mu_0^2}
$, $1+  2\mu_0\eta_{T-1}- 2\eta_{T-1}^2L^2_0\ge 1$,
and 
$$\sum_{t=0}^{T-1}\alpha_t\eta_t\le \frac{4}{\mu_0}\sum_{t=0}^{T-1}(t+t_0+2)\le \frac{2}{\mu_0}(T+t_0+2)^2.$$
Dividing $(T+t_0)(T+t_0+1)\left(1+  2\mu_0\eta_{T-1}- 2\eta_{T-1}^2L_0^2\right)$ on both sides of \cref{eq:final2} yields
\begin{align}
	\eptt\left[\|\theta_{T} -\theta^*\|_2^2\right]&\le \frac{(t_0 + 1)(t_0+2)}{(T+t_0)(T+t_0+1)}\left(1 +\frac{\mu_0\eta_0}{2}\right)\|\theta_{0} -\theta^*\|^2_2\nonumber \\
	&\qquad + \frac{24\sigma^2 + 48C_b^2\xi^{2b}\epsilon_{approx}^2}{\mu_0^2} \frac{1}{T+t_0+1} + \frac{4C_b B_\theta\epsilon_{approx}\xi^b}{\mu_0} \frac{(T+t_0+2)^2}{(T+t_0+1)(T+t_0)} \nonumber\\
	&= \mathcal{O}\left(\frac{\|\theta_0 - \theta^2\|_2^2}{T^2}\right) + \mathcal{O}\left(\frac{\sigma^2}{T}\right) + \mathcal{O}\left(\frac{C_b^2\xi^{2b}}{T}\right) + \mathcal{O}\left({C_b\xi^{b}}\right).\label{eq:convergeetd0}
\end{align}

Based on different conditions of $\sigma^2$, we pick different $b$ and the convergence rate is as follows.

($a$). $\gamma^2\rho_{max}\le 1$, \Cref{prop:variancetd0} show that $\sigma^2 \le \mathcal{O}(b)$. We specify $$b =\max\left\{ \left\lceil\tfrac{\log(\mu_0) - \log(5C_bB_\phi)}{\log(\xi)}\right\rceil, \frac{\log T}{\log (1/\xi)}\right\}\le\mathcal{O}(\log(T)).$$
\Cref{eq:convergeetd0} yields,
\begin{align*}
	\eptt\left[\|\theta_{T} -\theta^*\|_2^2\right]= \mathcal{O}\left(\frac{\|\theta_0 - \theta^2\|_2^2}{T^2}\right) + \mathcal{O}\left(\frac{\log(T)}{T}\right) + \mathcal{O}\left(\frac{1}{T^2}\right) + \mathcal{O}\left(\frac{1}{T}\right) = \tilde{\mathcal{O}}\left(\frac{1}{T}\right).
\end{align*}

($b$). $\gamma^2\rho_{max}>1$, \Cref{prop:variancetd0} show that $\sigma^2 = \mathcal{O}\left((\gamma^2 \rho_{max})^b\right)$. We specify
$$b =\max\left\{ \left\lceil\tfrac{\log(\mu_0) - \log(5C_bB_\phi)}{\log(\xi)}\right\rceil, \tfrac{\log(T)}{\log(\gamma^2\rho_{max})  + \log(1/\xi)}\right\}.$$ 
\Cref{eq:convergeetd0} yields
\begin{align*}
	\eptt\left[\|\theta_{T} -\theta^*\|_2^2\right] = \mathcal{O}\left(\frac{\|\theta_0 - \theta^2\|_2^2}{T^2}\right) + \mathcal{O}\left(\frac{T^{1-a}}{T}\right) + \mathcal{O}\left(\frac{C_b^2}{T^{1+a}}\right) + \mathcal{O}\left(\frac{C_b}{T^a}\right) = \mathcal{O}\left(\frac{1}{T^a}\right).
\end{align*}
\end{proof}

\section{Proofs of Propositions and Theorem for PER-ETD($\lambda$)}
\subsection{Proof of \Cref{prop:etdlambdabias}}\label{sec:prop3}
Define the matrix $A_t \coloneqq \rho_t^be_t^b(\phi_t^b - \gamma \phi_t^{b+1})$ and $c_t\coloneqq r_t^b\rho_t^be_t^b$. We have
\begin{flalign}
    \widehat{\Tc}_t^\lambda (\theta_{t})= A_t\theta_{t} - c_t.\label{eq:widehattlambda}
\end{flalign}
Recall that $\theta_t$ is $\Fc_{t-1}$-measurable. We have $$\eptt\left[\widehat{\Tc}_t^\lambda(\theta_t)\middle|\Fc_{t-1}\right] =\eptt\left[A_t\middle|\Fc_{t-1}\right]\theta_{t} - \eptt\left[c_t\middle|\Fc_{t-1}\right].$$
To bound the bias error term 	$\left\|\eptt\left[\widehat{\Tc}_t^\lambda(\theta_{t})\middle|\Fc_{t-1}\right] - \Tc^\lambda(\theta_{t})\right\|_2$, we first take conditional expectations on $A_t$ and $c_t$, respectively, as following
\begin{align}
	\eptt\left[A_t\middle|\Fc_{t-1}\right] 
	&= \eptt\left[\rho_t^be_t^b(\phi_t^b -\gamma\phi_t^{b+1})^\top\middle|\Fc_{t-1}\right]\nonumber\\
	&\overset{(i)}=\sum_{s\in\Sc, a\in\Ac, s^\prime\in\Sc}\prob\left(s_t^b=s , a_t^b=a, s_t^{b+1} = s^\prime\middle| \Fc_{t-1}\right)\nonumber\\
	&\qquad\qquad\cdot\eptt\left[\rho_t^be_t^b(\phi_t^b - \gamma\phi_t^{b+1})^\top\middle|\Fc_{t-1},s_t^b=s , a_t^b=a, s_t^{b+1} = s^\prime\right]\nonumber\\
	&\overset{(ii)}=\sum_{s, a, s^\prime}\prob\left(s_t^b=s\middle| \Fc_{t-1}\right)\mu(a|s)\mathsf{P}(s^\prime|s,a)\cdot\frac{\pi(a|s)}{\mu(a|s)}\eptt\left[e_t^b\middle|s_t^b=s, \Fc_{t-1}\right](\phi(s) - \gamma\phi(s^\prime))^\top\nonumber\\
	&=\sum_{s\in\Sc} \prob\left(s_t^b=s \middle| \Fc_{t-1}\right)\eptt\left[e_t^b\middle|\Fc_{t-1}, s_t^b =s\right] \sum_{a\in\Ac, s^\prime\in\Sc} \pi(a|s)\mathsf{P}(s^\prime|s, a)\left(\phi(s) -\gamma\phi(s^\prime)\right)^\top\nonumber\\
	&=\sum_{s\in\Sc}\prob\left(s_t^b=s \middle| \Fc_{t-1}\right)\eptt\left[e_t^b\middle|\Fc_{t-1}, s_t^b =s\right]\cdot\left((\Phi)_{(s, \cdot)} - \gamma(P_\pi \Phi)_{(s, \cdot)}\right), \label{eq:boundsonat}
\end{align}
where $(i)$ follows from the law of total probability and $(ii)$ follows from the Markov property and the fact that $e_t^b$ only depends on $(s_t^0, a_t^0, s_t^1, \ldots, s_t^b)$.

Define $\beta_\tau(s) = \prob\left(s_t^\tau = s\middle|\Fc_{t-1}\right)\eptt\left[e_t^b\middle| \Fc_{t-1}, s_t^\tau =s\right]$. We have
\begin{align}
	\beta_b(s) &=\prob\left(s_t^b =s \middle|\Fc_{t-1}\right)\eptt\left[e_t^b\middle| s_t^b=s, \Fc_{t-1}\right]\nonumber\\
	&\overset{(i)}=\prob\left(s_t^b=s \middle|\Fc_{t-1}\right)\nonumber\\
	&\quad\qquad\cdot \sum_{\tilde{s}\in\Sc, \tilde{a}\in\Ac} \prob\left(s_t^{b-1}=\tilde{s}, a_t^{b-1}=\tilde{a}\middle|s_t^b =s, \Fc_{t-1}\right)\nonumber\\
	&\qquad\qquad\qquad\cdot\eptt\left[\gamma\lambda\rho_t^{b-1}e_t^{b-1} + (\lambda +(1-\lambda)(1+\rho_t^{b-1}\gamma F_t^{b-1})\phi_t^b)\middle|s_t^{b-1}=\tilde{s}, a_t^{b-1}=\tilde{a}, s_t^b =s, \Fc_{t-1}\right]\nonumber\\
	&\overset{(ii)}= \prob\left(s_t^b=s \middle|\Fc_{t-1}\right)\phi(s)\nonumber\\
	&\qquad + \prob\left(s_t^b=s \middle|\Fc_{t-1}\right)\sum_{\tilde{s}\in\Sc, \tilde{a}\in\Ac} \frac{\prob\left(s_t^{b-1}=\tilde{s}\middle|\Fc_{t-1}\right)\mu(\tilde{a}|\tilde{s})\mathsf{P}
		(s|\tilde{s}, \tilde{a})}{\prob\left(s_t^b =s\middle|\Fc_{t-1}\right)}\nonumber\\
	&\qquad\qquad\qquad\qquad\qquad\qquad\qquad\qquad\cdot\frac{\pi(\tilde{a}|\tilde{s})}{\mu(\tilde{a}|\tilde{s})} \cdot \eptt\left[\gamma\lambda e_t^{b-1} + (1-\lambda)\gamma F_t^{b-1} \phi(s)\middle| s_t^{b-1}=\tilde{s},\Fc_{t-1}\right]\nonumber\\
	&=\prob\left(s_t^b=s \middle|\Fc_{t-1}\right)\phi(s) + \sum_{\tilde{s}\in\Sc}\prob\left(s_t^{b-1}=\tilde{s}\middle|\Fc_{t-1}\right)P_\pi(s|\tilde{s}) \cdot\eptt\left[\gamma\lambda e_t^{b-1} + (1-\lambda)\gamma F_t^{b-1} \phi(s)\middle| s_t^{b-1}=\tilde{s},\Fc_{t-1}\right] \nonumber \\
	&\overset{(iii)}= (\lambda d_{\mu, b}(s) + (1-\lambda) f_{b}(s))\cdot \phi(s) + \gamma\lambda (P_\pi^\top\beta_{b-1})_s,\label{eq: 4}
\end{align}
where $(i)$ follows from the law of total probability, $(ii)$ follows from the Bayes rule and the Markov property, and $(iii)$ follows from the following definitions: $d_{\mu,b}(s) = \prob\left(s_t^b =s|\Fc_{t-1}\right)$, $f_b(s) = d_{\mu,b}(s)\eptt\left[F_t^b =s|s_t^b=s, \Fc_{t-1}\right]$, $f_b = d_{\mu, b} + \gamma P_\pi^\top f_{b-1}$, and $\beta_\tau(s) = \prob\left(s_t^\tau = s\middle|\Fc_{t-1}\right)\eptt\left[e_t^b\middle| \Fc_{t-1}, s_t^\tau =s\right]$.

Define the matrix $\beta_\tau\in\mathbb{R}^{d\times |\mathcal{S}|}$, where $\beta_\tau = \left(\beta_\tau(1), \beta_\tau(2), \ldots, \beta_\tau(|\mathcal{S}|)\right)$. Then, \cref{eq: 4} implies that
\begin{align}
	\beta_b = \lambda \Phi^\top D_{\mu, b} + (1-\lambda)\Phi^\top F_{b} + \gamma\lambda\beta_{b-1}P_\pi,\label{eq:iterativebeta}
\end{align}
where $D_{\mu, b}\coloneqq \diag(d_{\mu, b}(1), d_{\mu, b}(2), \ldots, d_{\mu, b}(|\mathcal{S}|))$ and  $F_{b} = \diag(f_{b})$.

Recursively applying the above equality yields
\begin{align}
	\beta_b = (\gamma\lambda)^b\beta_0P_\pi^b + \lambda\sum_{\tau=0}^{b-1} (\gamma\lambda)^\tau\Phi^\top D_{\mu,{b-\tau}}P_\pi^\tau + (1-\lambda)\sum_{\tau=0}^{b-1} (\gamma\lambda)^\tau\Phi^\top F_{b-\tau}P_\pi^\tau.\label{eq:theupperboundofbeta}
\end{align}

Taking expectation of $c_t$ conditioned on $\Fc_{t-1}$, we have
\begin{align}
	\eptt\left[c_t\middle| \Fc_{t-1}\right]  &= \eptt\left[\rho_t^be_t^br_t^b\middle| \Fc_{t-1}\right]\nonumber\\
	&= \sum_{s\in\Sc, a\in \Ac} \prob\left(s_t^b=s ,a_t^b =a\middle|\Fc_{t-1}\right) \eptt\left[\rho_t^be_t^br_t^b\middle|s_t^b=s, a_t^b =a, \Fc_{t-1}\right]\nonumber\\
	&=\sum_{s\in\Sc, a\in \Ac} \prob\left(s_t^b=s\middle|\Fc_{t-1}\right)\mu(a|s)\cdot\frac{\pi(a|s)}{\mu(a|s)} r(s, a) \eptt\left[e_t^b\middle|s_t^b=s,  \Fc_{t-1}\right]\nonumber\\
	&=\sum_{s\in\Sc} r_\pi(s)  \prob\left(s_t^b=s\middle|\Fc_{t-1}\right) \eptt\left[e_t^b\middle|s_t^b=s,  \Fc_{t-1}\right]\nonumber\\
	&=\sum_{s\in\Sc} r_\pi(s)\beta_b(s).\label{eq:midproofc}
\end{align}

Substituting \cref{eq:boundsonat,eq:midproofc} into \cref{eq:widehattlambda} yields
\begin{align*}
	\eptt\left[\widehat{\Tc}_t^\lambda(\theta_{t})\middle|\Fc_{t-1}\right] &= \sum_{s\in\Sc}\beta_b(s)\left(\Phi\theta_t - \gamma P_\pi \Phi\theta_{t} - r_\pi\right)_s =\beta_b\left(\Phi\theta_{t} - \gamma P_\pi\Phi\theta_{t} - r_\pi\right).
\end{align*}

Recall the definition of $\Tc^\lambda(\theta)$. We have
\begin{align}
	\Tc^\lambda(\theta_{t}) - \eptt\left[\widehat{\Tc}_t^\lambda(\theta_{t})\middle|\Fc_{t-1}\right] = \left(\Phi^\top M (I-\gamma\lambda P_\pi)^{-1} - \beta_b\right) \left(\Phi\theta_t -\gamma P_\pi\Phi\theta_{t} -r_\pi\right).\label{eq:differenceetdlambda}
\end{align}

We then proceed to bound the term $\Phi^\top M (I-\gamma\lambda P_\pi)^{-1} - \beta_b$
\begin{align*}
	&\Phi^\top M (I-\gamma\lambda P_\pi)^{-1} - \beta_b\\
	&\quad\overset{(i)}= \Phi^\top M\left(\sum_{\tau=0}^\infty (\gamma\lambda)^\tau P_\pi^\tau\right) - \beta_b\\
	&\quad \overset{(ii)}= \Phi^\top \left(\lambda D_\mu + (1-\lambda)F\right)\left(\sum_{\tau=0}^\infty (\gamma\lambda)^\tau P_\pi^\tau\right)  \\
	&\qquad-\left( (\gamma\lambda)^b\beta_0P_\pi^b + \lambda\sum_{\tau=0}^{b-1} (\gamma\lambda)^\tau\Phi^\top D_{\mu, {b-\tau}}P_\pi^\tau + (1-\lambda)\sum_{\tau=0}^{b-1} (\gamma\lambda)^\tau\Phi^\top F_{b-\tau}P_\pi^\tau\right)\\
	&\quad = \lambda\sum_{\tau= 0}^{b-1}(\gamma\lambda)^\tau\Phi^\top \left( D_\mu - D_{\mu, b-\tau}\right) P_\pi^\tau  + (1- \lambda)\sum_{\tau= 0}^{b-1}(\gamma\lambda)^\tau\Phi^\top \left(F - F_{b-\tau}\right) P_\pi^\tau- \lambda(\gamma\lambda)^b \Phi^\top D_{\mu, 0}P_\pi^b\\
	&\qquad\qquad  + \sum_{\tau=b}^\infty (\gamma\lambda)^\tau\Phi^\top(\lambda D_\mu + (1-\lambda)F) P_\pi^\tau,
\end{align*}
where $(i)$ follows from the fact that $(I-\gamma P_\pi)^{-1} = \sum_{\tau=0}^\infty \gamma^\tau P_\pi^\tau$, and $(ii)$ follows from \cref{eq:theupperboundofbeta}.
Substituting the above equality into \cref{eq:differenceetdlambda} and taking $\ell_2$ norm on the both sides yield
\begin{align*}
	&\left\|\Tc^\lambda(\theta_{t}) - \eptt\left[\widehat{\Tc}_t^\lambda(\theta_{t})\middle|\Fc_{t-1}\right]\right\|_2\\
	&\quad= \left\|\left(\lambda\sum_{\tau= 0}^{b-1}(\gamma\lambda)^\tau\Phi^\top \left( D_\mu - D_{\mu, b-\tau}\right) P_\pi^\tau  + (1- \lambda)\sum_{\tau= 0}^{b-1}(\gamma\lambda)^\tau\Phi^\top \left(F - F_{b-\tau}\right) P_\pi^\tau\right.\right.\\
	&\qquad\qquad\qquad\left.\left. + \sum_{\tau=b}^\infty (\gamma\lambda)^\tau\Phi^\top(\lambda D_\mu + (1-\lambda)F) P_\pi^\tau - \lambda(\gamma\lambda)^b \Phi^\top D_{\mu, 0}P_\pi^b\right)(\Phi\theta_t -\gamma P_\pi\theta_t -r_\pi)\right\|_2\\
	&\quad \le  \lambda\sum_{\tau= 0}^{b-1}(\gamma\lambda)^\tau \left\|\Phi^\top \left( D_\mu - D_{\mu, b-\tau}\right) P_\pi^\tau \left(\Phi\theta_t -\gamma P_\pi\Phi\theta_{t} -r_\pi\right)\right\|_2 \\
	&\qquad + (1- \lambda)\sum_{\tau= 0}^{b-1}(\gamma\lambda)^\tau\left\|\Phi^\top \left(F - F_{b-\tau}\right)P_\pi^\tau \left(\Phi\theta_t -\gamma P_\pi\Phi\theta_{t} -r_\pi\right)\right\|_2\\
	&\qquad\quad + \sum_{\tau=b}^\infty (\gamma\lambda)^\tau\left\|\Phi^\top\left(\lambda D_\mu + (1-\lambda)F\right) P_\pi^\tau \left(\Phi\theta_t -\gamma P_\pi\Phi\theta_{t} -r_\pi\right)\right\|_2 \\
	&\qquad\qquad+ \lambda(\gamma\lambda)^b \left\|\Phi^\top D_{\mu, 0} P_\pi^b \left(\Phi\theta_t -\gamma P_\pi\Phi\theta_{t} -r_\pi\right)\right\|_2\\
	&\quad \overset{(i)} \le  \lambda\sum_{\tau= 0}^{b-1}(\gamma\lambda)^\tau B_\phi\|d_\mu - d_{\mu, b-\tau}\|_1(1+\gamma) \left( B_\phi \|\theta_t - \theta_\lambda^*\|_2 + \epsilon_{approx}\right) \\
	&\qquad + (1- \lambda)\sum_{\tau= 0}^{b-1}(\gamma\lambda)^\tau B_\phi \left\|f - f_{b-\tau}\right\|_1(1+\gamma) \left( B_\phi \|\theta_t - \theta_\lambda^*\|_2 +C\epsilon_{approx}\right)\\
	&\qquad\quad + \sum_{\tau=b}^\infty (\gamma\lambda)^\tau B_\phi\left(\lambda \|d_\mu\|_1 + (1-\lambda)\|f\|_1\right)(1+\gamma) \left( B_\phi \|\theta_t - \theta_\lambda^*\|_2 + \epsilon_{approx}\right)\\
	&\qquad\qquad+ \lambda(\gamma\lambda)^b B_\phi \|d_{\mu, 0}\|_1(1+\gamma) \left( B_\phi \|\theta_t - \theta_\lambda^*\|_2 + \epsilon_{approx}\right)\\
	&\quad\overset{(ii)}\le\left( B_\phi^2 \|\theta_t - \theta_\lambda^*\|_2 + B_\phi\epsilon_{approx}\right)\\
	&\qquad\qquad\cdot\left(\sum_{\tau =0}^{b-1}  (\gamma\lambda)^\tau \left(\lambda(1+ \gamma) C_M\chi^{b-\tau} +(1-\lambda) (1+\gamma)\left( \tfrac{C_M}{|\gamma -\chi|} \xi^{b-\tau}  + \gamma^{b-\tau} (1 + \|f\|_1)\right)\right)\right)\\
	&\qquad\qquad\qquad +(1+\gamma)\left( B_\phi^2 \|\theta_t - \theta_\lambda^*\|_2 + B_\phi\epsilon_{approx}\right)\left( \left(\frac{\lambda + (1-\lambda)\|f\|_1}{1-\gamma\lambda} + \lambda\right)(\gamma\lambda)^b \right)\\
	& \quad\overset{(iii)}\le\left( B_\phi^2 \|\theta_t - \theta_\lambda^*\|_2 + B_\phi\epsilon_{approx}\right)\\
	&\qquad\quad\cdot(1+\gamma)\left(\tfrac{\lambda}{|\chi -\gamma\lambda|}C_M \xi^b +\tfrac{C_M(1-\lambda)}{|\gamma-\chi|(\xi -\gamma\lambda)}\xi^b\right.\left. + (1+\|f\|_1)\gamma^b + \left(\tfrac{\lambda + (1-\lambda)\|f\|_1}{1-\gamma\lambda} + \lambda\right)(\gamma\lambda)^b\right)\\
	&\quad\overset{(iv)}\le C_{b, \lambda}\left(B_\phi \|\theta_t - \theta_\lambda^*\|_2 + \epsilon_{approx}\right)\xi^b,
\end{align*}
where $(i)$ follows from \Cref{lemma:supportinglemma} and \cref{eq:middleresult2}, $(ii)$ follows from \Cref{lemma:ergodicity} and \cref{eq:middleresult1}, in $(iii)$ we define
$$C_{b,\lambda}\coloneqq B_\phi(1+\gamma)\left(\tfrac{\lambda}{|\chi -\gamma\lambda|}C_M  +\tfrac{C_M(1-\lambda)}{|\gamma-\chi|(\xi -\gamma\lambda)} + 1+\|f\|_1 + \left(\tfrac{\lambda + (1-\lambda)\|f\|_1}{1-\gamma\lambda} + \lambda\right)\right),$$ and $(iv)$ follows from \Cref{lemma:series}.

\subsection{Proof of \Cref{prop:varianceofetdlambda}}
According to the definition of $\widehat{\Tc}_t^\lambda$, we have
\begin{align}
	&\eptt\left[\left\|\widehat{\Tc}_t^\lambda(\theta_t)\right\|_2^2\middle|\Fc_{t-1}\right]\nonumber\\
	&\quad= \eptt\left[(\rho_t^b)^2 \left(r_t^b + \gamma
	\theta_{t}^\top\phi_t^{b+1} -\theta_t^\top\phi_t^b\right)^2(e_t^b)^\top e_t^b\middle|\Fc_{t-1}\right]\nonumber\\
	&\quad\overset{(i)}=\sum_{s\in\Sc, a\in\Ac, s^\prime\in\Sc} \prob\left(s_t^b =s, a_t^b=a, s_t^{b+1} =s^\prime\middle|\Fc_{t-1}\right) \nonumber\\
	&\quad\qquad\qquad\cdot\eptt\left[(\rho_t^b)^2 \left(r_t^b + \gamma
	\theta_{t}^\top\phi_t^{b+1} -\theta_t^\top\phi_t^b\right)^2(e_t^b)^\top e_t^b\middle|s_t^b =s, a_t^b=a, s_t^{b+1} =s^\prime, \Fc_{t-1}\right]\nonumber\\
	&\quad\overset{(ii)}=\sum_{s\in\Sc, a\in\Ac, s^\prime\in\Sc}\prob\left(s_t^b=s\middle|\Fc_{t-1}\right)\mu(a|s)\mathsf{P}(s^\prime|s,a)\nonumber\\
	&\quad\qquad\qquad\qquad\qquad\qquad\cdot\frac{\pi^2(a|s)}{\mu^2(a|s)} (r(s, a) + \gamma\theta_t^\top\phi(s^\prime) - \theta_t^\top\phi(s))^2\eptt\left[(e_t^b)^\top e_t^b\middle|s_t^b =s, \Fc_{t-1}\right]\nonumber\\
	&\quad\overset{(iii)}\le \sum_{s\in\Sc} \prob\left(s_t^b =s\middle|\Fc_{t-1}\right) \eptt\left[(e_t^b)^\top e_t^b\middle|s_t^b=s, \Fc_{t-1}\right] \cdot\sum_{s^\prime\in\Sc} P_{\mu,\pi}(s^\prime|s)(r(s,a) +\gamma\phi(s^\prime)^\top\theta_t - \phi(s)^\top\theta_t)^2\nonumber\\
	&\quad\overset{(iv)}\le \rho_{max} \left(4(1+\gamma^2)B_\phi^2B_\theta^2 + 2r_{max}^2\right)B_\phi^2 \sum_{s\in\Sc} \prob\left(s_t^b =s\middle|\Fc_{t-1}\right) \eptt\left[(e_t^b)^\top e_t^b\middle|s_t^b=s, \Fc_{t-1}\right],\label{eq:boundsonetdlambdavariance}
\end{align}
where $(i)$ follows from the law of total probability, $(ii)$ follows from the Markov property and the fact that $e_t^b$ only depends on $(s_t^0, a_t^0, \ldots, s_t^{b})$, and $(iii)$ follows from \cref{eq:boundingsquare} and the fact
$\sum_{s^\prime} P_{\mu,\pi} (s^\prime|s)\le \rho_{max}$.

Define $\Delta_b(s) = \prob\left(s_t^b =s |\Fc_{t-1}\right)\eptt\left[(e_t^b)^\top e_t^b\middle|s_t^b =s, \Fc_{t-1}\right]$. We then proceed to bound the term $\Delta_b(s)$. We have
\begin{align}
	&\Delta_b(s)\nonumber \\
	&\quad\overset{(i)}= \prob\left(s_t^b =s |\Fc_{t-1}\right)\sum_{\tilde{s}\in\Sc, \tilde{a}\in\Ac}\prob\left(s_t^{b-1} =\tilde{s}, a_t^{b-1} = \tilde{a}\middle|s_t^b =s, \Fc_{t-1}\right)\eptt\left[(e_t^b)^\top e_t^b \middle| s_t^{b-1} =\tilde{s}, a_t^{b-1} = \tilde{a}, s_t^b =s, \Fc_{t-1}\right]\nonumber\\
	&\quad\overset{(ii)}=\prob\left(s_t^b =s |\Fc_{t-1}\right)\sum_{\tilde{s}\in\Sc, \tilde{a}\in\Ac}\frac{\prob\left(s_t^{b-1} =\tilde{s}\middle|\Fc_{t-1}\right)\mu(\tilde{a}|\tilde{s})\mathsf{P}(s|\tilde{s}, \tilde{a})}{\prob\left(s_t^b =s |\Fc_{t-1}\right)} \eptt\left[(e_t^b)^\top e_t^b \middle| s_t^{b-1} =\tilde{s}, a_t^{b-1} = \tilde{a}, s_t^b =s, \Fc_{t-1}\right]\nonumber\\
	&\quad\overset{(iii)}=\sum_{\tilde{s}\in\Sc, \tilde{a}\in\Ac}{\prob\left(s_t^{b-1} =\tilde{s}\middle|\Fc_{t-1}\right)\mu(\tilde{a}|\tilde{s})\mathsf{P}(s|\tilde{s}, \tilde{a})}\cdot\eptt\left[\left(\gamma\lambda\rho_t^{b-1}e_t^{b-1} + \left(\lambda + (1-\lambda)\left(\gamma\rho_t^{b-1}F_t^{b-1}+1\right)\right)\phi_t^b\right)^\top\right.\nonumber\\
	&\qquad\qquad\qquad\left.\cdot\left(\gamma\lambda\rho_t^{b-1}e_t^{b-1} + \left(\lambda + (1-\lambda)\left(\gamma\rho_t^{b-1}F_t^{b-1}+1\right)\right)\phi_t^b\right)\middle|s_t^{b-1} =\tilde{s}, a_t^{b-1} = \tilde{a}, s_t^b =s, \Fc_{t-1}\right]\nonumber\\
	&\quad=\sum_{\tilde{s}\in\Sc, \tilde{a}\in\Ac}{\prob\left(s_t^{b-1} =\tilde{s}\middle|\Fc_{t-1}\right)\mu(\tilde{a}|\tilde{s})\mathsf{P}(s|\tilde{s}, \tilde{a})}\nonumber\\
	&\quad\qquad\eptt\left[(\gamma\lambda)^2(\rho_t^{b-1})^2(e_t^{b-1})^\top e_t^{b-1} +(1-\lambda)^2\gamma^2(\rho_t^{b-1}F_t^{b-1})^2\phi(s)^\top\phi(s) + \phi(s)^\top\phi(s)\right.\nonumber\\
	&\qquad\qquad\quad\left.+ 2\gamma^2\lambda(1-\lambda)(\rho_t^{b-1})^2F_t^{b-1}\phi(s)^\top e_t^{b-1} + 2\gamma\lambda\rho_t^{b-1}\phi(s)^\top e_t^{b-1}\right. \nonumber\\
	&\qquad\qquad\qquad\left.+ 2(1-\lambda)\gamma\rho_t^{b-1}F_t^{b-1}\phi(s)^\top\phi(s)\middle|s_t^{b-1} =\tilde{s}, a_t^{b-1} = \tilde{a}, s_t^b =s, \Fc_{t-1}\right]\nonumber\\
	&\quad=\prob\left(s_t^b =s |\Fc_{t-1}\right)\|\phi(s)\|_2^2\nonumber\\
	&\quad\qquad + \phi^\top(s)\sum_{\tilde{s}\in\Sc} P_\pi(s|\tilde{s})\prob\left(s_t^{b-1}=\tilde{s}\middle|\Fc_{t-1}\right)\eptt\left[2\gamma\lambda e_t^{b-1} + 2\gamma(1-\lambda)F_t^{b-1}\phi(s)\middle|s_t^{b-1}=\tilde{s}, \Fc_{t-1}\right]\nonumber\\
	&\qquad\qquad\quad+ \lambda^2\gamma^2\sum_{\tilde{s}\in\Sc}P_{\mu, \pi}(s|\tilde{s})\prob\left(s_t^{b-1}= \tilde{s}\middle|\Fc_{t-1}\right)\eptt\left[(e_t^{b-1})^\top e_t^{b-1}\middle|s_t^{b-1}=\tilde{s}, \Fc_{t-1}\right]\nonumber\\
	&\qquad\qquad\qquad + 2\gamma^2\lambda(1-\lambda)\sum_{\tilde{s}\in\Sc}P_{\mu, \pi}(s|\tilde{s})\prob\left(s_t^{b-1}= \tilde{s}\middle|\Fc_{t-1}\right)\eptt\left[F_t^{b-1}\phi(s)^\top e_t^{b-1}\middle|s_t^{b-1}=\tilde{s}, \Fc_{t-1}\right]\nonumber\\
	&\quad\qquad\qquad\qquad + \|\phi(s)\|_2^2\gamma^2(1-\lambda)^2\sum_{\tilde{s}\in\Sc}P_{\mu, \pi}(s|\tilde{s})\prob\left(s_t^{b-1}= \tilde{s}\middle|\Fc_{t-1}\right)\eptt\left[(F_t^{b-1})^2\middle|s_t^{b-1}=\tilde{s}, \Fc_{t-1}\right]\nonumber\\
	&\quad \le B_\phi^2 d_{\mu,b}(s)+ 2\gamma\lambda \phi^\top(s)(\beta_{b-1}P_\pi)_{( \cdot, s)}  + 2\gamma(1-\lambda) B_\phi^2 (P_\pi^\top f_{b-1})_s \nonumber\\
	&\quad\qquad+ \lambda^2\gamma^2(P_{\mu, \pi}^\top \Delta_{b-1})_s + \gamma^2(1-\lambda)^2 B_\phi^2(P_{\mu,\pi}^\top r_{b-1})_s \nonumber\\
	&\qquad \qquad+ 2\gamma^2\lambda(1-\lambda)\sum_{\tilde{s}\in\Sc}P_{\mu, \pi}(s|\tilde{s})\prob\left(s_t^{b-1}= \tilde{s}\middle|\Fc_{t-1}\right)\eptt\left[F_t^{b-1}\phi(s)^\top e_t^{b-1}\middle|s_t^{b-1}=\tilde{s}, \Fc_{t-1}\right]\nonumber\\
	&\quad \overset{(iv)}= B_\phi^2d_{\mu,b}(s) + \lambda^2\gamma^2(P_{\mu, \pi}^\top \Delta_{b-1})_s + 2\gamma(1-\lambda) B_\phi^2 (P_\pi^\top f_{b-1})_s + 2\gamma\lambda \phi^\top(s)(\beta_{b-1}P_\pi)_{( \cdot, s)}\nonumber\\
	&\qquad\qquad + \gamma^2(1-\lambda)^2 B_\phi^2(P_{\mu,\pi}^\top r_{b-1})_s + 2\gamma^2\lambda(1-\lambda)\phi(s)^\top(\delta_{b-1}P_{\mu,\pi})_{(\cdot, s)},\label{eq:middleprop4}
\end{align}
where $(i)$ follows from the law of total probability, $(ii)$ follows from the Bayes rule, $(iii)$ follows the update rule of $e_t^b$ and in $(iv)$ we define 
$$\delta_\tau(s) = \prob\left(s_t^{\tau}= s\middle|\Fc_{t-1}\right)\eptt\left[F_t^{\tau} e_t^{\tau}\middle|s_t^{\tau}={s}, \Fc_{t-1}\right].$$ 

Summing \cref{eq:middleprop4} over $\Sc$ yields
\begin{align}
    &\mathbf{1}^\top \Delta_b =\sum_{s} \Delta_b(s) \nonumber \\
	&\quad\le  B_\phi^2 \mathbf{1}^\top d_{\mu,b}(s) + \lambda^2\gamma^2\mathbf{1}^\top P_{\mu, \pi}^\top \Delta_{b-1} + 2\gamma(1-\lambda) B_\phi^2 \mathbf{1}^\top P_\pi^\top f_{b-1} + 2\gamma\lambda \mathrm{trace}\left(\Phi \beta_{b-1}P_\pi\right)\nonumber\\
	&\qquad + \gamma^2(1-\lambda)^2 B_\phi^2\mathbf{1}^\top P_{\mu,\pi}^\top r_{b-1} + 2\gamma^2\lambda(1-\lambda)\mathrm{trace}\left(\Phi\delta_{b-1}P_{\mu,\pi}\right) \nonumber\\
	&\quad \overset{(i)}\le\lambda^2\gamma^2\rho_{max}\mathbf{1}^\top \Delta_{b-1}  + B_\phi^2 +  2\gamma(1-\lambda) B_\phi^2 \mathbf{1}^\top f_{b-1} + 2\gamma\lambda \mathrm{trace}\left(\Phi \beta_{b-1}P_\pi\right)\nonumber\\
	&\qquad + \gamma^2(1-\lambda)^2 B_\phi^2\rho_{max}\mathbf{1}^\top r_{b-1} + 2\gamma^2\lambda(1-\lambda)\mathrm{trace}\left(\Phi\delta_{b-1}P_{\mu,\pi}\right), \nonumber
\end{align}
where $(i)$ follows from \Cref{lemma:onedot} with $P = P_{\mu, \pi}$. 
 
Recursively applying the above inequality, we have
\begin{align}
&\mathbf{1}^\top \Delta_b\le (\lambda^2\gamma^2)^b\rho_{max}^b\mathbf{1}^\top \Delta_{0} + \sum_{\tau=1}^b(\lambda^2\gamma^2)^{b-\tau}(\rho_{max})^{b-\tau} \left(B_\phi^2  + 2\gamma(1-\lambda) B_\phi^2 \mathbf{1}^\top  f_{\tau-1}\right.\nonumber\\
	&\  \left. + 2\gamma\lambda \mathrm{trace}\left(\Phi \beta_{\tau-1}P_\pi\right)+ \gamma^2(1-\lambda)^2 B_\phi^2\rho_{max}\mathbf{1}^\top r_{\tau-1} + 2\gamma^2\lambda(1-\lambda)\mathrm{trace}\left(\Phi\delta_{\tau-1}P_{\mu,\pi}\right)\right). \label{eq:boundthevarianceofetdlambdamiddle2}
\end{align}

Substituting \cref{eq:theupperboundofbeta} into $\mathrm{trace}\left(\Phi\beta_{\tau}P_\pi\right)$ with $b$ and $\tau$ replaced by $\tau$ and $m$ respectively yields
\begin{align}
	&\mathrm{trace}\left(\Phi\beta_{\tau}P_\pi\right)\nonumber\\
	&\quad\overset{(i)}=  (\gamma\lambda)^\tau \mathrm{trace}\left(\Phi\beta_0 P_\pi^{\tau+1}\right) + \lambda\sum_{m=0}^{\tau-1} (\gamma\lambda)^{m} \left(\mathrm{trace}\left(\Phi\Phi^\top D_{\mu, {\tau-m}}P_\pi^{m+1}\right) + (1-\lambda)\mathrm{trace}\left(\Phi\Phi^\top F_{\tau-m}P_\pi^{m+1}\right)\right)\nonumber\\
	&\quad\overset{(ii)}= (\gamma\lambda)^\tau \mathrm{trace}\left(P_\pi^{\tau+1}\Phi\Phi^\top D_{\mu, 0} \right)+ \lambda\sum_{m=0}^{\tau-1} (\gamma\lambda)^{m} \left(\mathrm{trace}\left(P_\pi^{m+1}\Phi\Phi^\top D_{\mu, {\tau-m}}\right) + (1-\lambda)\mathrm{trace}\left(P_\pi^{m+1}\Phi\Phi^\top F_{\tau-m}\right)\right)\nonumber\\
	&\quad\overset{(iii)}\le(\gamma\lambda)^\tau B_\phi^2 \mathbf{1}^\top d_{\mu, 0} + B_\phi^2\sum_{m=0}^{\tau-1} (\gamma\lambda)^{m} \left(\lambda\|d_{\mu, \tau-m}\|_1 + (1-\lambda)\|f_{\tau-m}\|_1\right)\nonumber\\
	&\quad\overset{(iv)}\le (\gamma\lambda)^\tau B_\phi^2 + \frac{1-(\gamma\lambda)^\tau}{1-\gamma\lambda} \left(\lambda + (1-\lambda)(1+\|f\|_1)\right) B_\phi^2\nonumber\\
	&\quad\ \le \frac{B_\phi^2}{1-\gamma\lambda} \left(1 + (1-\lambda)\|f\|_1\right), \label{eq:uboundofbeta}
\end{align}
where $(i)$ follows from \cref{eq:theupperboundofbeta}, $(ii)$ follows from the facts that $\mathrm{trace}(AB)= \mathrm{trace}(BA)$ and $\delta_{0} = \Phi^\top D_{\mu, 0}$, $(iii)$ follows from \Cref{lemma:supportinglemma2} with $P =P_\pi$, and $(iv)$ follows from \cref{eq:l1normonf}.

Next, we proceed to bound the term $\mathrm{trace}\left(\Phi^\top\delta_{\tau-1}P_{\mu,\pi}\right)$.
\begin{align*}
	\delta_{b-1} (\tilde{s}) &\overset{(i)}= \prob\left(s_t^{b-1} = \tilde{s}\middle| \Fc_{t-1}\right)\eptt\left[F_t^{b-1}\left(\gamma\lambda \rho_t^{b-2}e_t^{b-2} + (\lambda + (1-\lambda)F_t^{b-1})\phi(\tilde{s})\right)\middle|s_t^{b-1}=\tilde{s}, \Fc_{t-1}\right]\\
	&\overset{(ii)}=(1-\lambda)  \prob\left(s_t^{b-1} = \tilde{s}\middle| \Fc_{t-1}\right)\eptt\left[(F_t^{b-1})^2 \middle|s_t^{b-1}=\tilde{s}, \Fc_{t-1}\right]\phi(\tilde{s}) \\
	&\qquad\quad + \lambda \prob\left(s_t^{b-1}=\tilde{s}\middle|\Fc_{t-1}\right)\eptt\left[F_t^{b-1} \middle|s_t^{b-1}=\tilde{s}, \Fc_{t-1}\right]\phi(\tilde{s})\\
	&\qquad\qquad + \gamma\lambda \prob\left(s_t^{b-1}=\tilde{s}\middle|\Fc_{t-1}\right)\eptt\left[\rho_t^{b-2}e_t^{b-2}\middle|s_t^{b-1}=\tilde{s}, \Fc_{t-1}\right]\phi(\tilde{s})\\
	&\qquad\qquad\quad + \gamma^2\lambda \prob\left(s_t^{b-1}=\tilde{s}\middle|\Fc_{t-1}\right)\eptt\left[(\rho_t^{b-2})^2F_t^{b-2}e_t^{b-2}\middle|s_t^{b-1}=\tilde{s}, \Fc_{t-1}\right]\phi(\tilde{s})\\
	&\overset{(iii)}=(1-\lambda)r_{b-1}(\tilde{s})\phi(\tilde{s}) + \lambda f_{b-1}(\tilde{s})\phi(\tilde{s}) \\
	&\qquad\quad +\prob\left(s_t^{b-1}=\tilde{s}\middle|\Fc_{t-1}\right) \sum_{s^{\prime\prime}, a^{\prime\prime}} \prob\left(s_t^{b-2} = s^{\prime\prime}, a_t^{b-2} = a^{\prime\prime}\middle|s_t^{b-1}=\tilde{s}, \Fc_{t-1}\right)\\
	&\qquad\qquad\cdot\eptt\left[\gamma\lambda\rho_t^{b-2}e_t^{b-2} + \gamma^2\lambda(\rho_t^{b-2})^2F_t^{b-2}e_t^{b-2}\middle| s_t^{b-2} = s^{\prime\prime}, a_t^{b-2} = a^{\prime\prime}, s_t^{b-1}=\tilde{s}, \Fc_{t-1}\right] \\
	&=(1-\lambda)r_{b-1}(\tilde{s})\phi(\tilde{s}) + \lambda f_{b-1}(\tilde{s})\phi(\tilde{s}) \\
	&\qquad\quad +\prob\left(s_t^{b-1}=\tilde{s}\middle|\Fc_{t-1}\right) \sum_{s^{\prime\prime}, a^{\prime\prime}} \frac{\prob\left(s_t^{b-2} = s^{\prime\prime}\middle|\Fc_{t-1}\right)\mu( a^{\prime\prime}|s^{\prime\prime})\mathsf{P}(\tilde{s}|s^{\prime\prime}, a^{\prime\prime})}{\prob\left(s_t^{b-1}=\tilde{s}\middle|\Fc_{t-1}\right)}\\
	&\qquad\qquad\cdot\eptt\left[\gamma\lambda\frac{\pi(a^{\prime\prime}|s^{\prime\prime})}{\mu(a^{\prime\prime}|s^{\prime\prime})}e_t^{b-2} + \gamma^2\lambda\frac{\pi^2(a^{\prime\prime}|s^{\prime\prime})}{\mu^2(a^{\prime\prime}|s^{\prime\prime})}F_t^{b-2}e_t^{b-2}\middle| s_t^{b-2} = s^{\prime\prime}, \Fc_{t-1}\right] \\
	&=(1-\lambda)r_{b-1}(\tilde{s})\phi(\tilde{s}) + \lambda f_{b-1}(\tilde{s})\phi(\tilde{s}) + \gamma\lambda (\beta_{b-2}P_\pi)_{(\cdot, s)}  + \gamma^2\lambda (\delta_{b-2}P_{\mu, \pi})_{(\cdot, s)},
\end{align*}
where $(i)$ follows from the update of $e_t^{b-1}$, $(ii)$ follows from the update rule of $F_t^{b-1}$, and $(iii)$ follows from the law of total probability.

The above equality implies that
\begin{align}
	\delta_{b-1} = (1-\lambda)\Phi^\top \diag(r_{b-1}) + \lambda \Phi^\top \diag(f_{b-1}) +\gamma\lambda\beta_{b-2}P_\pi+ \gamma^2\lambda \delta_{b-2}P_{\pi, \mu}.  \nonumber
\end{align}
Recursively applying the above equality yields
\begin{align*}
	\delta_{b-1} &= \sum_{m=1}^{b-1} (\Phi^\top((1-\lambda)\diag(r_{m})+ \lambda \diag(f_{m})) + \gamma\lambda\beta_{m-1}P_\pi) (P_{\pi, \mu})^{b-1-m} + (\gamma^2\lambda)^{b-1} \delta_{0}(P_{\pi, \mu})^{b-1}.
\end{align*}
Note that the above inequality holds for any fixed $b>=2$. As a result, by changing of notation, for all $\tau\ge 1$, we have 
\begin{align}
	\delta_{\tau} = \sum_{m=1}^{\tau} (\Phi^\top((1-\lambda)\diag(r_{m})+ \lambda \diag(f_{m}) )+ \gamma\lambda\beta_{m-1}P_\pi) (P_{\pi, \mu})^{\tau-m} + (\gamma^2\lambda)^{\tau} \delta_{0}(P_{\pi, \mu})^{\tau}. \label{eq:delta_b}
\end{align}
 
Substituting \cref{eq:delta_b} into $\mathrm{trace}\left(\Phi\delta_{\tau}P_{\mu,\pi}\right)$, we have
\begin{align}
	&\mathrm{trace}\left(\Phi\delta_{\tau}P_{\mu,\pi}\right)\nonumber\\
	&\quad =\mathrm{trace}\Bigg(\Phi\Bigg(\sum_{m=1}^{\tau} (\Phi^\top((1-\lambda)\diag(r_{m})+ \lambda \diag(f_{m}))+ \gamma\lambda\beta_{m-1}P_\pi) (P_{\pi, \mu})^{\tau-m} + (\gamma^2\lambda)^{\tau} \delta_{0}(P_{\pi, \mu})^{\tau}\Bigg)P_{\mu,\pi}\Bigg)\nonumber\\
	&\quad\overset{(i)} = \sum_{m=1}^{\tau}\mathrm{trace}\left((P_{\pi, \mu})^{\tau-m +1}\left(\Phi\Phi^\top ((1-\lambda)\diag(r_{m})+ \lambda \diag(f_{m}))+ \gamma\lambda\Phi\beta_{m-1}P_\pi\right)\right) \nonumber\\
	&\qquad\quad+ \mathrm{trace}\left((\gamma^2\lambda)^{\tau}(P_{\pi, \mu})^{\tau+1}\Phi \delta_{0}\right),\label{eq:boundontracedelta}
\end{align}
where $(i)$ follows from the fact that $\mathrm{trace}(AB) = \mathrm{trace}(BA)$ and $\mathrm{trace}(A+B) = \mathrm{trace}(A) + \mathrm{trace}(B)$.

Applying  \Cref{lemma:supportinglemma2} with $Q= P_{\mu, \pi}$, $C=\rho_{max}$, $P = P_\pi$ and $D = (1-\lambda)\diag(r_m)$, we have
\begin{align}
	\mathrm{trace}\left((P_{\pi, \mu})^{\tau-m+1}\Phi\Phi^\top (1-\lambda)\diag(r_{m})\right) &\le (1-\lambda)\rho_{max}^{\tau-m+1} B_\phi^2 \|r_m\|_1.\label{eq:middle1}
\end{align}
Applying  \Cref{lemma:supportinglemma2} with $Q= P_{\mu, \pi}$, $C=\rho_{max}$, $P = P_\pi$ and $D = \lambda \diag(f_m)$, we have
\begin{align}
	\mathrm{trace}\left((P_{\pi, \mu})^{\tau+1-m}\Phi\Phi^\top \lambda \diag(f_{m})\right) &\le \lambda\rho_{max}^{\tau+1-m} B_\phi^2 \| f_m\|_1\overset{(i)}\le \lambda\rho_{max}^{\tau+1-m} B_\phi^2 (1 + \|f\|_1), \label{eq:middle2}
\end{align}
where $(i)$ follows from \cref{eq:l1normonf}.

For the term $\mathrm{trace}\left((P_{\pi, \mu})^{\tau+1-m}\Phi\beta_{m-1} P_\pi\right)$, we have
\begin{align}
	&\mathrm{trace}\left((P_{\pi, \mu})^{\tau+1-m}\Phi\beta_{m-1} P_\pi\right)\nonumber\\
	&\quad\overset{(i)}= (\gamma\lambda)^{m-1} \mathrm{trace}\left(P_\pi^{m}(P_{\pi, \mu})^{\tau+1-m}\Phi\Phi^\top D_{\mu,0} \right) + \sum_{l=0}^{m-2} (\gamma\lambda)^{l} \left(\lambda\mathrm{trace}\left(P_\pi^{l+1}(P_{\pi, \mu})^{\tau+ 1 - m}\Phi\Phi^\top D_{\mu, {m-1-l}}\right) \right.\nonumber\\
	&\qquad\quad\left.+ (1-\lambda)\mathrm{trace}\left(P_\pi^{l+1}(P_{\pi, \mu})^{\tau-m+1}\Phi\Phi^\top F_{\pi, m-1-l}\right)\right)\nonumber\\
	&\quad\overset{(ii)}\le B_\phi^2\rho_{max}^{\tau+ 1-m}\left((\gamma\lambda)^{m-1}  \mathbf{1}^\top d_{\mu, 0} + \sum_{l=0}^{m-2} (\gamma\lambda)^{l} \left(\lambda\mathbf{1}^\top d_{\mu, m-1-l} + (1-\lambda)\mathbf{1}^\top f_{m-1-l}\right)\right) \nonumber\\
	&\quad\overset{(iii)}\le \frac{B_\phi^2\rho_{max}^{\tau+1-m}}{1-\gamma\lambda} \left(1 + (1-\lambda)\|f\|_1\right),\label{eq:middle4}
\end{align}
where $(i)$ follows from \cref{eq:theupperboundofbeta} and the facts that $\mathrm{trace}(A+B) = \mathrm{trace}(A) + \mathrm{trace}(B)$ and $\mathrm{trace}(AB) = \mathrm{trace}(BA)$, $(ii)$ follows from \Cref{lemma:supportinglemma2} with $Q = P_{\mu, \pi}$, $P= P_\pi$, and $D = F_{m- 1 -l}$ and $D_{\pi, m-1-l}$ respectively, and $(iii)$ follow from the \cref{eq:l1normonf}.

Recall $\delta_0 = \Phi^\top D_{\mu, 0}$. Applying \Cref{lemma:supportinglemma2} with $Q= P_{\mu, \pi}$ and $D =D_{\mu, 0}$ yields
\begin{align}
    \mathrm{trace}\left((\gamma^2\lambda)^{\tau}(P_{\pi, \mu})^{\tau+1}\Phi \delta_{0}\right)\le (\gamma^2\lambda)^{\tau}B_\phi^2\rho_{max}^{\tau + 1}.\label{eq:middle3}
\end{align}

Substituting \cref{eq:middle1,eq:middle2,eq:middle3,eq:middle4} into \cref{eq:boundontracedelta}, we have
\begin{align}
	&\mathrm{trace}\left(\Phi\delta_{\tau}P_{\mu,\pi}\right)\nonumber\\
	&\quad \le   (\gamma^2\lambda)^{\tau}B_\phi^2\rho_{max}^{\tau+1} + \sum_{m=1}^{\tau}B_\phi^2\rho_{max}^{\tau-m+1}\left(\lambda(\|f\|_1 + 1)+ \frac{1 + (1-\lambda)\|f\|_1}{1-\gamma\lambda} \right) + \sum_{m=1}^{\tau} (1-\lambda)\rho_{max}^{\tau-m+1} B_\phi^2 \|r_m\|_1.\label{eq:boundtrace2}
	\end{align}

Substituting \cref{eq:uboundofbeta,eq:boundtrace2} into \cref{eq:boundthevarianceofetdlambdamiddle2}, we have
\begin{align}
	\mathbf{1}^\top \Delta_b\nonumber&\le \lambda^{2b}\gamma^{2b}\rho_{max}^bB_\phi^2 + \left(1 + 2\gamma(1-\lambda)(1+\|f\|_1) +  2\gamma\lambda C_\beta\right)B_\phi^2 \sum_{\tau=1}^b\lambda^{2(b-\tau)}\gamma^{2(b-\tau)}\rho_{max}^{b-\tau}\nonumber\\
	& + \gamma^2(1-\lambda)^2\rho^{b+1}_{max}B_\phi^2\sum_{\tau=1}^b \lambda^{2b-2\tau}\gamma^{2b-2\tau}\rho_{max}^{-\tau}\|r_{\tau-1}\|_1 + 2\lambda^{b}\gamma^{2b}(1-\lambda)\rho_{max}^{b}B_\phi^2\sum_{\tau=1}^b \lambda^{b-\tau}\nonumber\\
	&\quad +2\gamma^2\lambda(1-\lambda)B_\phi^2\left(\lambda\|f\|_1  + 1 +\frac{1+(1-\lambda)\|f\|_1}{1-\gamma\lambda}\right)\rho_{max}^{b}\sum_{\tau=1}^b\gamma^{2b-2\tau}\lambda^{2b-2\tau}\sum_{m=1}^{\tau-1} \rho_{max}^{-m}\nonumber\\
	&\qquad +2\gamma^2\lambda(1-\lambda)^2\rho_{max}^bB_\phi^2\sum_{\tau=1}^b \gamma^{2b-2\tau}\lambda^{2b-2\tau} \sum_{m=1}^{\tau-1} \rho_{max}^{-m} \|r_m\|_1, \label{eq:middleresult4}
\end{align}
where we let $C_\beta \coloneqq \frac{B_\phi^2}{1-\gamma\lambda} \left(1 + (1-\lambda)\|f\|_1\right)$.

Under different conditions of $\rho_{max}$, the term $\mathbf{1}^\top \Delta_b$ and $\eptt\left[\left\|\widehat{\Tc}_t^\lambda(\theta_{t})\right\|_2^2\middle|\Fc_{t-1}\right]$ can be upper bounded differently as following:

($a$). $\gamma^2\rho_{max}<1$, substituting \cref{eq:boundonrtau3} into \cref{eq:middleresult4} yields
\begin{align}
\mathbf{1}^\top \Delta_b\nonumber&\le \lambda^{2b}\gamma^{2b}\rho_{max}^bB_\phi^2 + \left(1 + 2\gamma(1-\lambda)(1+\|f\|_1) +  2\gamma\lambda C_\beta\right)B_\phi^2 \sum_{\tau=1}^b\lambda^{2(b-\tau)}\gamma^{2(b-\tau)}\rho_{max}^{b-\tau}\nonumber\\
&\quad + \gamma^2(1-\lambda)^2\rho^{b+1}_{max}B_\phi^2\sum_{\tau=1}^b \lambda^{2b-2\tau}\gamma^{2b-2\tau}\rho_{max}^{-\tau}\left(\frac{3+2\gamma\|f\|_1}{1-\gamma^2} + 1\right)+ 2\lambda^{b}\gamma^{2b}(1-\lambda)\rho_{max}^{b}B_\phi^2\sum_{\tau=1}^b \lambda^{b-\tau}\nonumber\\
&\qquad +2\gamma^2\lambda(1-\lambda)B_\phi^2\left(\lambda\|f\|_1  + 1 +\frac{1+(1-\lambda)\|f\|_1}{1-\gamma\lambda}\right)\rho_{max}^{b} \cdot\sum_{\tau=1}^b\gamma^{2b-2\tau}\lambda^{2b-2\tau}\sum_{m=1}^{\tau -1 } \rho_{max}^{-m}\nonumber\\
&\qquad\qquad +2\gamma^2\lambda(1-\lambda)^2\rho_{max}^bB_\phi^2\sum_{\tau=1}^b \gamma^{2b-2\tau}\lambda^{2b-2\tau} \sum_{m=1}^{\tau-1} \rho_{max}^{-m} \left(\frac{3+2\gamma\|f\|_1}{1-\gamma^2} + 1\right)\nonumber\\
&\le \lambda^{2b}\gamma^{2b}\rho_{max}^bB_\phi^2 + \frac{\left(1 + 2\gamma(1-\lambda)(1+\|f\|_1) +  2\gamma\lambda C_\beta\right)B_\phi^2}{1-\gamma^2\rho_{max}\lambda}\nonumber\\
&\quad \qquad+ \frac{\gamma^2(1-\lambda)^2\rho_{max}B_\phi^2}{1-\gamma^2\rho_{max}\lambda}\left(\frac{3+2\gamma\|f\|_1}{1-\gamma^2} + 1\right)+ 2\lambda^{2b}\gamma^{b}\rho_{max}^{b}B_\phi^2\nonumber\\
&\qquad \qquad+\frac{2\gamma^2\lambda(1-\lambda)B_\phi^2}{(1-\gamma^2\lambda^2)(1- \rho_{max}^{-1})}\left(\lambda\|f\|_1  + 1 +\frac{1+(1-\lambda)\|f\|_1}{1-\gamma\lambda}\right)\rho_{max}^{b+1}\nonumber\\
&\qquad\qquad\qquad +\frac{2\gamma^2\lambda(1-\lambda)^2B_\phi^2}{(1-\gamma^2\lambda^2)(1- \rho_{max}^{-1})}\left(\frac{3+2\gamma\|f\|_1}{1-\gamma^2} + 1\right)\rho_{max}^{b+1},\label{eq:middleresult5}
\end{align}
where the last two terms of the above inequality are of the order $\mathcal{O}\left(\rho_{max}^b\right)$. Therefore, we have $\mathbf{1}^\top\Delta_b \le C \rho_{max}^b$ for some $C>0$. Substituting \cref{eq:middleresult5} into \cref{eq:boundsonetdlambdavariance} yields
\begin{align*}
	\eptt\left[\left\|\widehat{\Tc}_t^\lambda(\theta_{t})\right\|_2^2\middle|\Fc_{t-1}\right]&\le C_{\sigma,\lambda,1} \rho_{max}^b,
\end{align*}
where $C_{\sigma,\lambda,1}>0$ is a constant and is determined by \cref{eq:middleresult5}.

($b$). $\gamma^2\rho_{max} = 1$, substituting \cref{eq:boundonrtau2} into \cref{eq:middleresult4} yields
\begin{align}
	\mathbf{1}^\top \Delta_b\nonumber&\le \lambda^{2b}B_\phi^2 + \left(1 + 2\gamma(1-\lambda)(1+\|f\|_1) +  2\gamma\lambda C_\beta\right)B_\phi^2 \sum_{\tau=1}^b\lambda^{2(b-\tau)}\nonumber\\
	&\quad + \gamma^2(1-\lambda)^2\rho_{max}B_\phi^2\sum_{\tau=1}^b \lambda^{2b-2\tau}\left(4 + 2\gamma\|f\|_1\right)\tau + 2\lambda^{b}(1-\lambda)B_\phi^2\sum_{\tau=1}^b \lambda^{b-\tau}\nonumber\\
	&\quad +2\gamma^2\lambda(1-\lambda)B_\phi^2\left(\lambda\|f\|_1  + 1 +\frac{1+(1-\lambda)\|f\|_1}{1-\gamma\lambda}\right)\rho_{max}^{b}\sum_{\tau=1}^b\gamma^{2b-2\tau}\lambda^{2b-2\tau}\sum_{m=1}^\tau \rho_{max}^{-m}\nonumber\\
	&\qquad +2\gamma^2\lambda(1-\lambda)^2\left(4 + 2\gamma\|f\|_1\right)B_\phi^2\sum_{\tau=1}^b\lambda^{2b-2\tau} \sum_{m=1}^\tau \rho_{max}^{\tau-m} m \nonumber\\
	&\le \lambda^{2b}B_\phi^2 + \frac{ \left(1 + 2\gamma(1-\lambda)(1+\|f\|_1) +  2\gamma\lambda C_\beta\right)B_\phi^2}{1-\lambda^2}+ 2\lambda^{b}B_\phi^2\nonumber\\
	&\qquad+ \left(\frac{\gamma^2(1-\lambda)^2\rho_{max}B_\phi^2}{1-\lambda^2}\left(4 + 2\gamma\|f\|_1\right) +\frac{2\gamma^2\lambda(1-\lambda)^2\left(4 + 2\gamma\|f\|_1\right)B_\phi^2}{(1-\lambda^2)(1-\rho_{max}^{-1})}\right) b\nonumber\\
	&\qquad \qquad+\frac{2\gamma^2\lambda(1-\lambda)B_\phi^2}{(1-\gamma^2\lambda^2)(1-\rho_{max}^{-1})}\left(\lambda\|f\|_1  + 1 +\frac{1+(1-\lambda)\|f\|_1}{1-\gamma\lambda}\right)\rho_{max}^{b+1}, \label{eq:middleresult6}
\end{align}
where the last term of the above inequality is of the order $\mathcal{O}\left(\rho_{max}^b\right)$. Therefore, we have $\mathbf{1}^\top\Delta_b \le C \rho_{max}^b$ for some $C>0$. Substituting \cref{eq:middleresult6} into \cref{eq:boundsonetdlambdavariance} yields
\begin{align*}
	\eptt\left[\left\|\widehat{\Tc}_t^\lambda(\theta_{t})\right\|_2^2\middle|\Fc_{t-1}\right]&\le C_{\sigma,\lambda,2} \rho_{max}^b,
\end{align*}
where $C_{\sigma,\lambda,2}>0$ is a constant and is determined by \cref{eq:middleresult6}.

($c$). $\gamma^2\rho_{max} > 1$, substituting \cref{eq:boundonrtau} into \cref{eq:middleresult4} yields
\begin{align}
	\mathbf{1}^\top \Delta_b\nonumber&\le \lambda^{2b}\gamma^{2b}\rho_{max}^bB_\phi^2 + \left(1 + 2\gamma(1-\lambda)(1+\|f\|_1) +  2\gamma\lambda C_\beta\right)B_\phi^2 \sum_{\tau=1}^b\lambda^{2(b-\tau)}\gamma^{2(b-\tau)}\rho_{max}^{b-\tau}\nonumber\\
	  &\quad+\gamma^{2b+2}(1-\lambda)^2\rho^{b+1}_{max}B_\phi^2\left(\frac{3 + 2\gamma\|f\|_1}{\gamma^2 \rho_{max} -1 } + 1\right) \sum_{\tau=1}^b \lambda^{2b-2\tau} + 2\lambda^{b}\gamma^{2b}(1-\lambda)\rho_{max}^{b}B_\phi^2\sum_{\tau=1}^b \lambda^{b-\tau}\nonumber\\
	&\qquad +2\gamma^2\lambda(1-\lambda)B_\phi^2\left(\lambda\|f\|_1  + 1 +\frac{1+(1-\lambda)\|f\|_1}{1-\gamma\lambda}\right)\rho_{max}^{b}\sum_{\tau=1}^b\gamma^{2b-2\tau}\lambda^{2b-2\tau}\sum_{m=1}^{\tau-1}\rho_{max}^{-m} \nonumber\\
	&\qquad\quad +2\gamma^2\lambda(1-\lambda)^2\rho_{max}^bB_\phi^2\left(\frac{3 + 2\gamma\|f\|_1}{\gamma^2 \rho_{max} -1 } + 1\right)\sum_{\tau=1}^b \gamma^{2b-2\tau}\lambda^{2b-2\tau} \sum_{m=1}^{\tau-1} \gamma^{2m}\nonumber\\
	&\le \lambda^{2b}\gamma^{2b}\rho_{max}^bB_\phi^2 + \frac{\left(1 + 2\gamma(1-\lambda)(1+\|f\|_1) +  2\gamma\lambda C_\beta\right)B_\phi^2}{\gamma^2\rho_{max} - 1}\cdot\gamma^{2b}\rho_{max}^b\nonumber\\
	&\quad + \frac{\gamma^{2b+2}(1-\lambda)^2\rho^{b+1}_{max}B_\phi^2}{1-\lambda^2}\left(\frac{3 + 2\gamma\|f\|_1}{\gamma^2 \rho_{max} -1 } + 1\right) + 2\lambda^{b}\gamma^{2b}\rho_{max}^{b}B_\phi^2\nonumber\\
	&\qquad +\frac{2\gamma^2\lambda(1-\lambda)B_\phi^2}{(1-\gamma^2\lambda^2)(1-\rho_{max}^{-1})}\left(\lambda\|f\|_1  + 1 +\frac{1+(1-\lambda)\|f\|_1}{1-\gamma\lambda}\right)\rho_{max}^{b+1} \nonumber\\
	&\qquad +\frac{2\gamma^2\lambda(1-\lambda)^2}{(1-\gamma^2)(1-\gamma^2\lambda^2)}B_\phi^2\left(\frac{3 + 2\gamma\|f\|_1}{\gamma^2 \rho_{max} -1 } + 1\right)\rho_{max}^b, \label{eq:middleresult7}
\end{align}
where the last term of the above inequality is of the order $\mathcal{O}\left(\rho_{max}^b\right)$. Therefore, we have $\mathbf{1}^\top\Delta_b \le C \rho_{max}^b$ for some $C>0$. Substituting \cref{eq:middleresult7} into \cref{eq:boundsonetdlambdavariance} yields
\begin{align*}
	\eptt\left[\left\|\widehat{\Tc}_t^\lambda(\theta_{t})\right\|_2^2\middle|\Fc_{t-1}\right]&\le C_{\sigma,\lambda,3} \rho_{max}^b,
\end{align*}
where $C_{\sigma,\lambda,3}>0$ is a constant and is determined by \cref{eq:middleresult7}.

To summarize, the variance term $\eptt\left[\left\|\widehat{\Tc}_t(\theta_t)\right\|_2^2\middle| \Fc_{t-1}\right]$ can be bounded by $\sigma_\lambda^2 =\mathcal{O}(\rho^b_{max})$.

\subsection{Proof of \Cref{thm:etdlambda}}\label{app:proofth2}
{\color{black}
\begin{theorem}[Formal Statement of \Cref{thm:etdlambda}]
Suppose \Cref{assumption:behaviorpolicy,assumption:markovchain} hold. Consider PER-ETD($\lambda$) specified in \Cref{alg:betdlambda}. Let the stepsize $\eta_t = \tfrac{2}{\mu_{\lambda}(t+t_\lambda)}$, $t_\lambda = \tfrac{8 L_\lambda^2}{\mu_{\lambda}^2}$, where $\mu_\lambda$ is defined in \Cref{lemma:stronglyconvex,} and $L_\lambda$ is defined in \Cref{lemma:lipschitz} in \Cref{sec:supportinglemmas}. Further let $$b = \max \left\{\left\lceil\tfrac{\log(\mu_\lambda) - \log(5C_{b,\lambda}B_\phi)}{\log(\xi)}\right\rceil, \tfrac{\log(T)}{\log(\rho_{max})  + \log(1/\xi)}\right\},$$ where $C_{b,\lambda}$ is a constant defined in the proof of \Cref{prop:etdlambdabias} in \Cref{sec:prop3}, $B_\phi \coloneqq\max_{s\in\Sc} \|\phi(s)\|_2$, and $\xi\coloneqq\max\{\gamma, \chi\}$. Let the projection set $\Theta = \left\{\theta\in \mathbb{R}^d: \|\theta\|_2\le B_\theta\right\}$, where $B_{\theta}=\tfrac{\|\Phi^\top\|_2r_{max}}{(1-\gamma)\mu_\lambda}$ (which implies $\theta^*_\lambda\in\Theta$).
Then the output $\theta_T$ of PER-ETD($\lambda$) satisfies
\begin{align*}
\eptt\left[\|\theta_T - \theta_\lambda^*\|_2^2\right] \le \mathcal{O}\left(\frac{1}{T^{a_{\lambda}}}\right),
\end{align*}
where $a_{\lambda} =\tfrac{1}{{\log_{1/\xi}(\rho_{max})+1}}$.  Further, PER-ETD($\lambda$) attains an $\epsilon$-accurate solution with $\tilde{\mathcal{O}}\left(\frac{1}{\epsilon^{1/a_{\lambda}}}\right)$ samples.
\end{theorem}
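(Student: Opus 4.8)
The plan is to run the same finite-time argument used for \Cref{thm:etd0}, now feeding the recursion with the bias bound of \Cref{prop:etdlambdabias}, the variance bound $\sigma_\lambda^2=\mathcal{O}(\rho_{max}^b)$ of \Cref{prop:varianceofetdlambda}, and the structural facts about $\Tc^\lambda$ from \Cref{lemma:stronglyconvex,lemma:lipschitz}. First I would observe that the $\theta$-update in \Cref{alg:betdlambda} is the closed-form maximizer over $\Theta$ of $\eta_t\langle\widehat{\Tc}_t^\lambda(\theta_t),\theta\rangle+\tfrac12\|\theta-\theta_t\|_2^2$, so the three-point lemma \Cref{lemma:threepointlemma} applied with $\theta=\theta_\lambda^*$ yields
\[
\eta_t\langle\widehat{\Tc}_t^\lambda(\theta_t),\theta_{t+1}-\theta_\lambda^*\rangle+\tfrac12\|\theta_t-\theta_{t+1}\|_2^2\le\tfrac12\|\theta_t-\theta_\lambda^*\|_2^2-\tfrac12\|\theta_{t+1}-\theta_\lambda^*\|_2^2 .
\]
I would then split $\widehat{\Tc}_t^\lambda(\theta_t)=\Tc^\lambda(\theta_{t+1})+\bigl(\Tc^\lambda(\theta_t)-\Tc^\lambda(\theta_{t+1})\bigr)+\bigl(\widehat{\Tc}_t^\lambda(\theta_t)-\Tc^\lambda(\theta_t)\bigr)$, bound the middle term by $L_\lambda\|\theta_t-\theta_{t+1}\|_2$ via \Cref{lemma:lipschitz}, split the last (bias) term over $\theta_{t+1}-\theta_t$ and $\theta_t-\theta_\lambda^*$, and apply Young's inequality to the cross terms so as to absorb $\|\theta_t-\theta_{t+1}\|_2^2$.

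Next I would take the conditional expectation given $\Fc_{t-1}$, apply the generalized monotonicity $\langle\Tc^\lambda(\theta_{t+1}),\theta_{t+1}-\theta_\lambda^*\rangle\ge\mu_\lambda\|\theta_{t+1}-\theta_\lambda^*\|_2^2$, and plug in \Cref{prop:etdlambdabias} (valid since the projection forces $\|\theta_t\|_2\le B_\theta$, and the choice $B_\theta=\|\Phi^\top\|_2 r_{max}/((1-\gamma)\mu_\lambda)$ guarantees $\theta_\lambda^*\in\Theta$) together with \Cref{prop:varianceofetdlambda}. This produces a one-step inequality of the form
\[
\Bigl(\tfrac12+\mu_\lambda\eta_t-\eta_t^2L_\lambda^2\Bigr)\eptt\left[\|\theta_{t+1}-\theta_\lambda^*\|_2^2\right]\le\Bigl(\tfrac12+C_{b,\lambda}B_\phi\xi^b\eta_t+8C_{b,\lambda}^2B_\phi^2\xi^{2b}\eta_t^2\Bigr)\eptt\left[\|\theta_t-\theta_\lambda^*\|_2^2\right]+2\sigma_\lambda^2\eta_t^2+\mathcal{O}(\xi^{2b}\eta_t^2)+\mathcal{O}(\xi^b\eta_t),
\]
where the linear-in-$\xi^b$ term carries $\epsilon_{approx}$. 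With $\eta_t=\tfrac{2}{\mu_\lambda(t+t_\lambda)}$, $t_\lambda=8L_\lambda^2/\mu_\lambda^2$, and $b$ large enough that $C_{b,\lambda}B_\phi\xi^b\le\mu_\lambda/5$ (which is exactly the first branch of the stated $b$), the coefficient bookkeeping is identical to that verified inside the proof of \Cref{thm:etd0}: multiplying by $\alpha_t=(t+t_\lambda+1)(t+t_\lambda+2)$ and telescoping over $t=0,\dots,T-1$ causes the intermediate $\eptt[\|\theta_t-\theta_\lambda^*\|_2^2]$ terms to cancel, and using $\sum_t\alpha_t\eta_t^2=\mathcal{O}(T)$ and $\sum_t\alpha_t\eta_t=\mathcal{O}(T^2)$ leaves
\[
\eptt\left[\|\theta_T-\theta_\lambda^*\|_2^2\right]\le\mathcal{O}\!\Bigl(\tfrac{\|\theta_0-\theta_\lambda^*\|_2^2}{T^2}\Bigr)+\mathcal{O}\!\Bigl(\tfrac{\sigma_\lambda^2}{T}\Bigr)+\mathcal{O}\!\Bigl(\tfrac{\xi^{2b}}{T}\Bigr)+\mathcal{O}(\xi^b),
\]
which is \cref{eq:convlambda}.

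The last step is the bias--variance balance. Substituting $\sigma_\lambda^2=\mathcal{O}(\rho_{max}^b)$, the dominant contributions are $\rho_{max}^b/T$ and $\xi^b$, and choosing $b=\tfrac{\log T}{\log\rho_{max}+\log(1/\xi)}$ (the second branch, which is the binding one once it exceeds the threshold making $C_{b,\lambda}B_\phi\xi^b\le\mu_\lambda/5$) makes $\rho_{max}^b=T^{1-a_\lambda}$ and $\xi^b=T^{-a_\lambda}$ with $a_\lambda=1/(\log_{1/\xi}\rho_{max}+1)<1$, so both equal $\mathcal{O}(T^{-a_\lambda})$ and hence $\eptt[\|\theta_T-\theta_\lambda^*\|_2^2]=\mathcal{O}(T^{-a_\lambda})$. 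Since each outer iteration consumes $b+1=\mathcal{O}(\log T)$ samples, attaining accuracy $\epsilon$ requires $T\cdot(b+1)=\tilde{\mathcal{O}}(T)=\tilde{\mathcal{O}}(\epsilon^{-1/a_\lambda})$ samples. The main obstacle is not this outer recursion --- it is essentially a transcription of \Cref{thm:etd0} --- but rather ensuring that the constants $C_{b,\lambda}$ and the leading factor of $\sigma_\lambda^2$ are genuinely independent of $b$ up to the stated polynomial/exponential scalings; that work lives in \Cref{prop:etdlambdabias,prop:varianceofetdlambda}, whose proofs hinge on the delicate coupled recursions for the auxiliary quantities $\beta_\tau$, $f_\tau$, $r_\tau$, $\Delta_\tau$ and $\delta_\tau$. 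The one genuinely new feature relative to PER-ETD(0) is that the eligible trace $e_t^b$ inflates the variance to $\rho_{max}^b$ even when $\gamma^2\rho_{max}\le1$, so there is no fast-rate ($\tilde{\mathcal{O}}(1/T)$) regime and the exponent is always $a_\lambda<1$.
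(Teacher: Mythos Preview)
Your proposal is correct and follows essentially the same approach as the paper: the paper's proof simply states that one repeats the argument of \Cref{thm:etd0} with $\widehat{\Tc}_t,\Tc,t_0,L_0,\mu_0,C_b,\sigma^2,\theta^*$ replaced by $\widehat{\Tc}^\lambda_t,\Tc^\lambda,t_\lambda,L_\lambda,\mu_\lambda,C_{b,\lambda},\sigma_\lambda^2,\theta^*_\lambda$, invokes \Cref{prop:etdlambdabias,prop:varianceofetdlambda} at the analogue of \cref{eq:applyingprop}, arrives at the decomposition \cref{eq:convlambda}, and then substitutes $\sigma_\lambda^2=\mathcal{O}(\rho_{max}^b)$ and the prescribed $b$ to obtain $\mathcal{O}(T^{-a_\lambda})$. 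Your write-up spells out these steps in slightly more detail (the three-point lemma, the splitting and Young's inequality, the $\alpha_t$ telescoping, and the final bias--variance balance) but the logic and the resulting bounds are identical.
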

}
\begin{proof}
The proof follows the same steps as \Cref{thm:etd0} by replacing the terms $\widehat{\Tc}_t$, $\Tc$, $t_0$, $L_0$, $\mu_0$, $C_b$, $\sigma^2$ and $\theta^*$ with $\widehat{\Tc}^\lambda_t$, $\Tc^\lambda$, $t_\lambda$, $L_\lambda$, $\mu_\lambda$, $C_{b,\lambda}$, $\sigma_{\lambda}^2$ and $\theta^*_\lambda$ respectively. Specifically, \Cref{prop:etdlambdabias,prop:varianceofetdlambda} are applied to bound the bias and variance over the steps similarly to \cref{eq:applyingprop}.  We then have the convergence as follows.
\begin{align}
	\eptt\left[\|\theta_T - \theta_\lambda^*\|_2^2\right]&\le \mathcal{O}\left(\frac{\|\theta_0 - \theta_\lambda^*\|_2^2}{T^2}\right) +{ \mathcal{O}\left(\frac{\sigma^2_\lambda}{T}\right)} +{ \mathcal{O}\left(\frac{\xi^{2b}}{T}\right) + \mathcal{O}\left(\xi^{b}\right)}\nonumber\\
	&= \mathcal{O}\left(\frac{\|\theta_0 - \theta_\lambda^*\|_2^2}{T^2}\right) +{ \mathcal{O}\left(\frac{\rho_{max}^b}{T}\right)} +{ \mathcal{O}\left(\frac{\xi^{2b}}{T}\right) + \mathcal{O}\left(\xi^{b}\right)}. \label{eq:convergetdlambda}
\end{align}
We further specify $b =  \left\{\left\lceil\tfrac{\log(\mu_0) - \log(5C_bB_\phi)}{\log(\xi)}\right\rceil, \tfrac{\log(T)}{\log(\rho_{max})  + \log(1/\xi)}\right\}$. Then \Cref{eq:convergetdlambda} yields
\begin{align}
	\eptt\left[\|\theta_T - \theta_\lambda^*\|_2^2\right]&\le \mathcal{O}\left(\frac{\|\theta_0 - \theta_\lambda^*\|_2^2}{T^2}\right) +{ \mathcal{O}\left(\frac{T^{1-a_\lambda}}{T}\right)} +{ \mathcal{O}\left(\frac{1}{T^{1+a_{\lambda}}}\right) + \mathcal{O}\left(\frac{1}{T^{a_\lambda}}\right)} =\mathcal{O}\left(\frac{1}{T^{a_\lambda}}\right). \nonumber
\end{align}
\end{proof}

\end{document}